\newcommand{\ep}{\mathbb{E}}
\newcommand{\E}{{\cal E}}
\newcommand{\wijbkwn}{{W(\phi_{ij}, \beta_k; w_n)}}
\newtheorem{theorem}{Theorem}
\numberwithin{equation}{section}
\theoremstyle{plain} 
\newtheorem{lemma}[theorem]{Lemma}
\newcommand{\prb}{\mathbb{P}}
\begin{document}

\title{Community Detection in Networks with Node Features}
\author{Yuan Zhang, Elizaveta Levina, Ji Zhu}
\maketitle

\begin{abstract}
	Many methods have been proposed for community detection in networks, but most of them do not take into account additional information on the nodes that is often available in practice. In this paper, we propose a new joint community detection criterion that uses both the network edge information and the node features to detect community structures.  One advantage our method has over existing joint detection approaches is the flexibility of learning the impact of different features which may differ across communities.  Another advantage is the flexibility of choosing the amount of influence the feature information has on communities.  The method is asymptotically consistent under the block model with additional assumptions on the feature distributions, and performs well on simulated and real networks.   
\end{abstract}

Community detection is a fundamental problem in network analysis, extensively studied in a number of domains -- see  \cite{korean1} and \cite{schlitt2007current} for some examples of applications.  A number of approaches to community detection are based on probabilistic models for networks with communities, such as the stochastic block model \cite{Holland83}, the degree-corrected stochastic block model \cite{Karrer10}, and the latent factor model \cite{Hoff2007}.  Other approaches work by optimizing a criterion measuring the strength of community structure in some sense, often through spectral approximations.  Examples include normalized cuts \cite{shi00},  modularity \cite{Newman&Girvan2004, NewmanPNAS}, and many variants of spectral clustering, e.g.,  \cite{qin2013regularized}.

Many of the existing methods detect communities based only on the network adjacency matrix.  However, we often have additional information on the nodes (node features), and sometimes edges as well, for example, \cite{steglich2006applying}, \cite{snijders2006new} and \cite{hummon1990analyzing}.  In many networks the distribution of node features is correlated with community structure \cite{Jure_ego}, and thus a natural question is whether we can improve community detection by using the node features.   Several generative models for jointly modeling the edges and the features have been proposed, including the network random effects model \cite{hoff2003random}, the embedding feature model \cite{Zanghi:2010:CBR:1774858.1774960}, the latent variable model \cite{RSSA:RSSA471}, the discriminative approach \cite{Yang:2009:CLC:1557019.1557120}, the latent multi-group membership graph model \cite{Jure_multiple_attr}, the social circles model for ego networks \cite{Jure_ego}, the communities from edge structure and node attributes (CESNA) model \cite{yang2013community}, the Bayesian Graph Clustering (BAGC) model \cite{xu2012model} and the topical communities and personal interest (TCPI) model \cite{hoang2014joint}.   Most of these models are designed for specific feature types, and their effectiveness depends heavily on the correctness of model specification.  Model-free approaches include weighted combinations of the network and feature similarities \cite{viennet2012community, binkiewicz2014covariate}, attribute-structure mining \cite{Silva:2012:MAC:2140436.2140443}, simulated annealing clustering \cite{Cheng:2011:CLA:1921632.1921638}, and compressive information flow \cite{smith2014partitioning}.  Most methods in this category use all the features in the same way without determining which ones influence the community structure and which do not, and lack flexibility in how to balance the network information with the information coming from its node features, which do not always agree.   Including irrelevant node features can only hurt community detection by adding in noise, while selecting features that by themselves cluster strongly may not correspond to features that correlate with the community structure present in the adjacency matrix.

In this paper, we propose a new joint community detection criterion that uses both the network adjacency matrix and the node features. The idea is that by properly weighing edges according to feature similarities on their end nodes, we strengthen the community structure in the network thus making it easier to detect.   Rather than using all available features in the same way,  we learn which features are most helpful in identifying the community structure from data.   Intuitively,  our method looks for an agreement between clusters suggested by two data sources, the adjacency matrix and the node features.  Numerical experiments on simulated and real networks show that our method performs well compared to methods that use either the network alone or the features alone for clustering, as well as to a number of benchmark joint detection methods.

\section{The joint community detection criterion}
Our method is designed to look for assortative community structure, that is, the type of communities where nodes are more likely to connect to each other if they belong to the same community, and thus there are more edges within communities than between.  This is a very common intuitive definition of communities which is incorporated in many community detection  criteria, for example, modularity \cite{NewmanPNAS}.   Our goal is to use such a community detection criterion based on the adjacency matrix alone, and add feature-based edge weights to improve detection.   Several criteria using the adjacency matrix alone are available, but having a simple criterion linear in the adjacency matrix makes optimization much more feasible in our particular situation, and we propose a new criterion which turns out to work particularly well for our purposes.   Let $A$ denote the adjacency matrix with $A_{ij}=0$ if there is no edge between nodes $i$ and $j$,  and otherwise $A_{ij} > 0$  which can be either 1 for unweighted networks or the edge weight for weighted networks.    The community detection criterion we start from is a very simple analogue of modularity, to be maximized over all possible label assignments $e$: 
\begin{equation}
  R(e)=\sum_{k=1}^K \frac{1}{|\E_k|^\alpha}\sum_{i, j\in \E_k}A_{ij} \ . 
     \label{marginal_criterion}
\end{equation}
Here $e$ is the vector of node labels, with $e_i = k$ if node $i$ belongs to community $k$, for $k = 1, \dots, K$,  $\E_k = \{i: e_i = k\}$, and $|\E_k|$ is the number of nodes in community $k$.  We assume each node belongs to exactly one community, and the number of communities $K$ is fixed and known.  Rescaling by $|\E_k|^\alpha$ is designed to rule out trivial solutions that put all nodes in the same community, and $\alpha > 0$ is a tuning parameter.  When $\alpha=2$, the criterion is approximately the sum of edge densities within communities, and when $\alpha=1$, the criterion is the sum of average ``within community'' degrees, which both intuitively represent community structure.   This criterion can be shown to be consistent under the stochastic block model by checking the conditions of the general theorem in \cite{Bickel&Chen2009}.  

The ideal use of features with this criterion would be to use them to up-weigh edges within communities and down-weigh edges between them, thus enhancing the community structure in the observed network and making it easier to detect.  However, node features may not be perfectly correlated with community structure, different communities may be driven by different features, as pointed out by \cite{Jure_ego}, and features themselves may be noisy.  Thus we need to learn the impact of different features on communities as well as balance the roles of the network itself and its features.   Let $f_i$ denote the $p$-dimensional feature vector of node $i$.   We propose a \emph{joint community detection criterion} (JCDC), 
\begin{equation}
R(e, \beta; w_n) =\sum_{k=1}^K\frac{1}{|\E_k|^\alpha}\sum_{i,j\in \E_k}A_{ij}W(f_i, f_j, \beta_k; w_n) \label{JCDC}
\end{equation}
where $\alpha$ is a tuning parameter as in \eqref{marginal_criterion}, $\beta_k \in \mathbb{R}^p$ is the coefficient vector that defines the impact of different features on the $k$th community, and $\beta:=\{\beta_1,\ldots, \beta_K\}$.   The criterion is then maximized over both $e$ and $\beta$.   Having a different $\beta_k$ for each $k$ allows us to learn the roles different features may play in different communities.  The balance between the information from $A$ and $F:=\{f_1,\ldots,f_n\}$ is controlled by $w_n$, another tuning parameter which in general may depend on $n$. 

For the sake of simplicity, we model the edge weight $W(f_i, f_j, \beta_k; w_n)$ as a function of the node features $f_i$ and $f_j$ via a $p$-dimensional vector of their similarity measures $\phi_{ij} = \phi(f_i, f_j)$. The choice of similarity measures in $\phi$ depends on the type of $f_i$ (for example, on whether the features are numerical or categorical) and is determined on a case by case basis;  the only important property is that $\phi$ assigns higher values to features that are more similar.  Note that this trivially allows the inclusion of edge features  as well as node features, as long as they are converted to some sort of similarity.     To eliminate potential differences in units and scales, we standardize all $\phi_{ij}$ along each feature dimension.   Finally, the function $W$ should be increasing in $\langle \phi_{ij}, \beta \rangle$, which can be viewed as the ``overall similarity'' between nodes, and for optimization purposes it is convenient to take $W$ to be concave.  Here we use the exponential function, 
\begin{equation}
w_{ijk} = W(f_i, f_j, \beta_k; w_n) = 
w_n-e^{-\langle \phi_{ij},\beta_k \rangle}
\label{weight_function}
\end{equation}
One can use other functions of similar shapes, for example, the logit exponential function, which we found empirically to perform similarly.

\section{Estimation}
\label{sec:optimization}
The joint community detection criterion needs to be optimized over both the community assignments $e$ and the feature parameters  $\beta$.   Using block coordinate descent, we optimize JCDC by alternately optimizing over the labels with fixed parameters and over the parameters with fixed labels, and iterating until convergence. 

\subsection{Optimizing over label assignments with fixed weights}
\label{sec:fixedbeta}
When parameters $\beta$ are fixed,  all edge weights $w_{ijk}$'s can be treated as known constants.  It is infeasible to search over all $n^K$ possible label assignments, and, like many other community detection methods, we rely on a greedy label switching algorithm to optimize over $e$, specifically, the tabu search \cite{Glover:tabu}, which updates the label of one node at a time.   Since our criterion involves the number of nodes in each community $|\E_k|$, no easy spectral approximations are available.   Fortunately, our method allows for a simple local approximate update which does not require recalculating the entire criterion.  For a given node $i$ considered for label switching, the algorithm will assign it to community $k$ rather than $l$ if  
	\begin{align}
		\frac{S_{kk}+2S_{i\leftrightarrow k}}{(|\E_k|+1)^\alpha} + \frac{S_{ll}}{|\E_l|^\alpha}  & >  \frac{S_{kk}}{|\E_k|^\alpha} + \frac{S_{ll}+2S_{i\leftrightarrow l}}{(|\E_l|+1)^\alpha} \ ,  \label{k_vs_l_raw}
	\end{align}
where $S_{kk}$ is twice the total edge weights in community $k$, and $S_{i\leftrightarrow k}$ is the sum of edge weights between node $i$ and all the nodes in $\E_k$.  When $|\E_k|$ and $|\E_l|$ are large, we can ignore $+1$ in the denominators, and \eqref{k_vs_l_raw} becomes 
\begin{equation}
		\frac{S_{i \leftrightarrow k}}{|\E_k|}\cdot \frac{|\E_k|^{1-\alpha}}{|\E_l|^{1-\alpha}} > \frac{S_{i \leftrightarrow l}}{|\E_l|} \ , 
\label{alpha_and_community_size}
\end{equation}
which allows for a ``local'' update for the label of node $i$ without calculating the entire criterion.  This also highlights the impact of the tuning parameter $\alpha$:  when $\alpha=1$, the two sides of \eqref{alpha_and_community_size} can be viewed as averaged weights of all edges connecting node $i$ to communities $\E_k$ and $\E_l$, respectively.  Then our method assigns node $i$ to the community with which it has the strongest connection.    When $\alpha \neq 1$, the left hand side of \eqref{alpha_and_community_size} is multiplied by a factor $(|\E_k|/|\E_l|)^{1-\alpha}$.  Suppose $|\E_k|$ is larger than $|\E_l|$;  then choosing $0 < \alpha < 1$ indicates a preference for assigning a node to the larger community, while  $\alpha>1$ favors smaller communities.  A detailed numerical investigation of the role of $\alpha$ is provided in the Supplemental Material.   

 The edge weights involved in \eqref{alpha_and_community_size} depend on the tuning parameter $w_n$. When $\beta=0$, all weights are equal to $w_n-1$.  On the other hand, $w_{ijk} \le w_n$ for all values of $\beta$. Therefore, $w_n/(w_n-1)$ is the maximum amount by which our method can  reweigh an edge. When $w_n$ is large, $w_n/(w_n-1)\approx 1$, and thus the information from the network structure dominates. When $w_n$ is close to $1$, the ratio is large and the feature-driven edge weights have a large  impact.    See the Supplemental Material for more details on the choice of $w_n$.  

While the tuning parameter $w_n$ controls the amount of influence features can have on community detection, it does not affect the estimated parameters $\beta$ for a fixed community assignment.  This is easy to see from rearranging terms in \eqref{JCDC}: 
\begin{equation}
R(e, \beta; w_n) = w_n \sum_{k=1}^K\frac{1}{|\E_k|^\alpha}\sum_{i, j\in \E_k}A_{ij} - g(e, A, \beta, \phi) 
\label{break_JCDC_to_two_terms}
\end{equation}
where the function $g$ does not depend on $w_n$.  Note that the term containing $w_n$ does not depend on $\beta$.

\subsection{Optimizing over weights with fixed label assignments}

Since we chose a concave edge weight function \eqref{weight_function}, for a given community assignment $e$ the joint criterion is a concave function of $\beta_k$, and it is straightforward to optimize over  $\beta_k$ by gradient ascent.    The role of $\beta_k$ is to control the impact of different features on each community.     One can show by a Taylor-series type expansion around the maximum (details omitted) and also observe empirically that for our method, the estimated $\hat{\beta}_k$'s are correlated with the feature similarities between nodes in community $k$.  
In other words, our method tends to produce a large estimated $\hat{\beta}_k^{(\ell)}$ for a feature with high similarity values $\phi_{ij}^{(\ell)}$'s for $i, j \in \E_k$.  However, in the extreme case, the optimal $\hat{\beta}_{k}^{(\ell)}$ can be $+ \infty$ if all $\phi_{ij}^{(\ell)}$'s are positive in community $k$ or $- \infty$ if all $\phi_{ij}^{(\ell)}$'s are negative (recall that similarities are standardized, so this cannot happen in all communities).   To avoid these extreme solutions, we subtract a penalty term $\lambda\|\beta\|_1$ from the criterion \eqref{JCDC} while optimizing over $\beta$.    We use a very small value of $\lambda$ ($\lambda =10^{-5}$ everywhere in the paper) which safeguards against numerically unstable solutions but has very little effect on other estimated coefficients.  


\section{Consistency}
The proposed JCDC criterion \eqref{JCDC} is not model-based, but under certain models it is asymptotically consistent. We consider the setting where the network $A$ and the features $F$ are generated independently from a stochastic block model and a uniformly bounded distribution, respectively.   Let $\mathbb{P}(A_{ij}=1) = \rho_n P_{c_i c_j}$ where $\rho_n$ is a factor controling the overall edge density and $c = (c_1, \dots, c_n)$ is the vector of true labels.   Assume the following regularity conditions hold:
\begin{enumerate}
\item There exist global constants $M_\phi$ and $M_\beta$, such that $\|\phi_{ij}\|_2\leq M_\phi$ and $\|\beta_k\|_2\leq M_\beta$ for all $k$,  
and the tuning parameter $w_n$ satisfies  $\log w_n> M_\phi M_\beta$. \label{condition_1}
\item Let ${\cal C}_k:=\{i: c_i = k\}$. There exists a global constant $\pi_0$ such that $|C_k|\geq \pi_0 n > 0$ for all $k$.  
\label{condition_2}
\item  For all $1\leq k<l \leq K$,  $2(K-1)P_{kl}<\min(P_{kk}, P_{ll})$. \label{condition_3}
\end{enumerate}

Condition \ref{condition_1} states that node feature similarities are uniformly bounded.  This is a mild condition in many applications as the node features are often themselves uniformly bounded. In practice, for numerical stability the user may want to standardize node features and discard individual features with very low variance, before calculating the corresponding similarities $\phi$. Condition \ref{condition_2} guarantees communities do not vanish asymptotically.   Condition \ref{condition_3}  enforces assortativity.  
Since the estimated labels $e$ are only defined up to an arbitrary permutation of communities, we measure the agreement betwee $e$ and $c$ by 
$d(e, c) =\min_{\sigma \in {\cal P}_K}  \frac{1}{n} \sum_{i = 1}^n {\mathbf 1}(\sigma(e_i) \neq c_i )$, where ${\cal P}_K$ is the set of all permutations of $\{1,\ldots,K\}$.

\begin{theorem}[Consistency of JCDC] Under conditions \ref{condition_1}, \ref{condition_2}  and \ref{condition_3}, 
if $n\rho_n\to\infty$, $w_n \rho_n\to \infty$, and the parameter $\alpha$ satisfies 
\begin{equation}
\frac{\max_{k, l}2(K-1)P_{kl}}{\min_{k,l}(P_{kk}, P_{ll})}\leq\alpha\leq 1 
\end{equation}
then we have, for any fixed $\delta>0$, 
\begin{equation}
\mathbb{P}\left( d\left(\arg\max_{e}(\max_{\beta} R(e,\beta;w_n)), c\right) > \delta \right) \to 0 \ . \label{main_theorem_equation}
\end{equation}
\end{theorem}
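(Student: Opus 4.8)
The plan is to run the standard programme for proving consistency of a clustering criterion under the stochastic block model, in the spirit of \cite{Bickel&Chen2009}, with one extra reduction that removes the feature weights. Throughout, I would condition on the feature matrix $F$, so that the entries $A_{ij}$ are the only randomness, they are independent with $\mathbb{E}[A_{ij}\mid F]=\rho_n P_{c_ic_j}$, and for each fixed $\beta$ the weights $w_{ijk}=w_n-e^{-\langle\phi_{ij},\beta_k\rangle}$ are constants confined by Condition \ref{condition_1} to the positive interval $[\,w_n-e^{M_\phi M_\beta},\ w_n-e^{-M_\phi M_\beta}\,]$. The first observation is that, since $\rho_n P_{ab}\le1$ keeps $\rho_n$ bounded, the hypothesis $w_n\rho_n\to\infty$ actually forces $w_n\to\infty$; hence the multiplicative spread $(w_n-e^{-M_\phi M_\beta})/(w_n-e^{M_\phi M_\beta})\to1$ of the weights, which is the structural fact that lets the feature term be treated as a vanishing perturbation of the feature-free criterion \eqref{marginal_criterion}.

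\textbf{Step 1 (uniform concentration).} For fixed $e$ and $\beta$ write $\bar R(e,\beta;w_n):=\mathbb{E}[R(e,\beta;w_n)\mid F]=\rho_n\sum_k|\E_k|^{-\alpha}\sum_{i,j\in\E_k}P_{c_ic_j}w_{ijk}$. For fixed $\beta$, $R(e,\beta;w_n)$ is a linear functional of the independent Bernoulli variables $A_{ij}$ with coefficients of order $w_n|\E_k|^{-\alpha}$, so Bernstein's inequality bounds $|R-\bar R|$; because $n\rho_n\to\infty$ the Bernstein bound is governed by its Gaussian term, and its tail survives a union bound over all (balanced) label assignments together with a polynomially fine net of $\{\|\beta_k\|_2\le M_\beta\}$ — the net argument using that $R$ and $\bar R$ are Lipschitz in $\beta$ with constant of order $n^{2-\alpha}\rho_n$. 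This gives
\begin{equation}
\sup_{e}\ \sup_{\|\beta_k\|_2\le M_\beta}\bigl|\,R(e,\beta;w_n)-\bar R(e,\beta;w_n)\,\bigr|=o_{\mathbb{P}}\!\left(n^{2-\alpha}\rho_n w_n\right),
\end{equation}
and in particular $\sup_e\bigl|\max_\beta R(e,\beta;w_n)-\max_\beta\bar R(e,\beta;w_n)\bigr|$ is of the same negligible order; assignments with a community of size $o(n)$ contribute only $o(n^{2-\alpha}\rho_n w_n)$ to the criterion and are discarded at the outset by a direct estimate.

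\textbf{Step 2 (population criterion).} Let $G(e):=\sum_k|\E_k|^{-\alpha}\sum_{i,j\in\E_k}P_{c_ic_j}$, the population analogue of \eqref{marginal_criterion}. Since $\beta_k$ enters $\bar R$ only through community $k$, the maximization over $\beta$ decouples and yields, for every $e$,
\begin{equation}
\rho_n(w_n-1)\,G(e)\ \le\ \max_\beta\bar R(e,\beta;w_n)\ \le\ \rho_n\bigl(w_n-e^{-M_\phi M_\beta}\bigr)G(e)
\end{equation}
(lower bound at $\beta=0$, upper bound from $w_{ijk}\le w_n-e^{-M_\phi M_\beta}$), the two sides differing by a relative factor $(1-e^{-M_\phi M_\beta})/w_n\to0$. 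The problem thus reduces to the deterministic claim that $G$ is maximized, up to permutation, at $c$ with a quantitative gap: for every fixed $\delta>0$ there is $\gamma(\delta)>0$ with $G(c)-G(e)\ge\gamma(\delta)\,n^{2-\alpha}$ whenever $d(e,c)>\delta$, while $G(e)\le C\,n^{2-\alpha}$ for all $e$. Expressing $G(e)$ through the overlap counts $n_{ka}(e)=|\{i:e_i=k,\,c_i=a\}|$ and using Condition \ref{condition_2} to keep the relevant counts of order $n$, this is exactly the consistency of \eqref{marginal_criterion} via the conditions of \cite{Bickel&Chen2009}: any non-permutation $e$ forces some true communities to be ``merged'', which strictly decreases $G$ by the assortativity Condition \ref{condition_3}; $\alpha\le1$ makes the compensating ``splits'' at worst neutral, so the net change is strictly negative; and the lower bound $\alpha\ge\max_{k,l}2(K-1)P_{kl}/\min_{k,l}(P_{kk},P_{ll})$ calibrates the size penalty $|\E_k|^\alpha$ so that even the most favourable merge — off-diagonal mass spread over the other $K-1$ classes, whence the factor $2(K-1)$ — cannot be profitable. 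Condition \ref{condition_3} is what makes this $\alpha$-window nonempty.

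\textbf{Step 3 (conclusion) and the main obstacle.} Combining the two steps, with probability tending to $1$, for every $e$ with $d(e,c)>\delta$,
\begin{align}
\max_\beta R(e,\beta;w_n)&\ \le\ \rho_n\bigl(w_n-e^{-M_\phi M_\beta}\bigr)G(e)+o_{\mathbb{P}}(n^{2-\alpha}\rho_n w_n)\notag\\
&\ <\ \rho_n(w_n-1)G(c)-o_{\mathbb{P}}(n^{2-\alpha}\rho_n w_n)\ \le\ \max_\beta R(c,\beta;w_n),
\end{align}
where the strict inequality uses $G(e)\le G(c)-\gamma(\delta)n^{2-\alpha}$ and $G(e)\le Cn^{2-\alpha}$ together with $w_n\to\infty$, so that $\rho_n(w_n-1)\gamma(\delta)n^{2-\alpha}$ swamps both the weight-spread term $\rho_n(1-e^{-M_\phi M_\beta})Cn^{2-\alpha}$ and the concentration error, while the last inequality is the bound $\max_\beta R(c,\beta;w_n)\ge R(c,0;w_n)=\rho_n(w_n-1)G(c)+o_{\mathbb{P}}(n^{2-\alpha}\rho_n w_n)$. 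Hence $\arg\max_e\max_\beta R(e,\beta;w_n)$ cannot lie in $\{e:d(e,c)>\delta\}$ except on an event of probability $\to0$, which is \eqref{main_theorem_equation}. I expect Step 2 to be the real work: establishing the quantitative separation $\gamma(\delta)$ for the feature-free criterion and, in the process, showing that the stated $\alpha$-window is exactly the one that defeats both ``merge'' and ``split'' deviations of the confusion matrix, the borderline $\alpha=1$ (splits only neutral) needing the strictness to come entirely from the assortativity term. Making the Bernstein concentration genuinely uniform over the continuum of $\beta$'s, discarding unbalanced assignments, and checking that the weight-spread slop is of lower order are routine by comparison, the last point resting on the simple fact that $w_n\rho_n\to\infty$ with $\rho_n$ bounded forces $w_n\to\infty$.
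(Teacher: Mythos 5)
Your overall architecture coincides with the paper's: a uniform concentration step (your Step 1 versus Lemma~\ref{lemma_1}, with Bernstein conditional on $F$ in place of Hoeffding with the expectation taken over $F$ as well), a reduction showing that the $\beta$-dependence and the feature weights contribute only a relative $O(1/w_n)$ perturbation of the feature-free population criterion (your sandwich $\rho_n(w_n-1)G(e)\le\max_\beta\bar R\le\rho_n(w_n-e^{-M_\phi M_\beta})G(e)$ is Lemma~\ref{lemma_2} in disguise), and a deterministic optimization of that feature-free criterion. The first two steps are sound, and your observation that $w_n\rho_n\to\infty$ with $\rho_n$ bounded forces $w_n\to\infty$ is exactly the mechanism the paper exploits.

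The genuine gap is your Step 2, which you yourself flag as ``the real work'' but then dispose of by asserting a quantitative separation $G(c)-G(e)\ge\gamma(\delta)n^{2-\alpha}$ and appealing to the conditions of Bickel and Chen. That appeal is essentially circular: the hypothesis of their general theorem that must be verified for this criterion \emph{is} the statement that the population functional is uniquely maximized at the true partition, and verifying it here is the content of Lemma~\ref{lemma_3}. Your narrative (merges strictly lose by assortativity, splits are at worst neutral when $\alpha\le1$, the lower bound on $\alpha$ defeats the most favourable merge) points in the right direction but is not a proof; the actual argument writes $G$ as a function $g(U)$ of the confusion matrix and needs three concrete ingredients: the superadditivity $\bigl(\sum_kU_{kl}\bigr)^{2-\alpha}\ge\sum_kU_{kl}^{2-\alpha}$ for $\alpha\le1$, a mean-value bound on $\bigl(\sum_aU_{ka}\bigr)^\alpha-U_{kl}^\alpha$, the elementary inequality \eqref{claim_xyu} to absorb the cross terms $U_{kl}U_{kl'}P_{ll'}$, and the lower bound on $\alpha$ used as $\alpha P_{ll}\ge\sum_{l'\ne l}2P_{ll'}$. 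None of this is routine bookkeeping. A second, smaller omission: even granting unique maximization, the uniform gap $\gamma(\delta)$ over all $e$ with $d(e,c)>\delta$ does not follow for free; the paper derives it from a compactness-and-continuity argument on the set of confusion matrices $\{U:\min_{O\in\mathcal{O}_K}\|U-DO\|_1\ge y_0\}$, and your plan asserts the quantitative form without this step.
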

The proof  is given in the Supplemental Material.

\section{Simulation studies}

We compare JCDC to three representative benchmark methods which use both the adjacency matrix and the node features: CASC (Covariate Assisted Spectral Clustering, \cite{binkiewicz2014covariate}), CESNA (Communities from Edge Structure and Node Attributes, \cite{yang2013community}), and BAGC (BAyesian Graph Clustering, \cite{xu2012model}).   In addition, we also include two standard methods that use either the network adjacency alone (SC, spectral clustering on the Laplacian regularized with a small constant $\tau = 1e-7$, as in \cite{Amini.et.al2013}), or the node features alone (KM, $K$-means performed on the $p$-dimensional node feature vectors, with 10 random initial starting values).   We generate networks with $n=150$ nodes and $K=2$ communities of sizes $100$ and $50$ from the degree-corrected stochastic block model as follows. The edges are generated independently with probability $\theta_i\theta_j p$ if nodes $i$ and $j$ are in the same community, and $r \theta_i\theta_j p$ if nodes $i$ and $j$ are in different communities.    We set $p=0.1$ and vary $r$ from $0.25$ to $0.75$.    We set $5\%$ of the nodes in each community to be ``hub'' nodes with the degree correction parameter $\theta_i=10$, and for the remaining nodes set $\theta_i=1$.   All resulting products are thresholded at 0.99 to ensure there are no probability values over 1.  These settings result in the average expected node degree ranging approximately from $22$ to $29$.

\begin{figure}[H]
	\centering
	\begin{subfigure}[h]{0.135\textwidth}
		\centering
		\includegraphics[width=\textwidth]{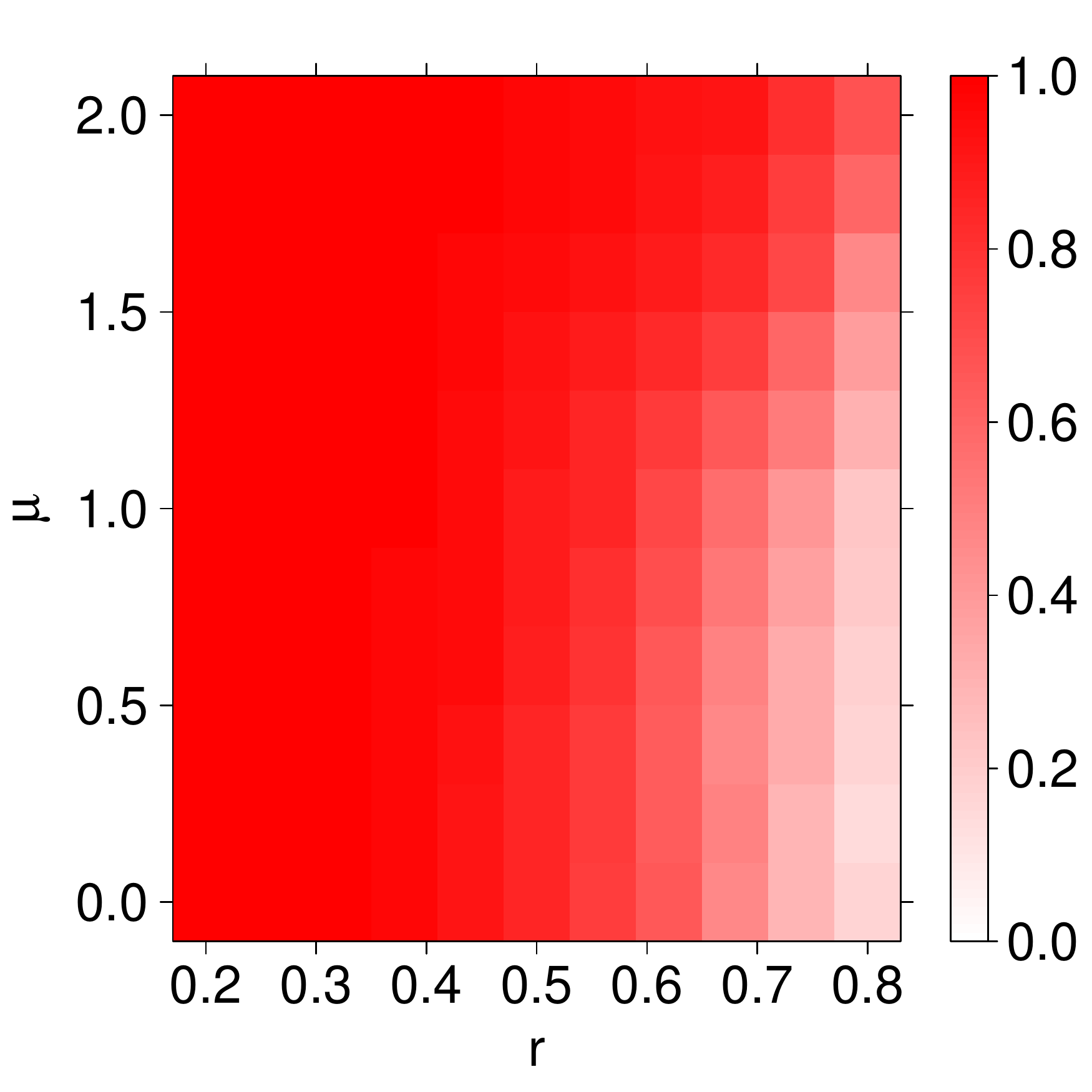}
		\caption{JCDC,$w=5$}
	\end{subfigure}
	\begin{subfigure}[h]{0.135\textwidth}
		\centering
		\includegraphics[width=\textwidth]{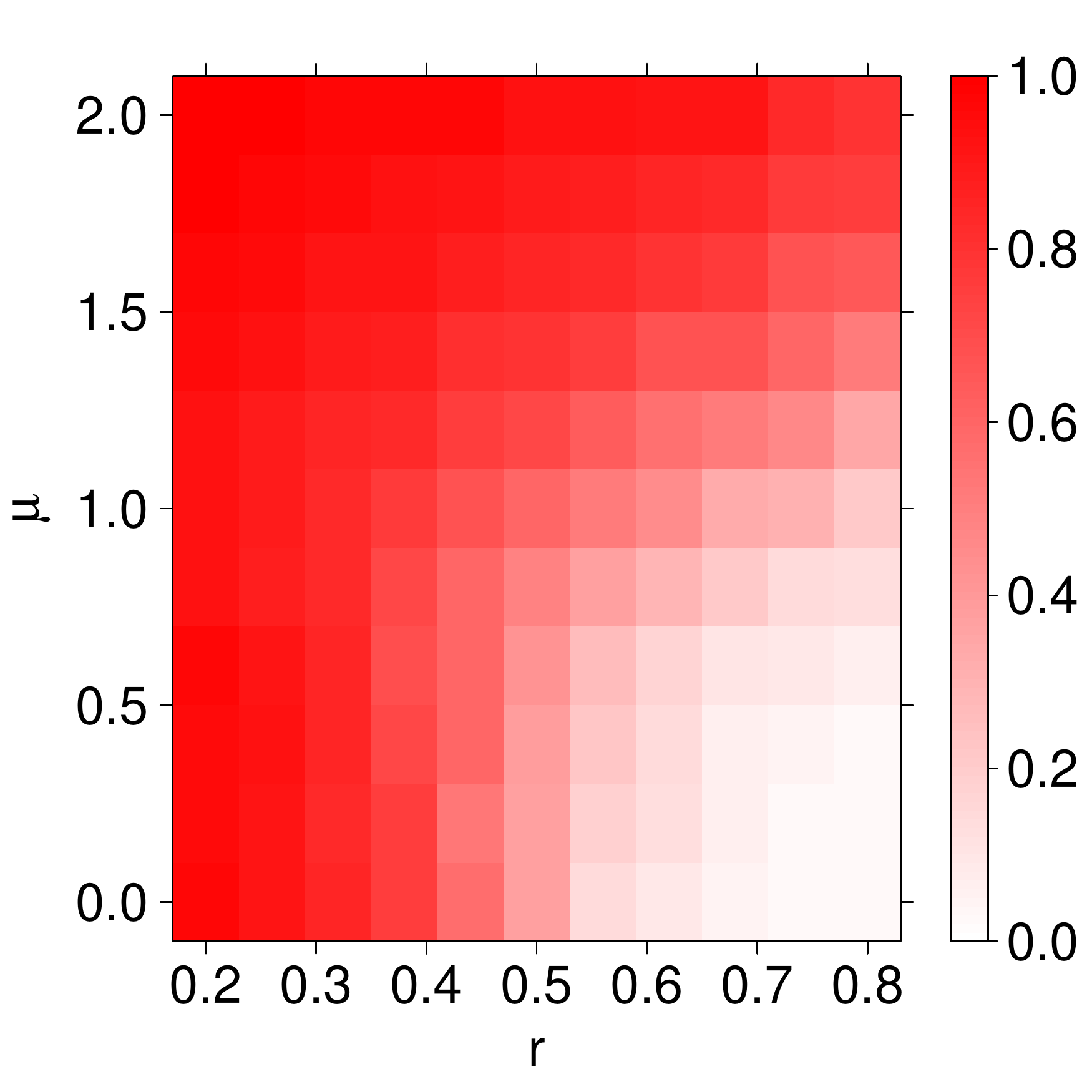}
		\caption{JCDC,$w=1.5$}
	\end{subfigure}
	\begin{subfigure}[h]{0.135\textwidth}
		\centering
		\includegraphics[width=\textwidth]{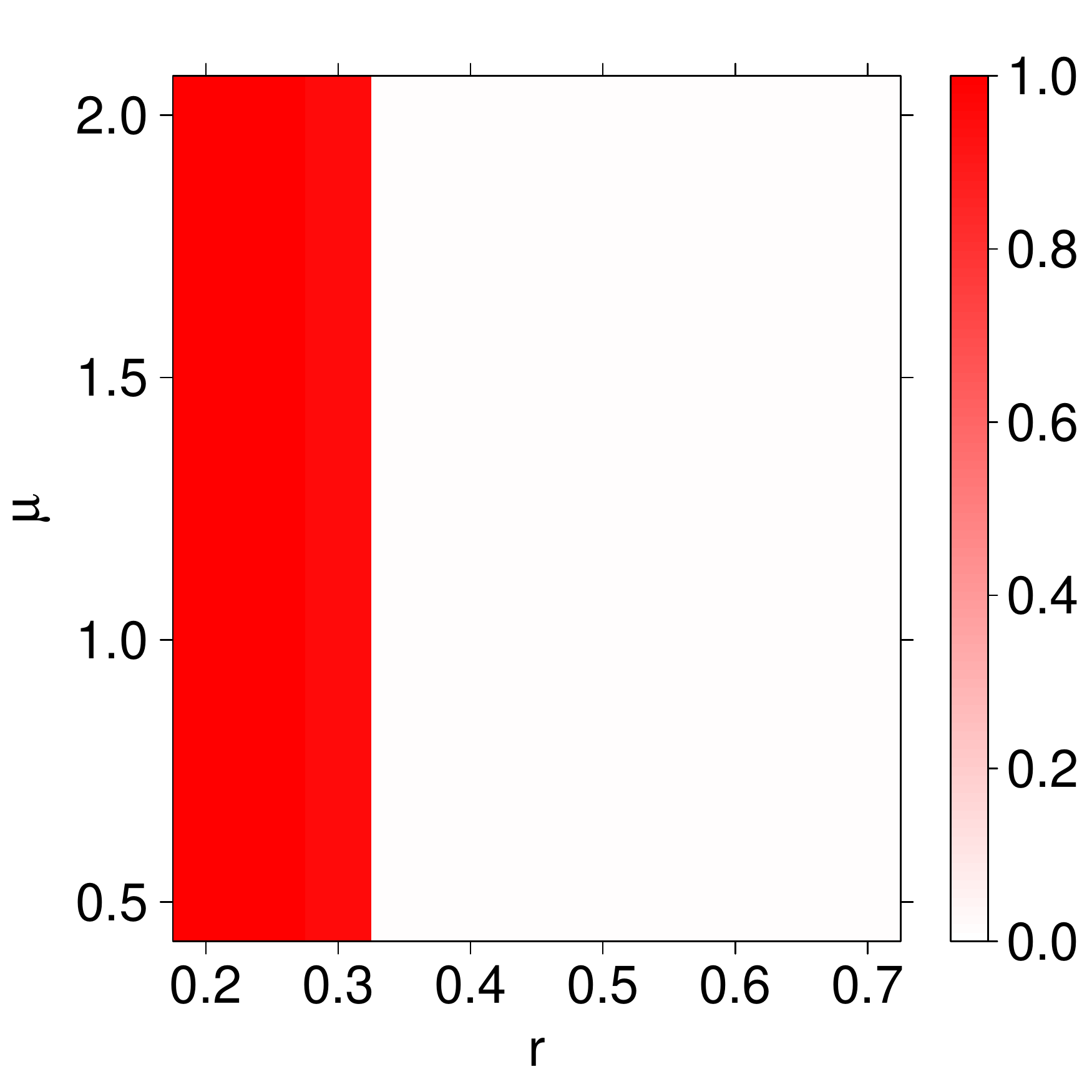}
		\caption{SC}
	\end{subfigure}
	\begin{subfigure}[h]{0.135\textwidth}
		\centering
		\includegraphics[width=\textwidth]{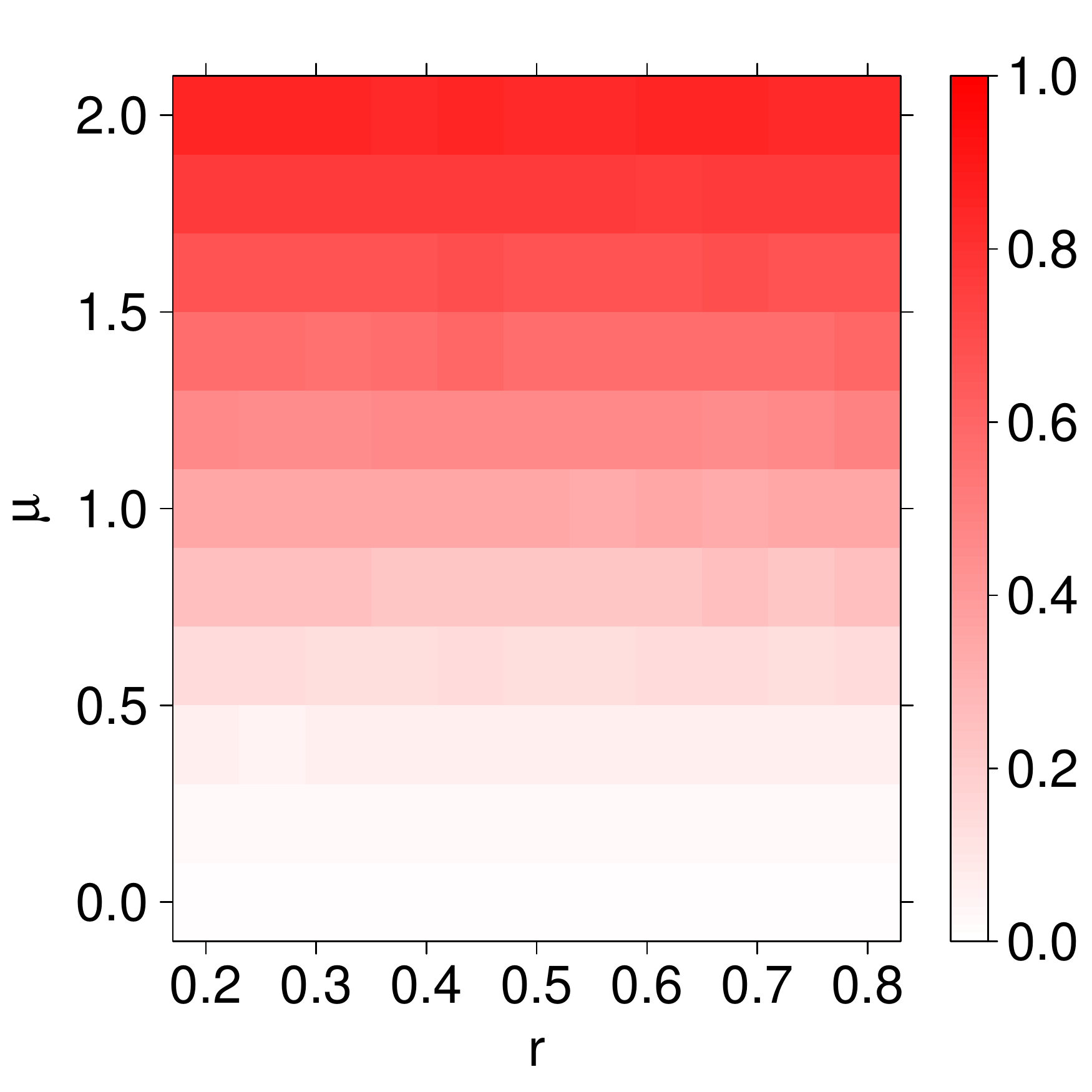}
		\caption{KM}
	\end{subfigure}
	\begin{subfigure}[h]{0.135\textwidth}
		\centering
		\includegraphics[width=\textwidth]{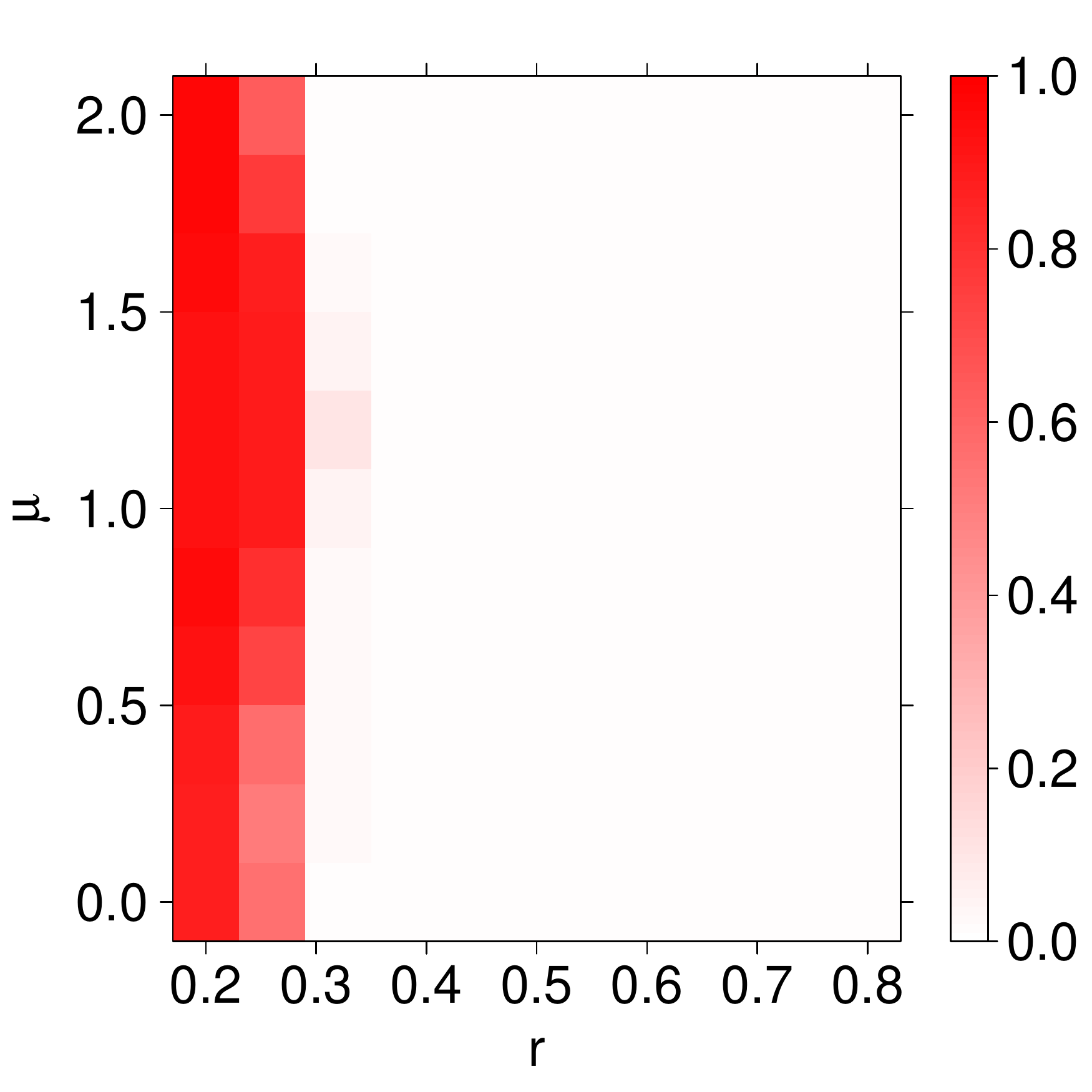}
		\caption{CASC}
	\end{subfigure}
	\begin{subfigure}[h]{0.135\textwidth}
		\centering
		\includegraphics[width=\textwidth]{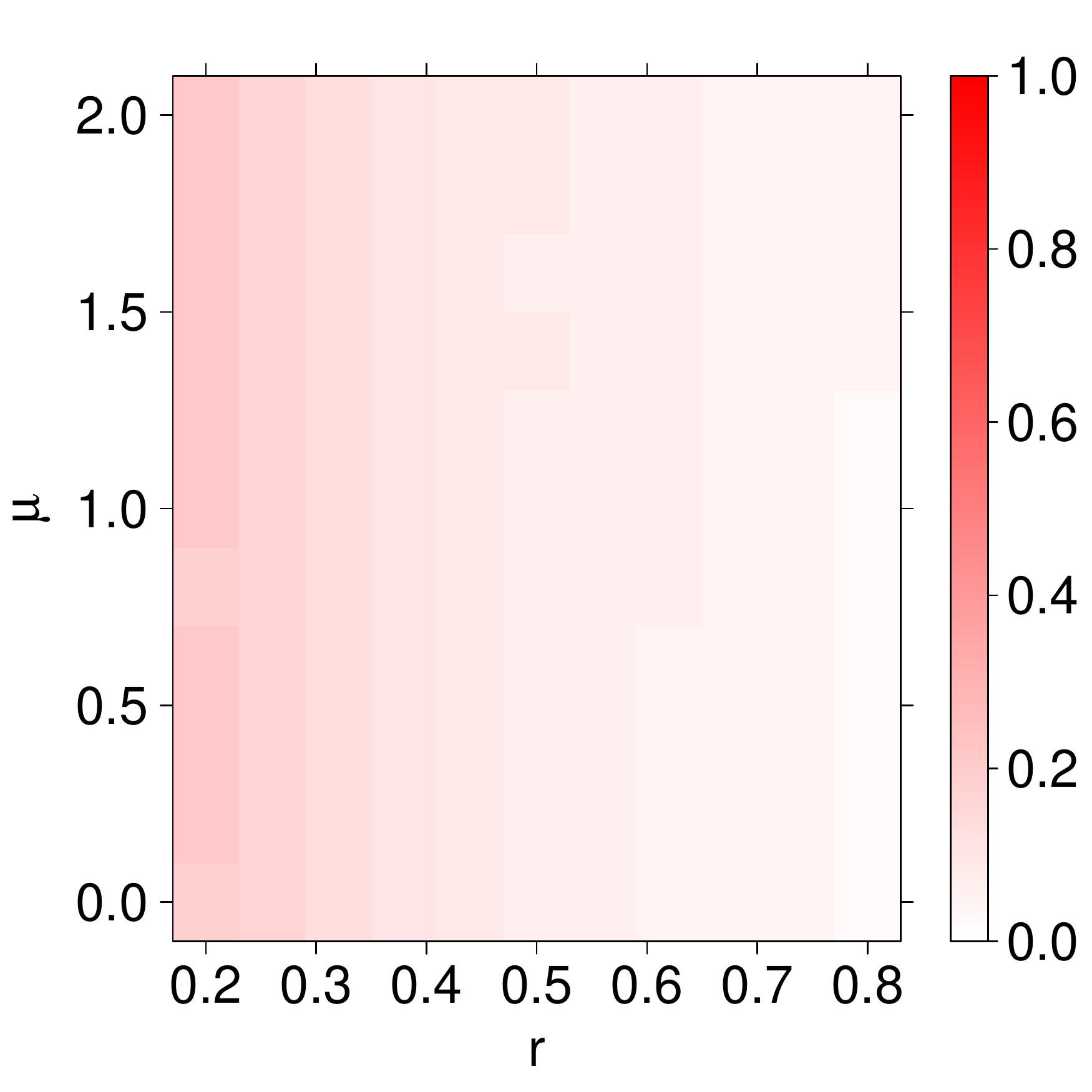}
		\caption{CESNA}
	\end{subfigure}
	\begin{subfigure}[h]{0.135\textwidth}
		\centering
		\includegraphics[width=\textwidth]{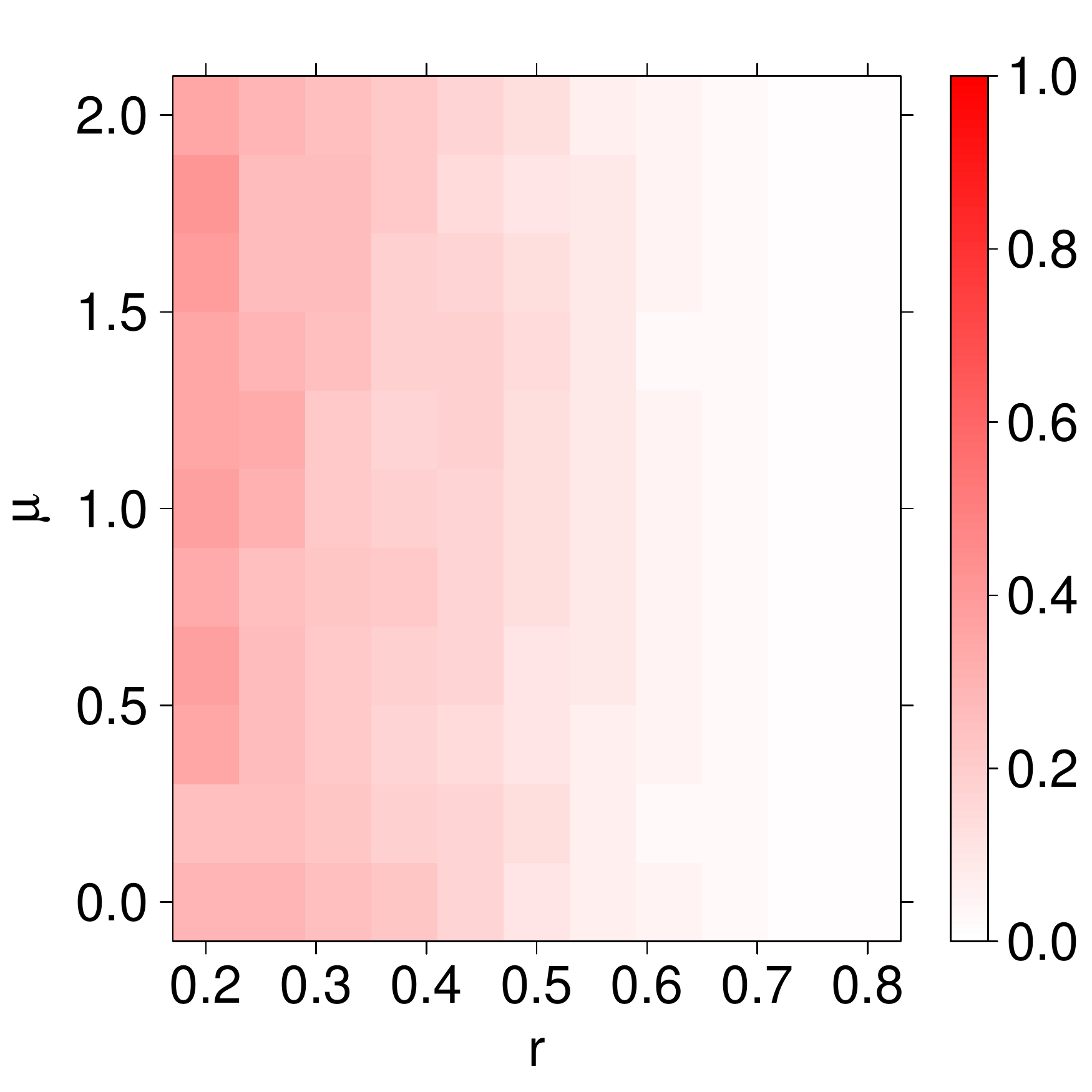}
		\caption{BAGC}
	\end{subfigure}
	\caption{Performance of different methods measured by normalized mutual information as a function of $r$ (out-in probability ratio) and $\mu$ (feature signal strength).}
	\label{fig_1}
\end{figure}


For each node $i$, we generate $p = 2$ features, with one ``signal'' feature related to the community structure and one ``noise'' feature whose distribution is the same for all nodes.  The ``signal'' feature follows the distribution $N(\mu, 1)$ for nodes in community 1 and $N(-\mu, 1)$ for nodes in community 2, with $\mu$ varying from $0.5$ to $2$ (larger $\mu$ corresponds to stronger signal).  For use with CESNA, which only allows categorical node features, we discretize the continuous node features by partitioning the real line into 20 bins using the $0.05, 0.1, \dots, 0.95$-th quantiles. 
For the JCDC, based on the study of the tuning parameters in the Supplemental Material, we use $\alpha=1$ and compare two values of $w_n$, $w_n=1.5$ and $w_n=5$.
Finally, agreement between the estimated communities and the true community labels is measured by normalized mutual information, a measure commonly used in the network literature which ranges between 0 (random guessing) and 1 (perfect agreement). For each configuration, we repeat the experiments 30 times, and record the average NMI over 30 replications. 

Figure \ref{fig_1} shows the heatmaps of average NMI for all methods under these settings, as a function of $r$ and $\mu$.  As one would expect, the performance of spectral clustering (c), which uses only the network information, is only affected by $r$ (the larger $r$ is, the harder the problem), and the performance of $K$-means (d), which uses only the features, is only affected by $\mu$ (the larger $\mu$ is, the easier the problem).   JCDC is able to take advantage of both network and feature information by estimating the coefficients $\beta$ from data, and its performance only deteriorates when neither is informative.   The  informative features are more helpful with a larger value of $w$ (a), and conversely uninformative features affect perfomance slightly more with a lower value of $w$ (b),  but this effect is not strong.  CASC (e)  appears to inherit the sharp phase transition from spectral clustering, which forms the basis of CASC;  the sharp transition is perhaps due to different community sizes and hub nodes, which are both challenging to spectral clustering;  CESNA (f) and BAGC (g) do not perform as well overall, with BAGC often clustering all the hub nodes into one community.

\section{Data applications}
\subsection{The world trade network}


The world trade network \cite{de2011Pajek} connects 80 countries based on the amount of trade of metal manufactures between them in 1994, or when not available for that year,  in 1993 or 1995.   Nodes are countries and edges represent positive amount of import and/or export between the countries.  Each country also has three categorical features: the continent  (Africa, Asia, Europe, N.\ America, S.\ America, and Oceania), the country's structural position in the world system in 1980 (core, strong semi-periphery, weak semi-periphery, periphery) and in 1994 (core, semi-periphery, periphery). Figures \ref{worldtrade} (a) to (c) show the adjacency matrix rearranged by sorting the nodes by each of the features.  The partition by continent (Figure \ref{worldtrade}(a)) clearly shows community structure, whereas the other two features show hubs (core status countries trade with everyone), and no assortative community structure.   We will thus compare partitions found by all the competing methods to the continents, and omit the three Oceania countries from further analysis because no method is likely to detect such a small community.    The two world position variables ('80 and '94) will be used as features, treated as ordinal variables. 

\setcounter{subfigure}{0}
\begin{figure}[H]
	\centering
	\begin{subfigure}[b]{0.19\textwidth}
	\includegraphics[width=\textwidth]{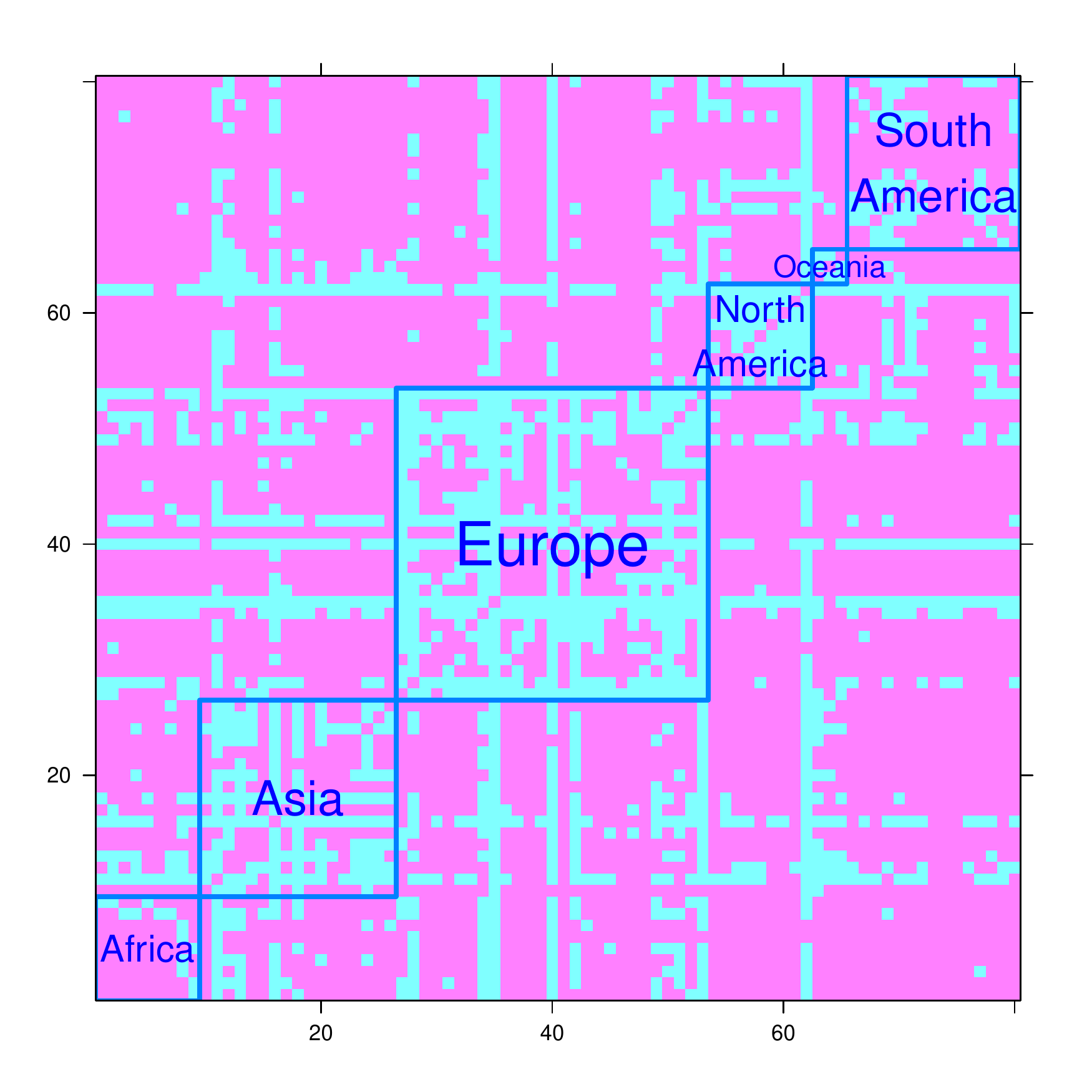}
	\caption{$A$ by continent}
	\end{subfigure}
	\begin{subfigure}[b]{0.19\textwidth}
	\includegraphics[width=\textwidth]{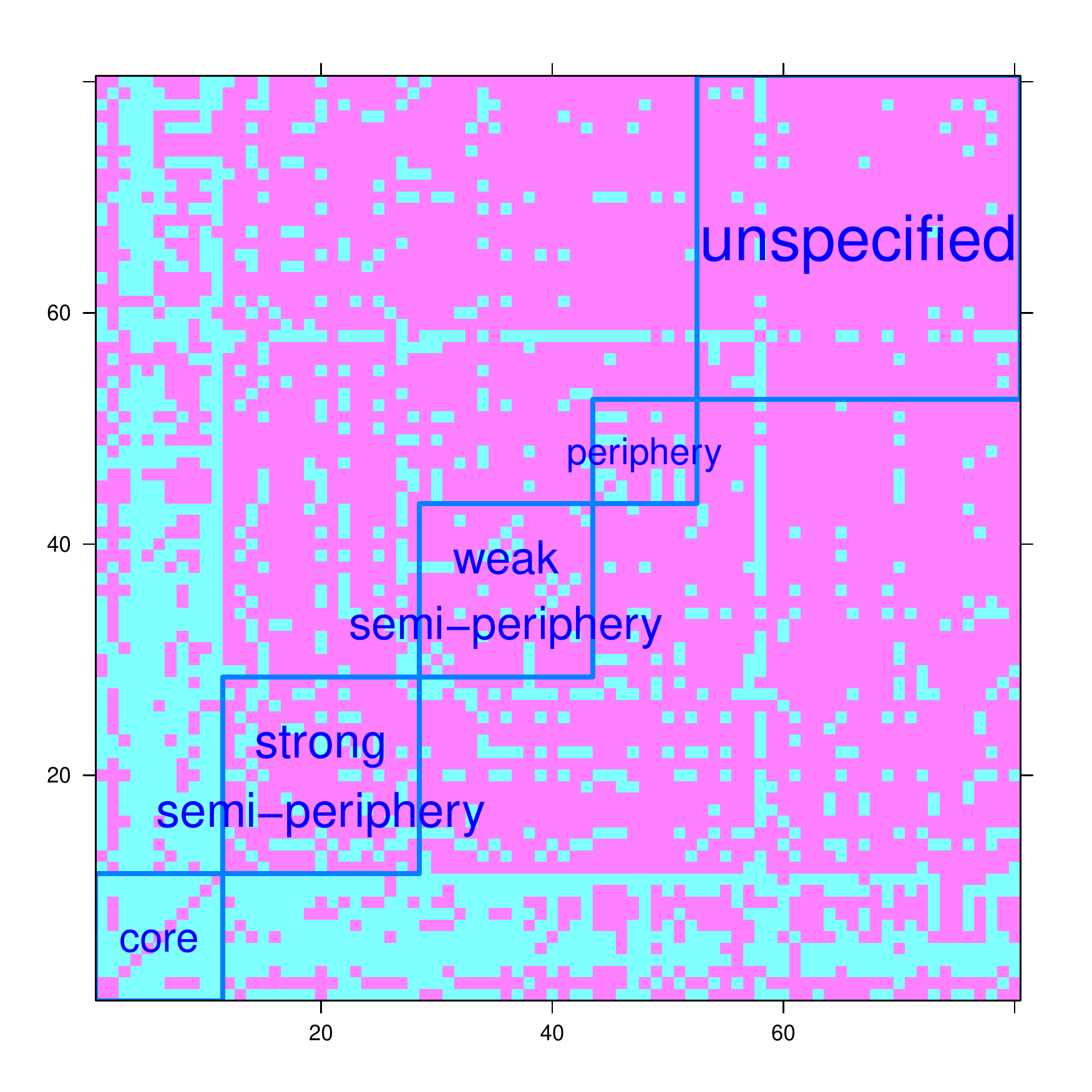}
	\caption{$A$ by position '80}
	\end{subfigure}
	\begin{subfigure}[b]{0.19\textwidth}
	\includegraphics[width=\textwidth]{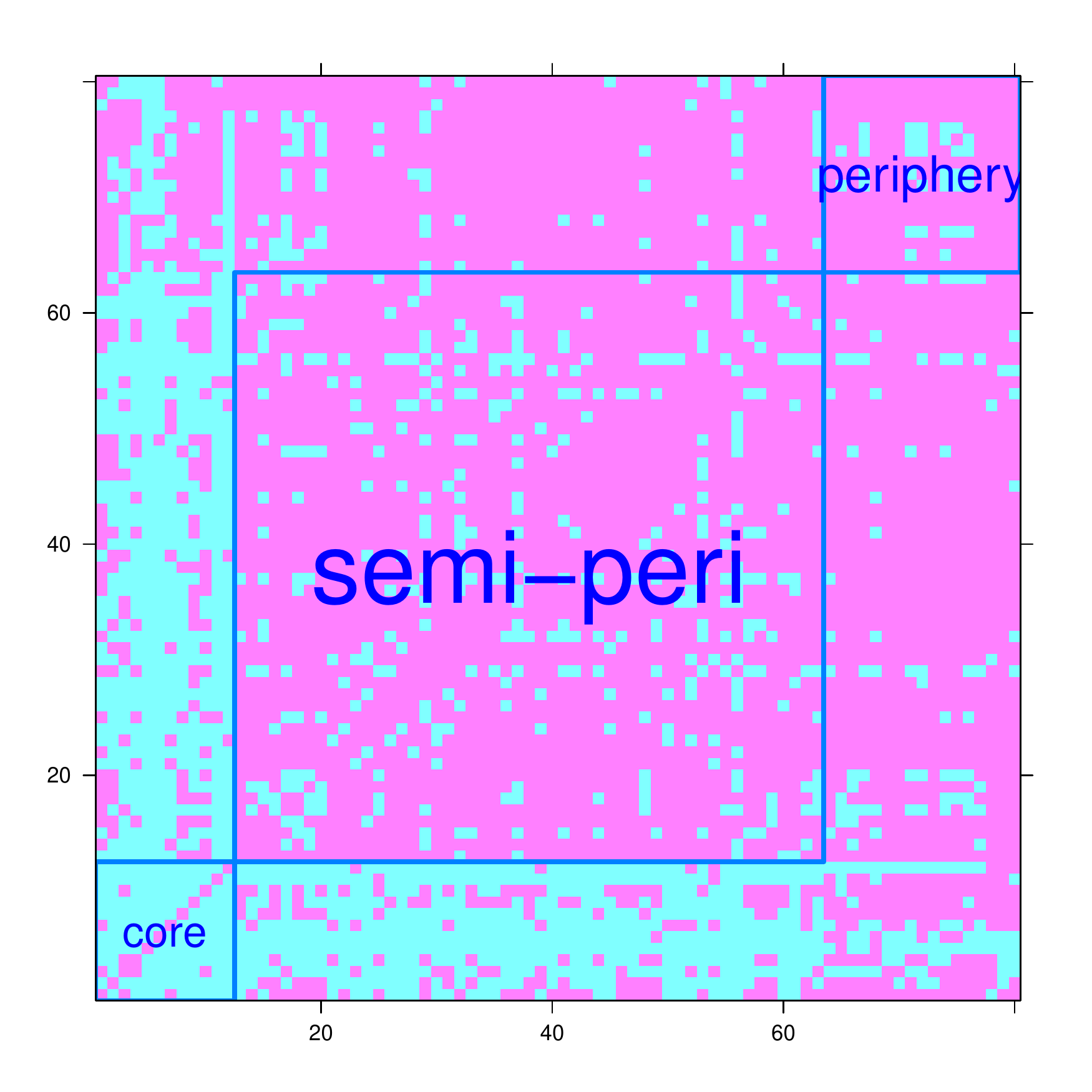}
	\caption{$A$ by position '94}
	\end{subfigure}
	\begin{subfigure}[b]{0.16\textwidth}
		\includegraphics[width=\textwidth]{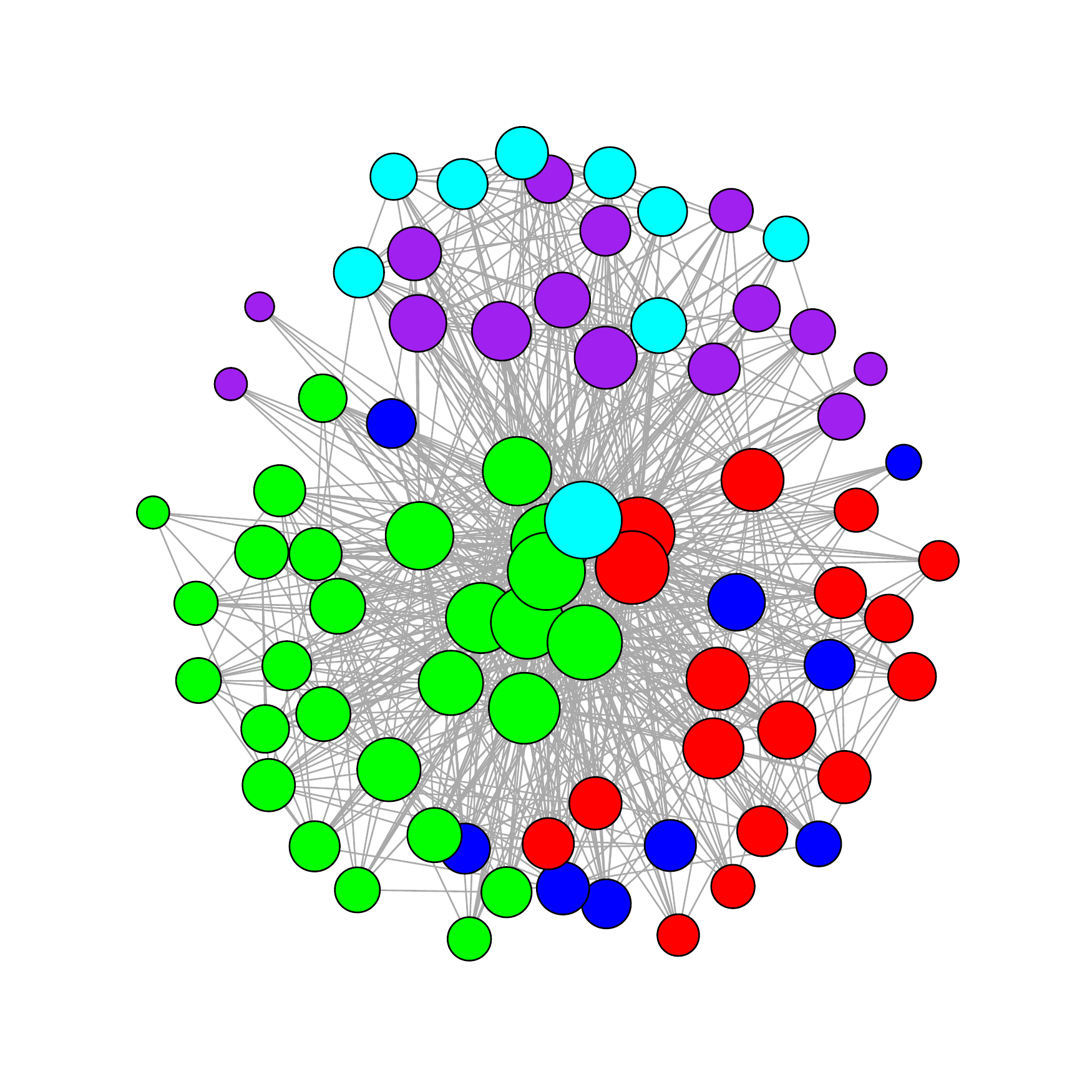}
		\caption{Continent\\\ }
	\end{subfigure}
	\begin{subfigure}[b]{0.16\textwidth}
		\includegraphics[width=\textwidth]{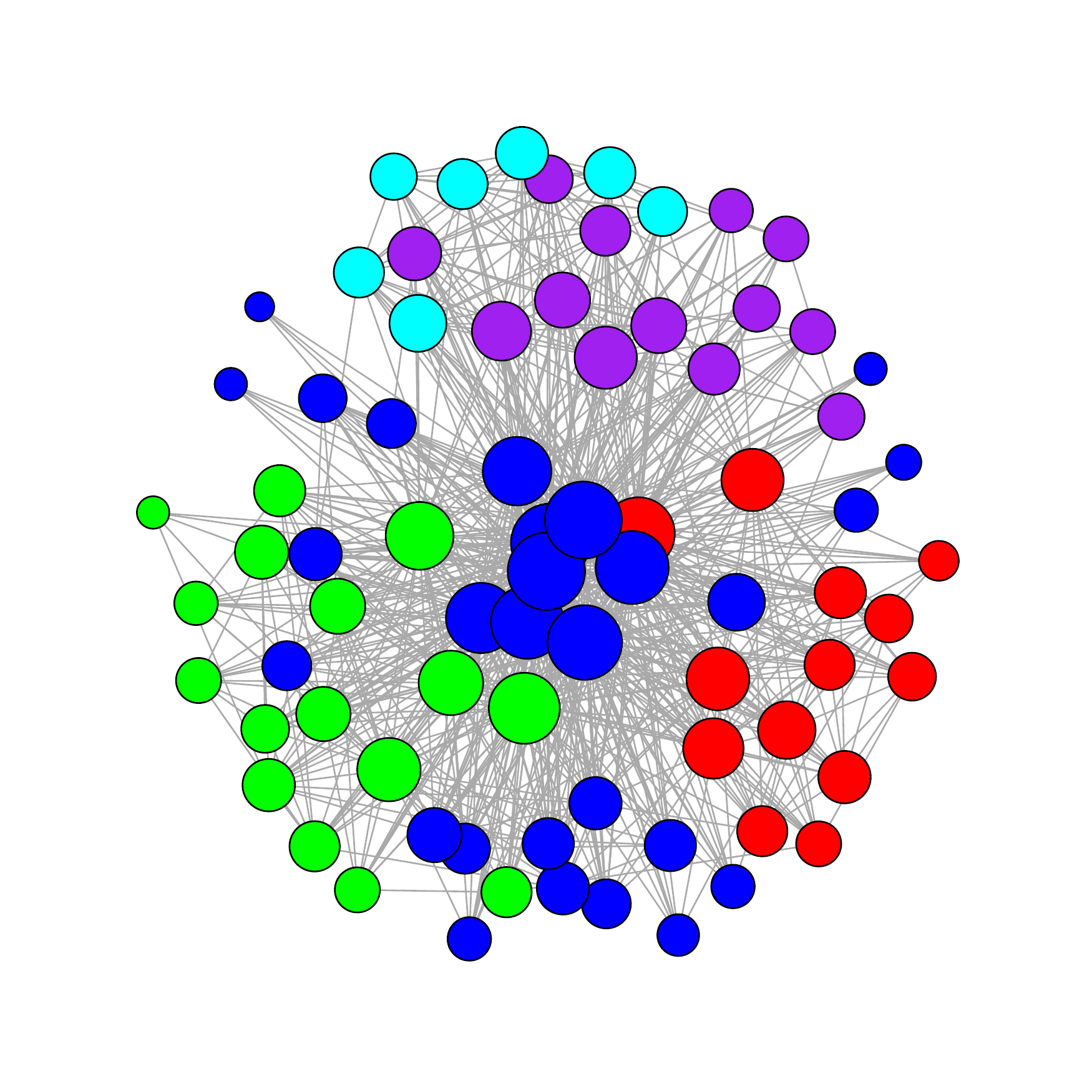}
		\caption{JCDC, $w_n=5$\\NMI=0.54}
	\end{subfigure}
	\begin{subfigure}[b]{0.16\textwidth}
			\includegraphics[width=\textwidth]{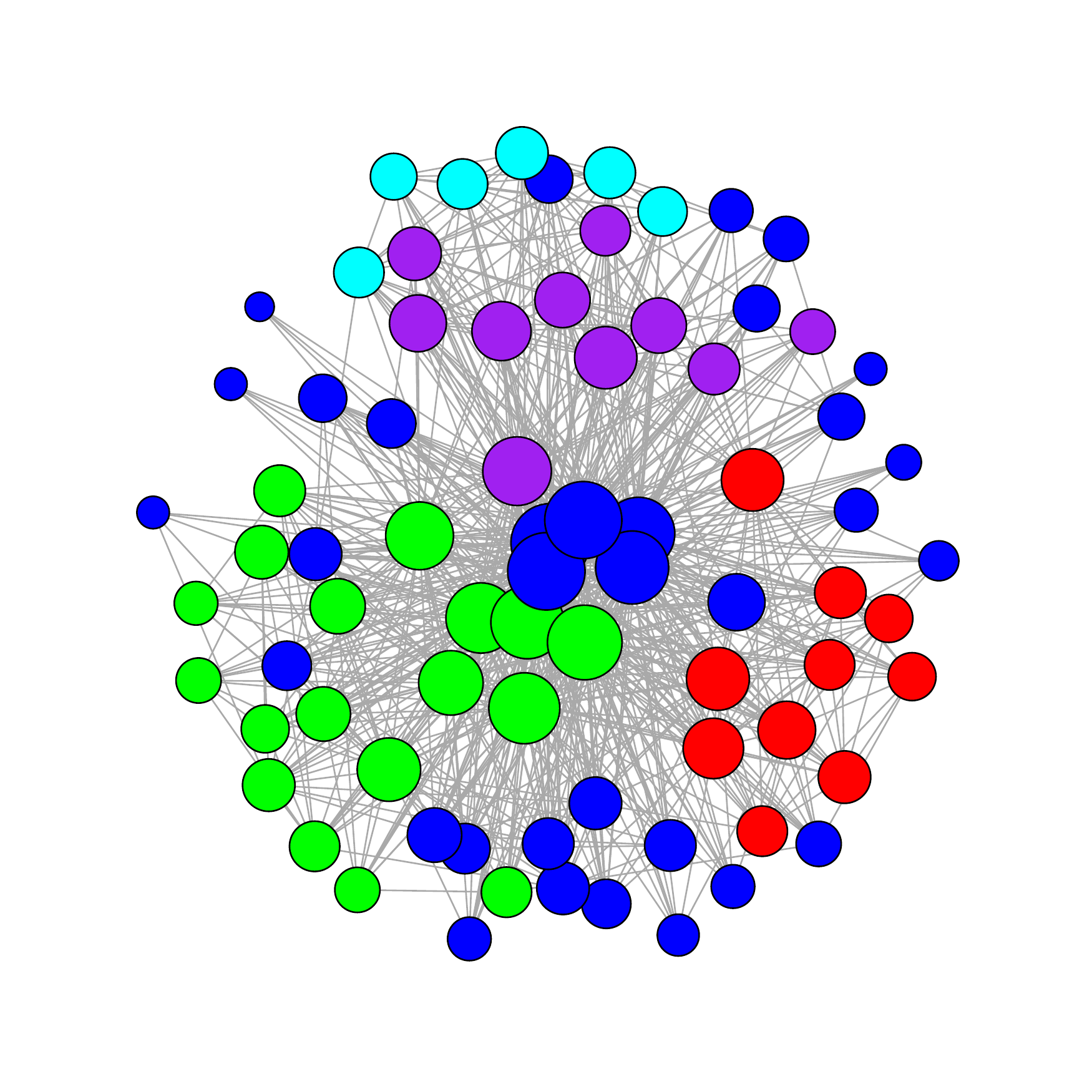}
			\caption{JCDC, $w_n=1.5$\\NMI=0.50}
		\end{subfigure}
	\begin{subfigure}[b]{0.16\textwidth}
		\includegraphics[width=\textwidth]{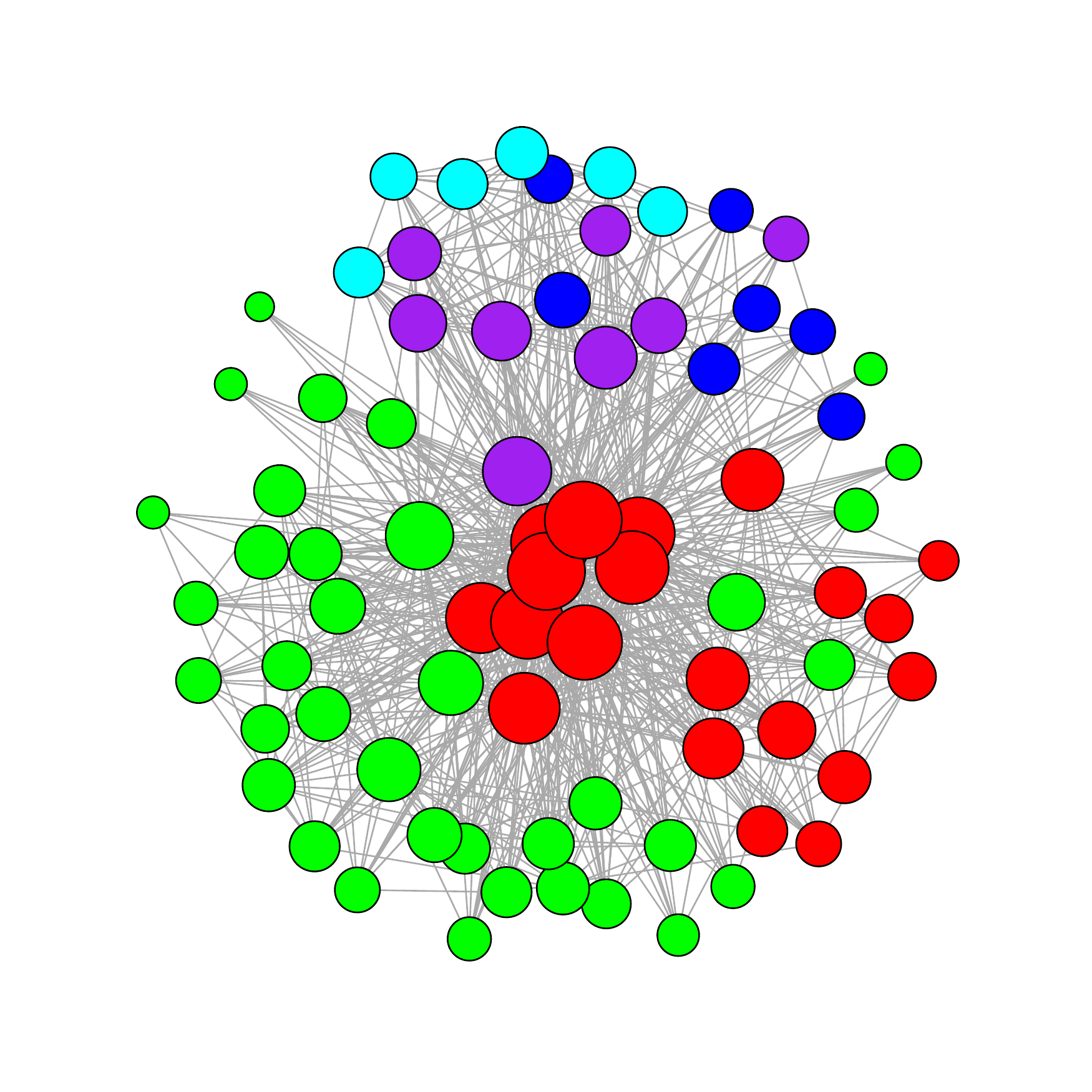}
		\caption{SC\\NMI=0.47}
	\end{subfigure}
	\begin{subfigure}[b]{0.16\textwidth}
		\includegraphics[width=\textwidth]{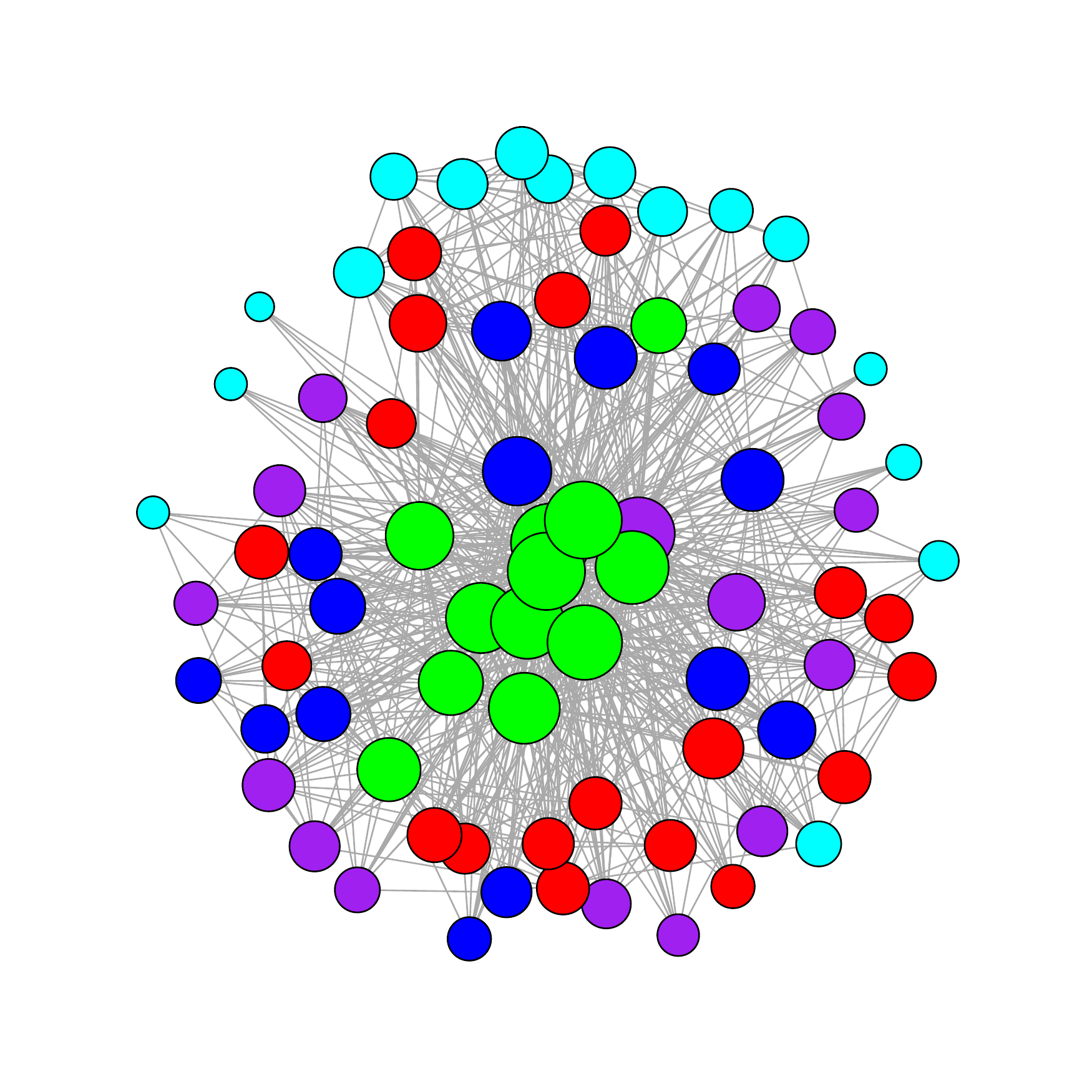}
		\caption{KM\\NMI=0.25}
	\end{subfigure}
	\begin{subfigure}[b]{0.16\textwidth}
		\includegraphics[width=\textwidth]{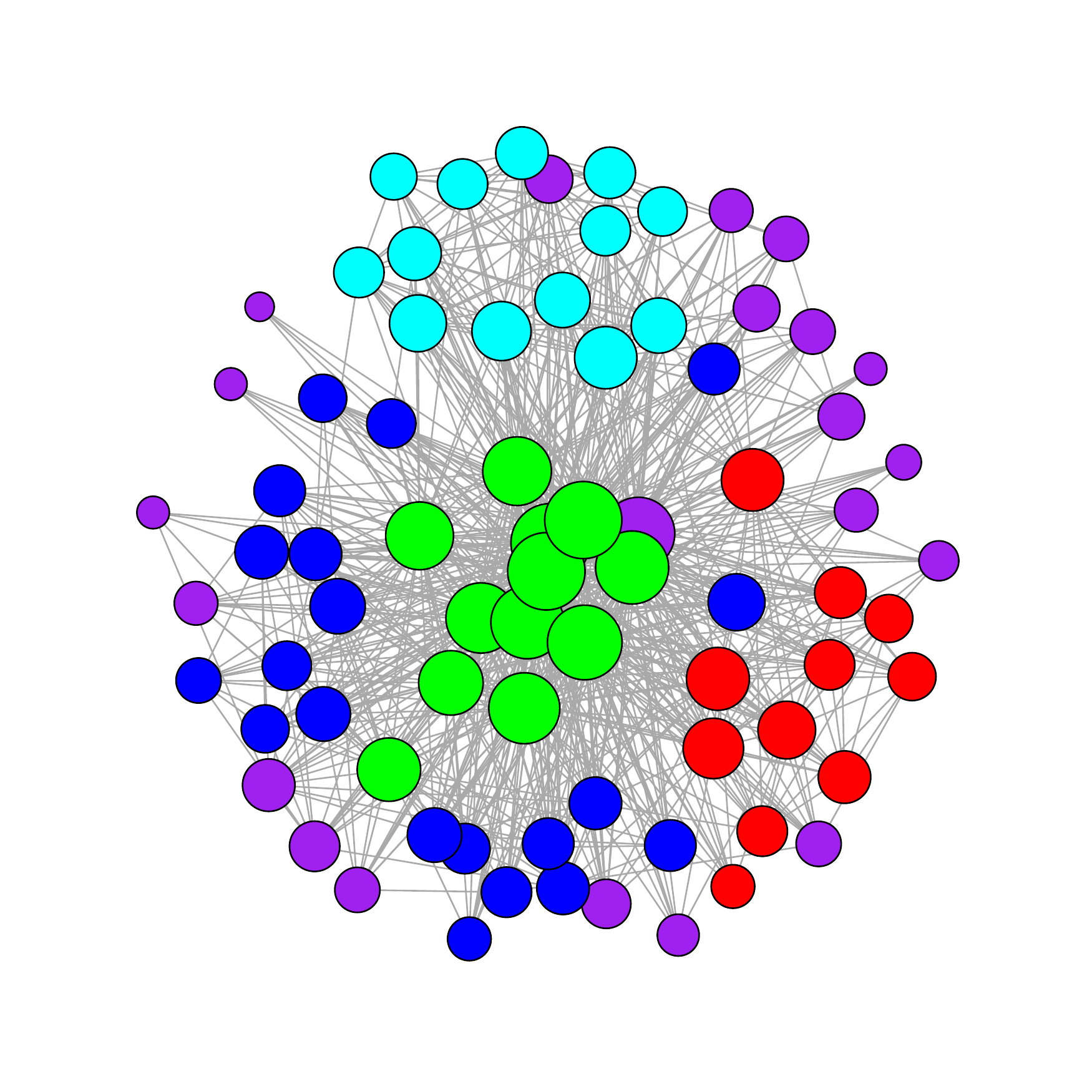}
		\caption{CASC\\NMI=0.39}
	\end{subfigure}
	\begin{subfigure}[b]{0.16\textwidth}
		\includegraphics[width=\textwidth]{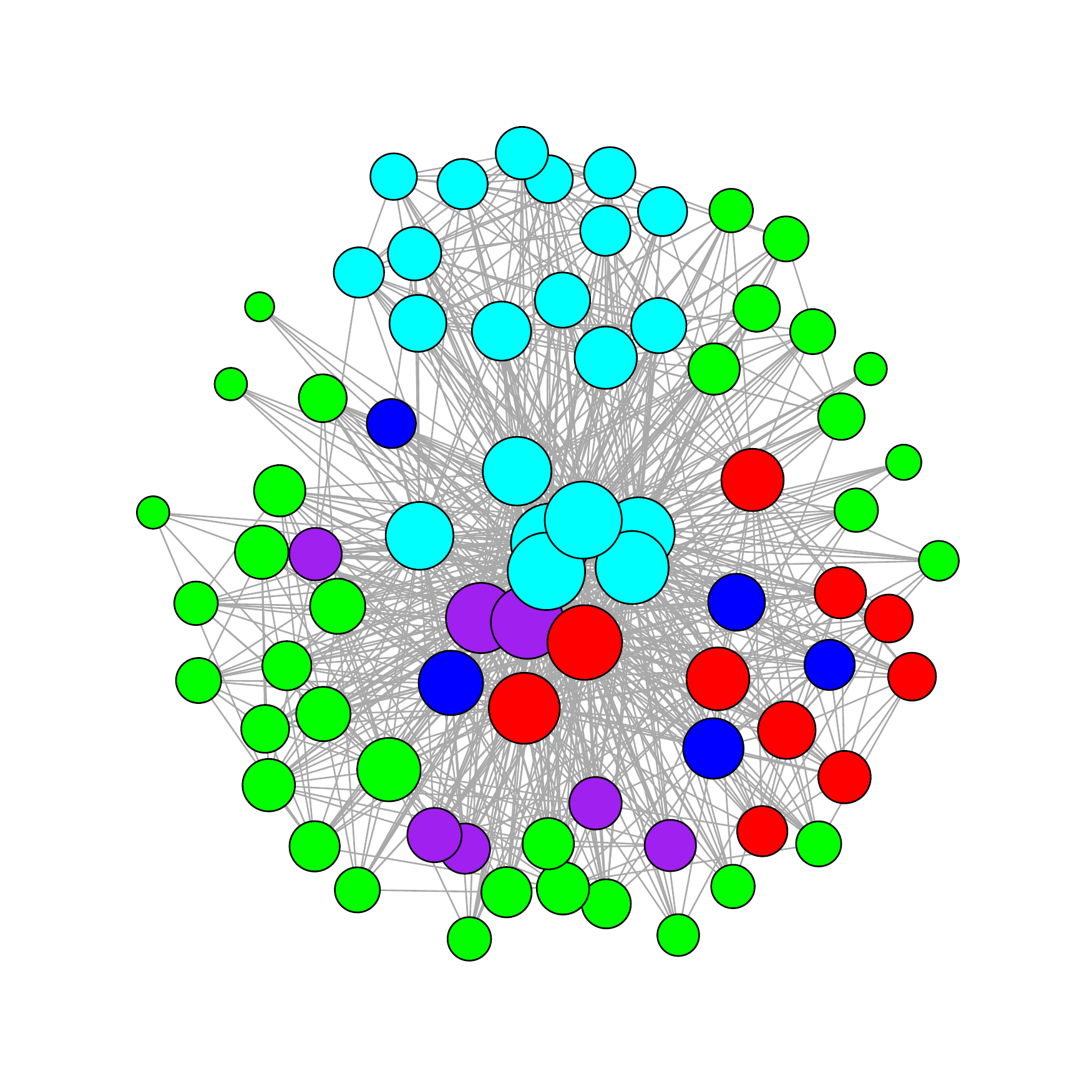}
		\caption{CESNA\\NMI=0.26}
	\end{subfigure}
	\begin{subfigure}[b]{0.16\textwidth}
		\includegraphics[width=\textwidth]{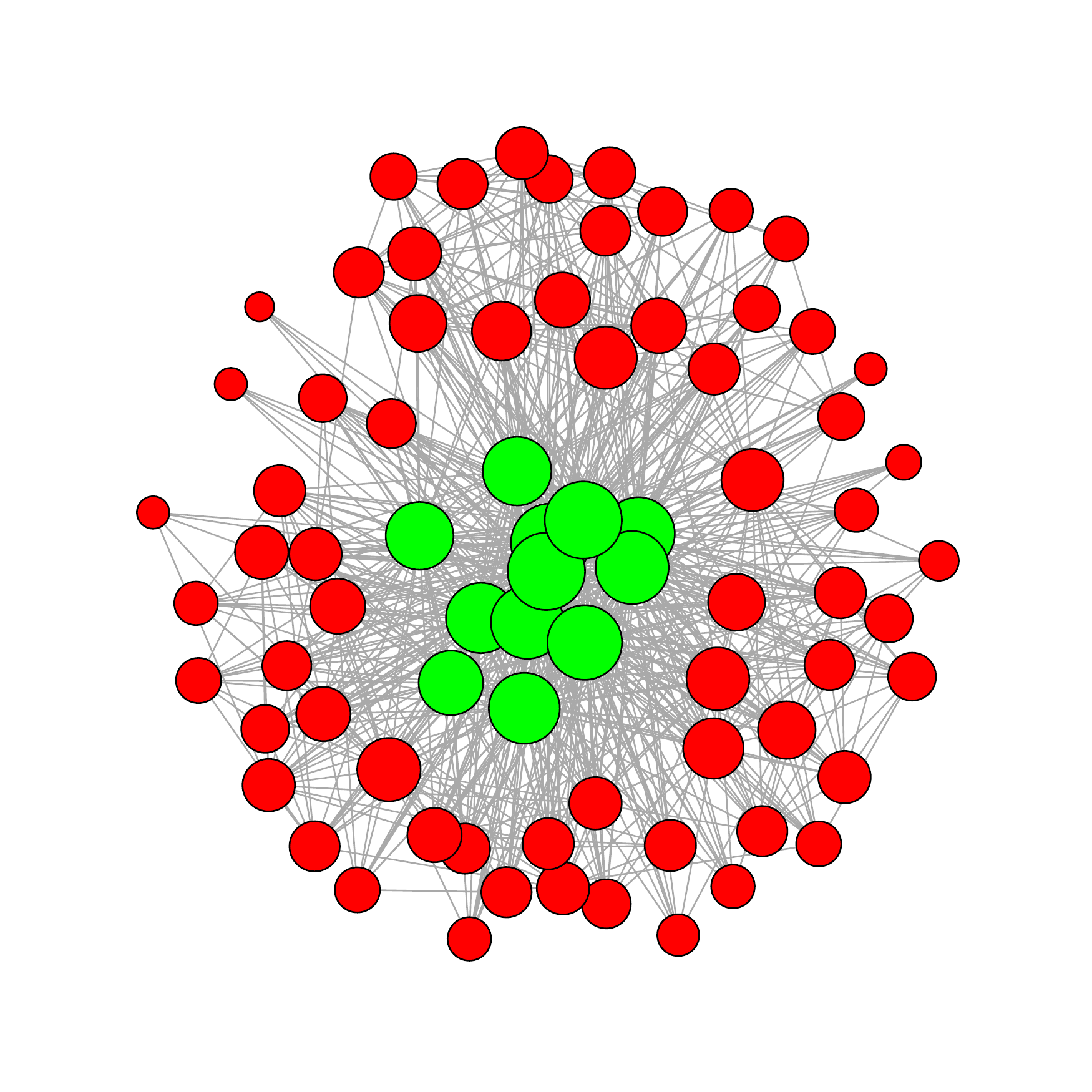}
		\caption{BAGC\\NMI=0.11}
	\end{subfigure}

	\caption{(a)-(c): the adjacency matrix ordered by different node features; (d) network with nodes colored by continent (taken as ground truth); blue is Africa, red is Asia, green is Europe, cyan is N.\ America and purple is S.\ America.  (e)-(k) community detection results from different methods;  colors are mated to (d) in the best way possible. }\label{worldtrade}
\end{figure}

The results for all methods are shown in Figure \ref{worldtrade}, along with NMI values comparing the detected partition to the continents.  All methods were run with the true value $K=5$.

\begin{table}[h]
	\centering
\caption{Feature coefficients $\hat{\beta}_k$ estimated by JCDC with $w=5$. Best match is determined by majority vote.}
 \begin{tabular}{llll}
Community	& Best match	& Position '80 & Position '94\\\hline
blue	& Europe 	& 0.000	& 0.143\\
red  	& Asia		& 0.314	& 0.127\\
green	& Europe 	& 0.017	& 0.204\\
cyan 	& N. America & 0.107 & 0.000\\
purple & S. America & 0.121	& 0.000\\\hline
\end{tabular}
\label{table:worldtrade}
\end{table}

The result of spectral clustering agrees much better with the continents than that of $K$-means, indicating that the community structure in the adjacency matrix is closer to the continents that the structure contained in the node features.  JCDC obtains the highest NMI value, 
CASC performs similarly to spectral clustering, whereas CESNA and BAGC both fail to recover the continent partition. Note that no method was able to estimate Africa well, likely due to the disassortative nature of its trade seen in Figure \ref{worldtrade} (a).    Figure \ref{worldtrade} (e) indicates that JCDC estimated N. America, S. America and Asia with high accuracy, but split Europe into two communities, since it was run with $K=5$ and could not pick up Africa due to its disassortative structure.   Table \ref{table:worldtrade} contains the estimated feature coefficients, suggesting that in 1980 the ``world position'' had the most influence on the connections formed by Asian countries, whereas in 1994 world position mattered most in Europe.





\subsection{The lawyer friendship network}


The second dataset we consider is a friendship network of 71 lawyers in a New England corporate law firm  \cite{lazega2001collegial}.  
Seven node features are available: status (partner or associate), gender, office location (Boston, Hartford, or Providence, a very small office with only two non-isolated nodes), years with the firm, age, practice (litigation or corporate) and law school attended (Harvard, Yale, University of Connecticut, or other).  Categorical features with $M$ levels are represented by $M-1$ dummy indicator variables.  Figures \ref{lawyers} (a)-(g) show heatmap plots of the adjacency matrix with nodes sorted by each feature,  after eliminating six isolated nodes.  Partition by status (Figure \ref{lawyers}(a)) shows a strong assortative structure, and so does partition by office (Figure \ref{lawyers}(c)) restricted to Boston and Hartford, but the small Providence office does not have any kind of structure.   Thus we chose the status partition as a reference point for comparisons, though other partitions are certainly also meaningful.  
\setcounter{subfigure}{0}
\begin{figure}[h]
	\centering
	\begin{subfigure}[b]{0.18\textwidth}
	\includegraphics[width=\textwidth]{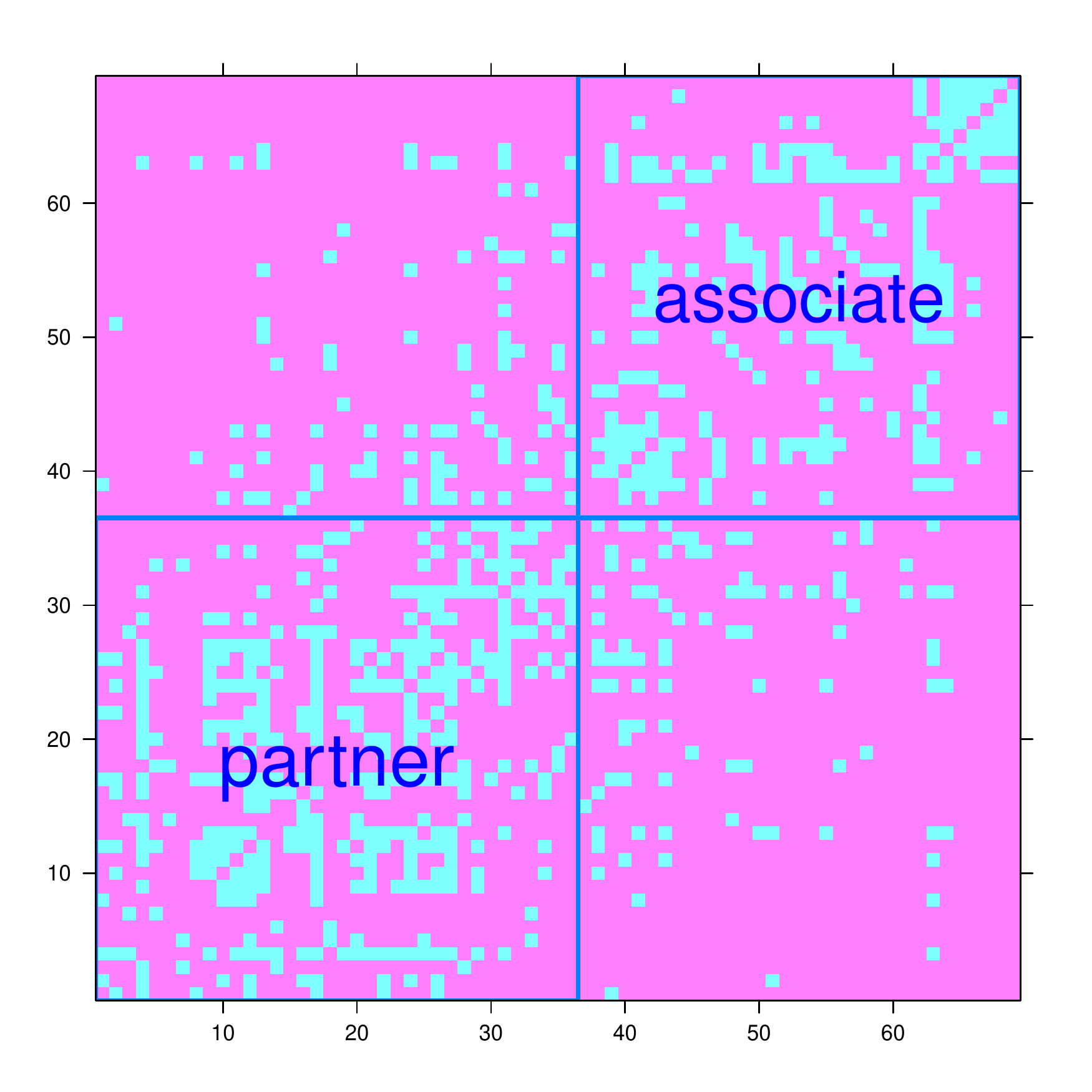} 
	\caption{$A$ by status}
	\end{subfigure}
	\begin{subfigure}[b]{0.18\textwidth}
	\includegraphics[width=\textwidth]{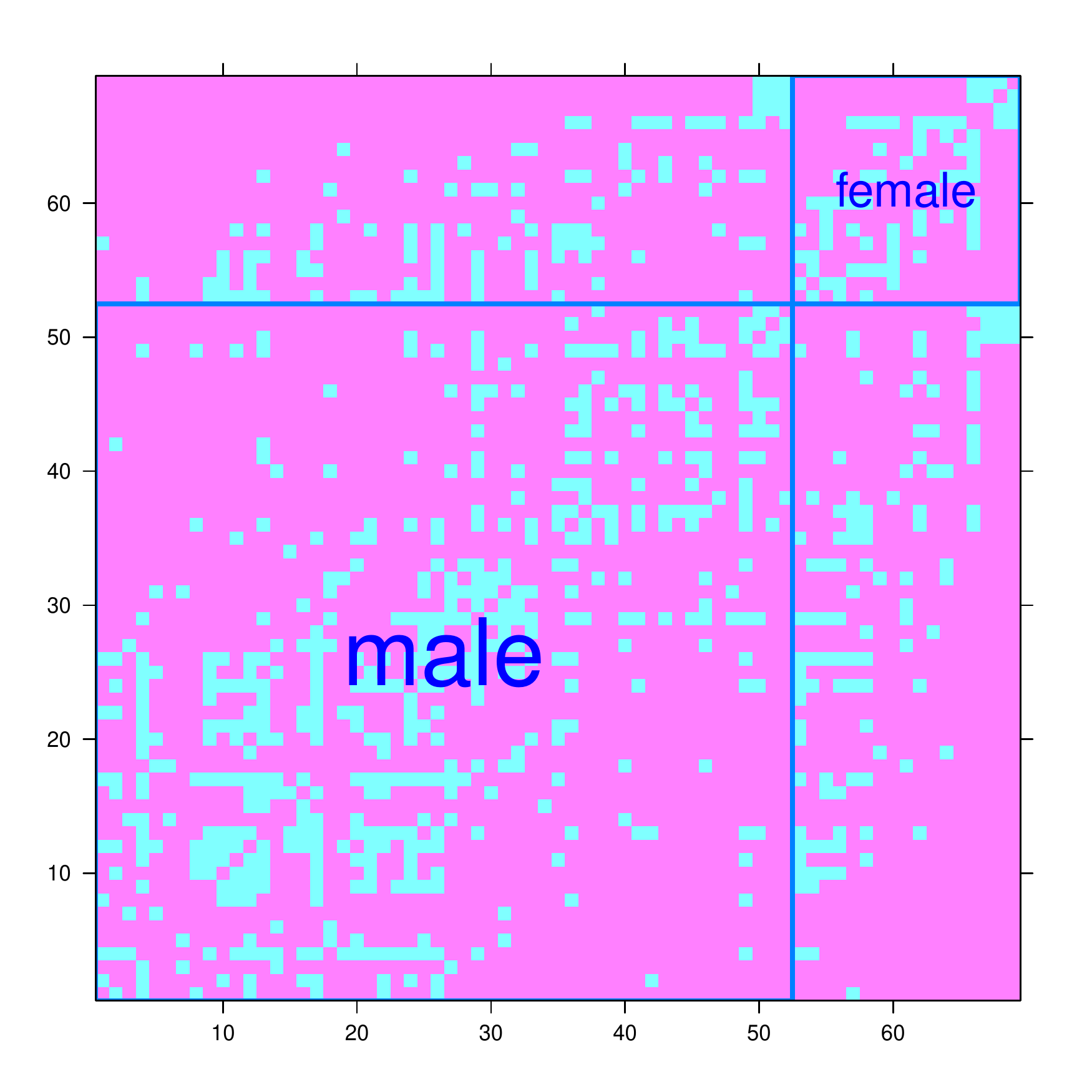}
	\caption{$A$ by gender}
	\end{subfigure}
	\begin{subfigure}[b]{0.18\textwidth}
	\includegraphics[width=\textwidth]{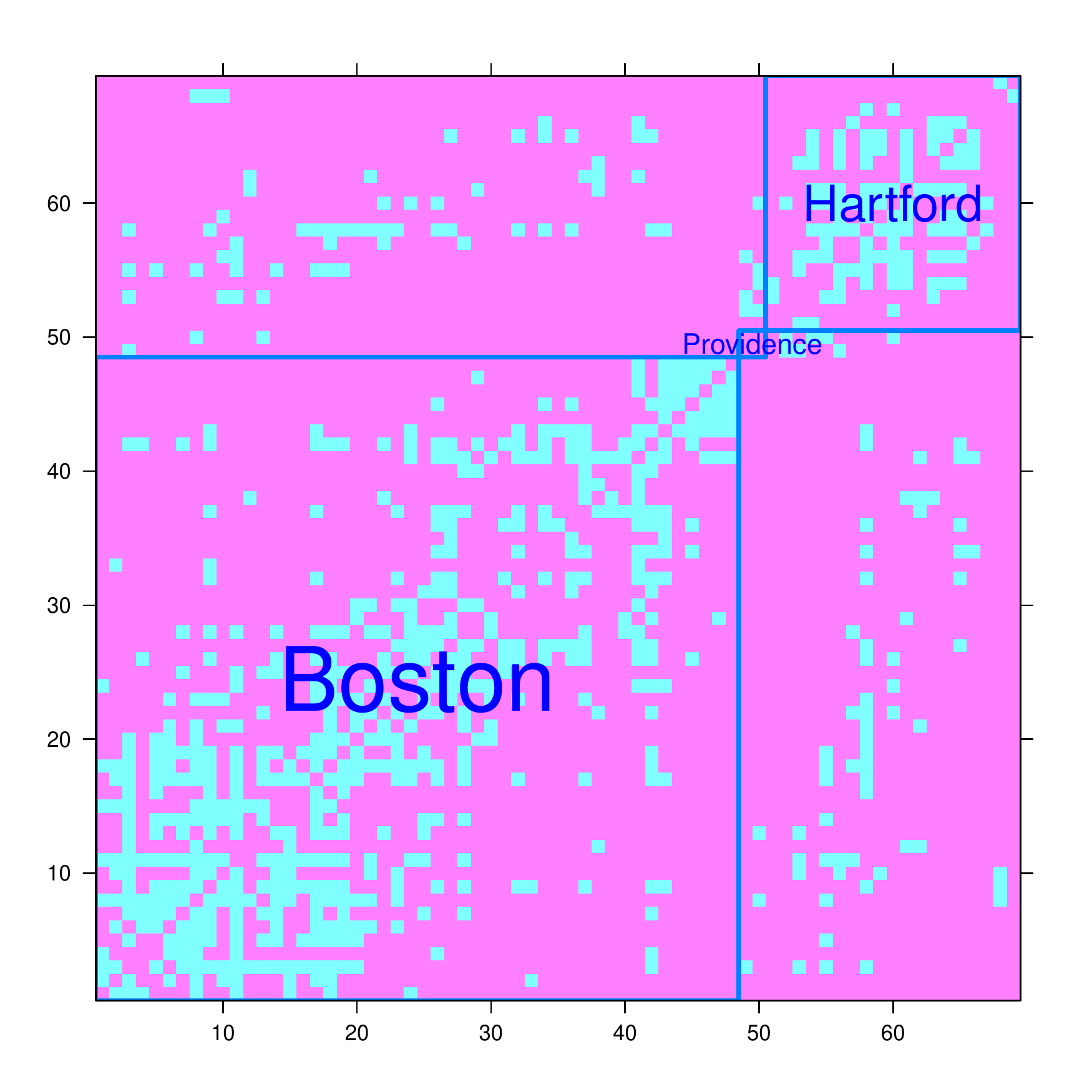} 
	\caption{$A$ by office}
	\end{subfigure}
	\begin{subfigure}[b]{0.18\textwidth}
	\includegraphics[width=\textwidth, page=4]{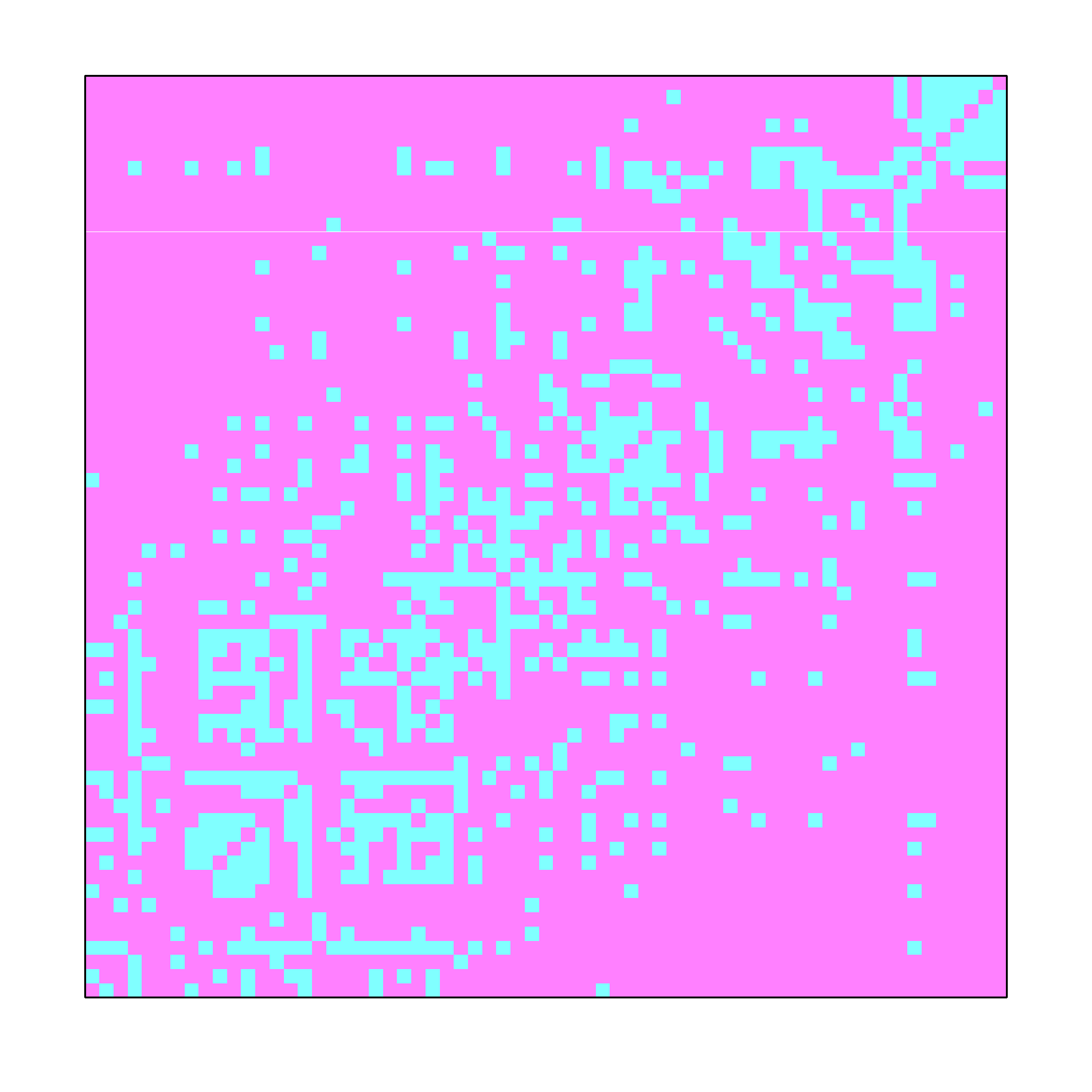}
	\caption{$A$ by years}
	\end{subfigure}
	\begin{subfigure}[b]{0.18\textwidth}
	\includegraphics[width=\textwidth, page=5]{fig/lawyers_grouping.pdf} 
	\caption{$A$ by age}
	\end{subfigure}
	\begin{subfigure}[b]{0.18\textwidth}
	\includegraphics[width=\textwidth]{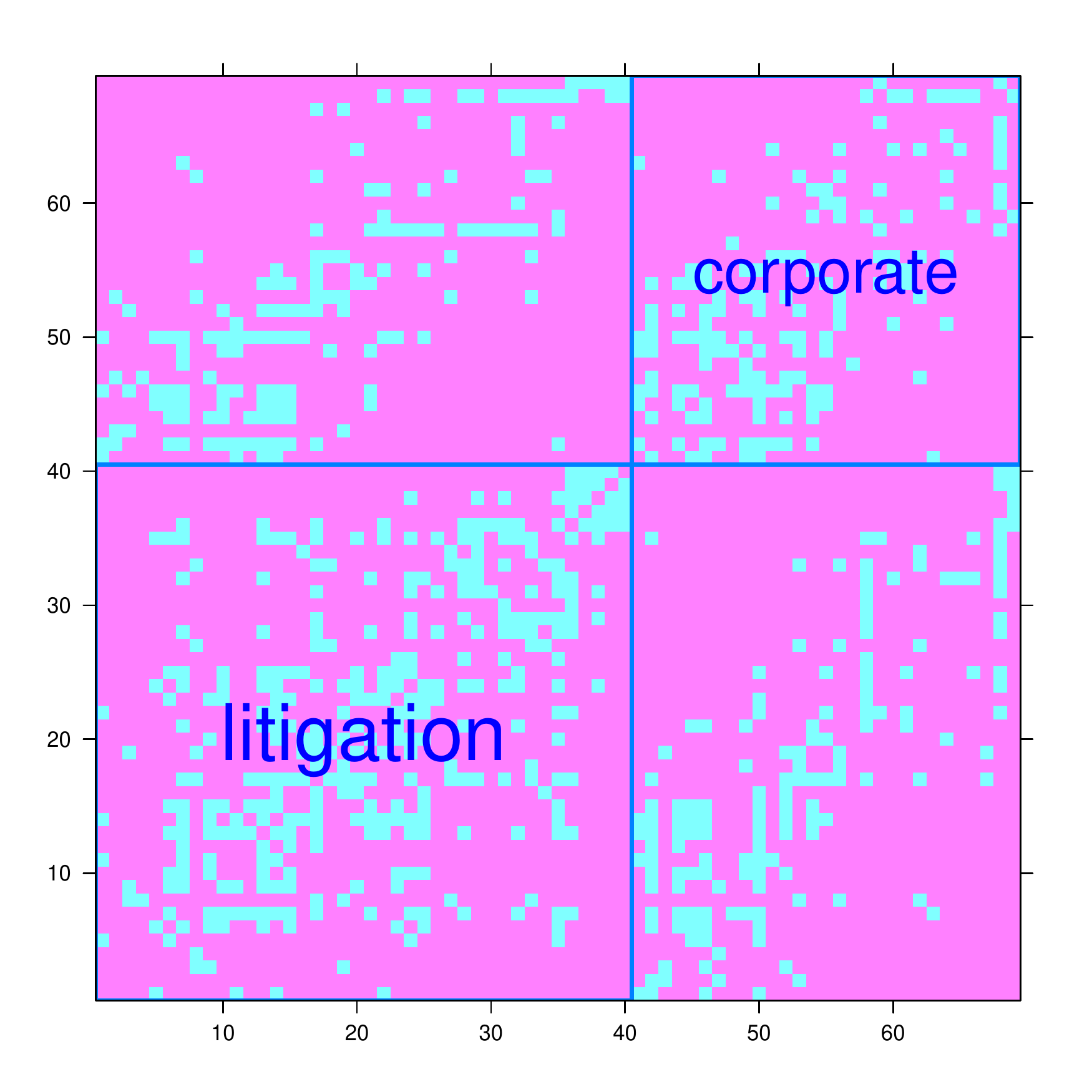} 
	\caption{$A$ by practice}
	\end{subfigure}
	\begin{subfigure}[b]{0.18\textwidth}
	\includegraphics[width=\textwidth]{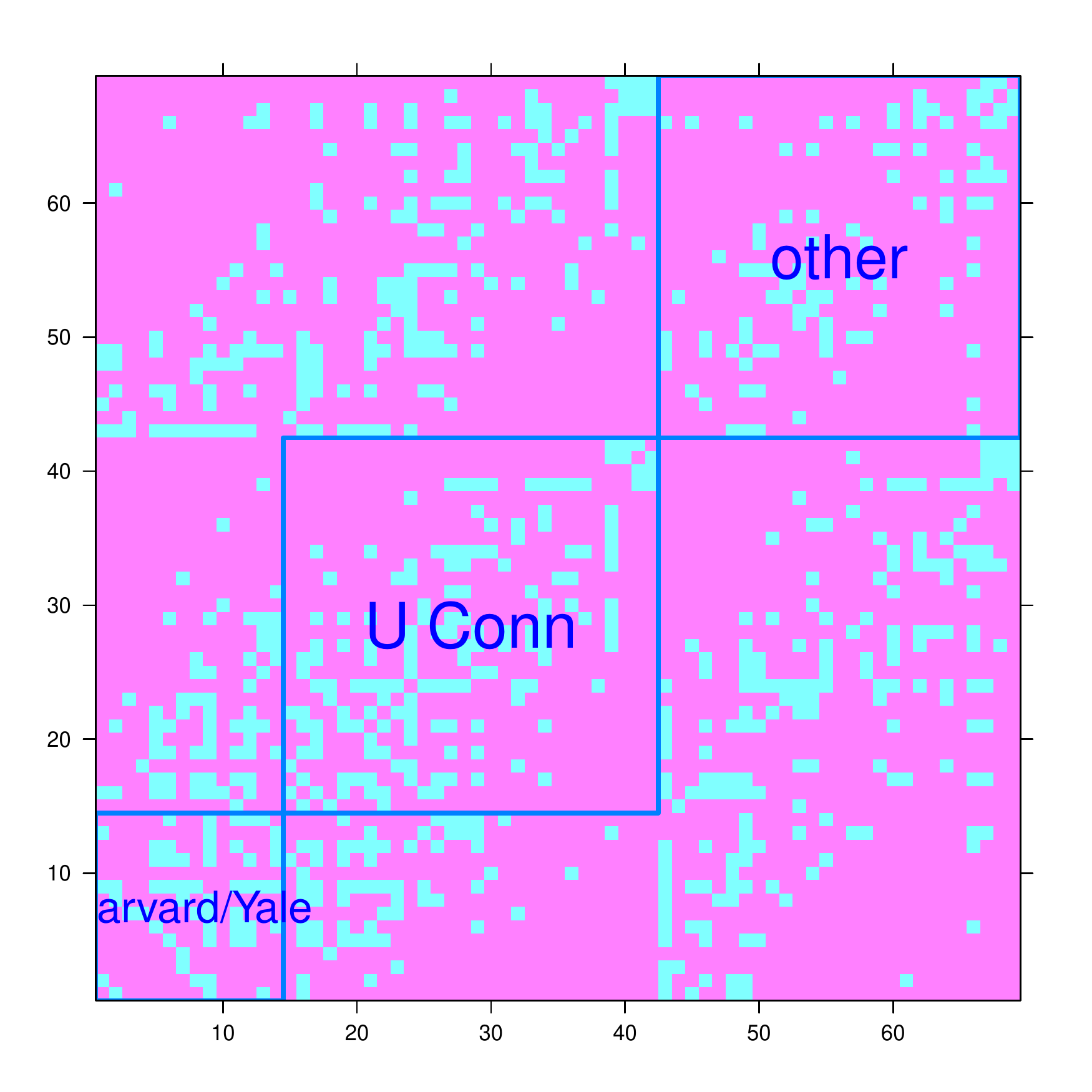} 
	\caption{$A$ by school}
	\end{subfigure}
	\begin{subfigure}[b]{0.18\textwidth}
		\includegraphics[width=\textwidth]{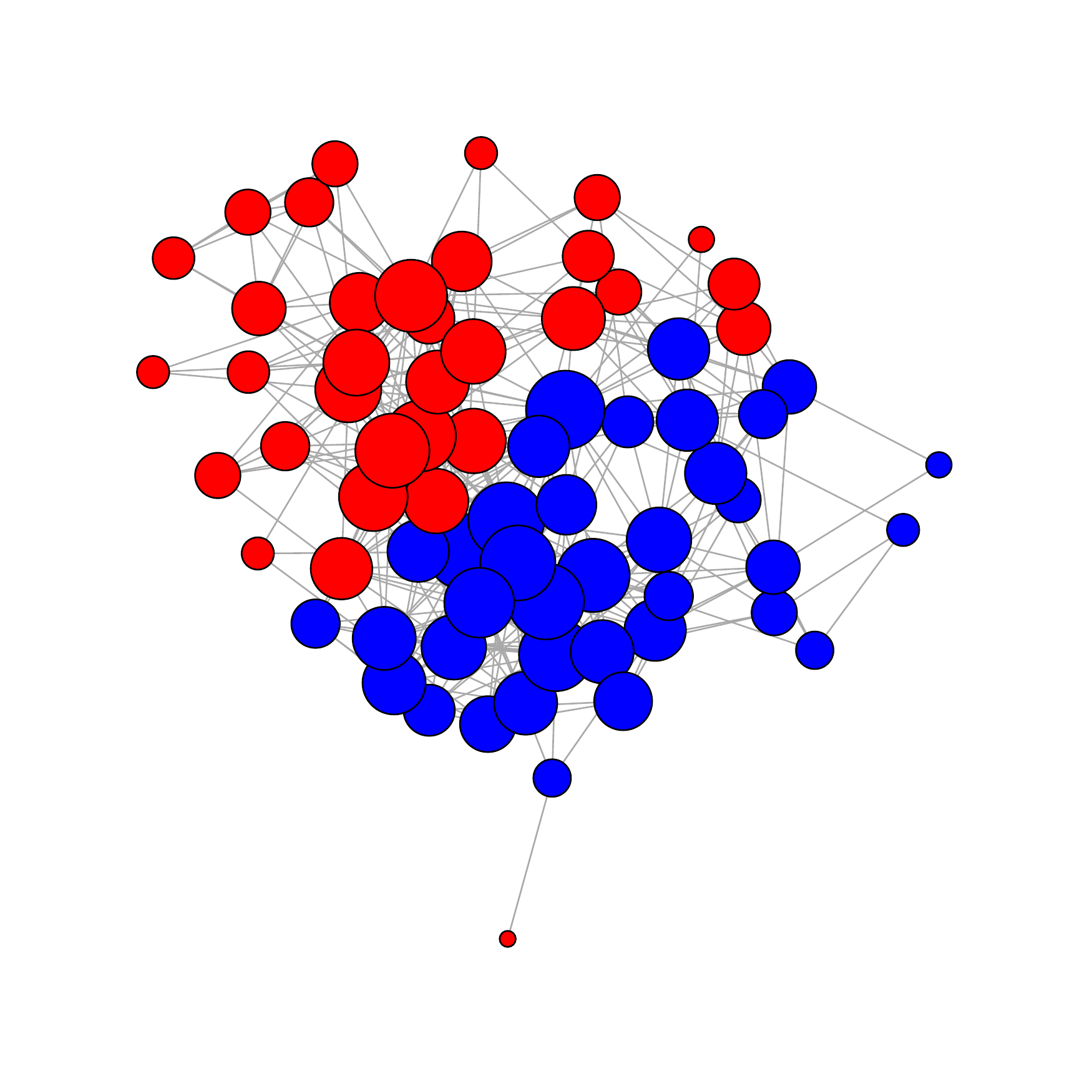}
		\caption{Status\\\ }
	\end{subfigure}
	\begin{subfigure}[b]{0.18\textwidth}
		\includegraphics[width=\textwidth]{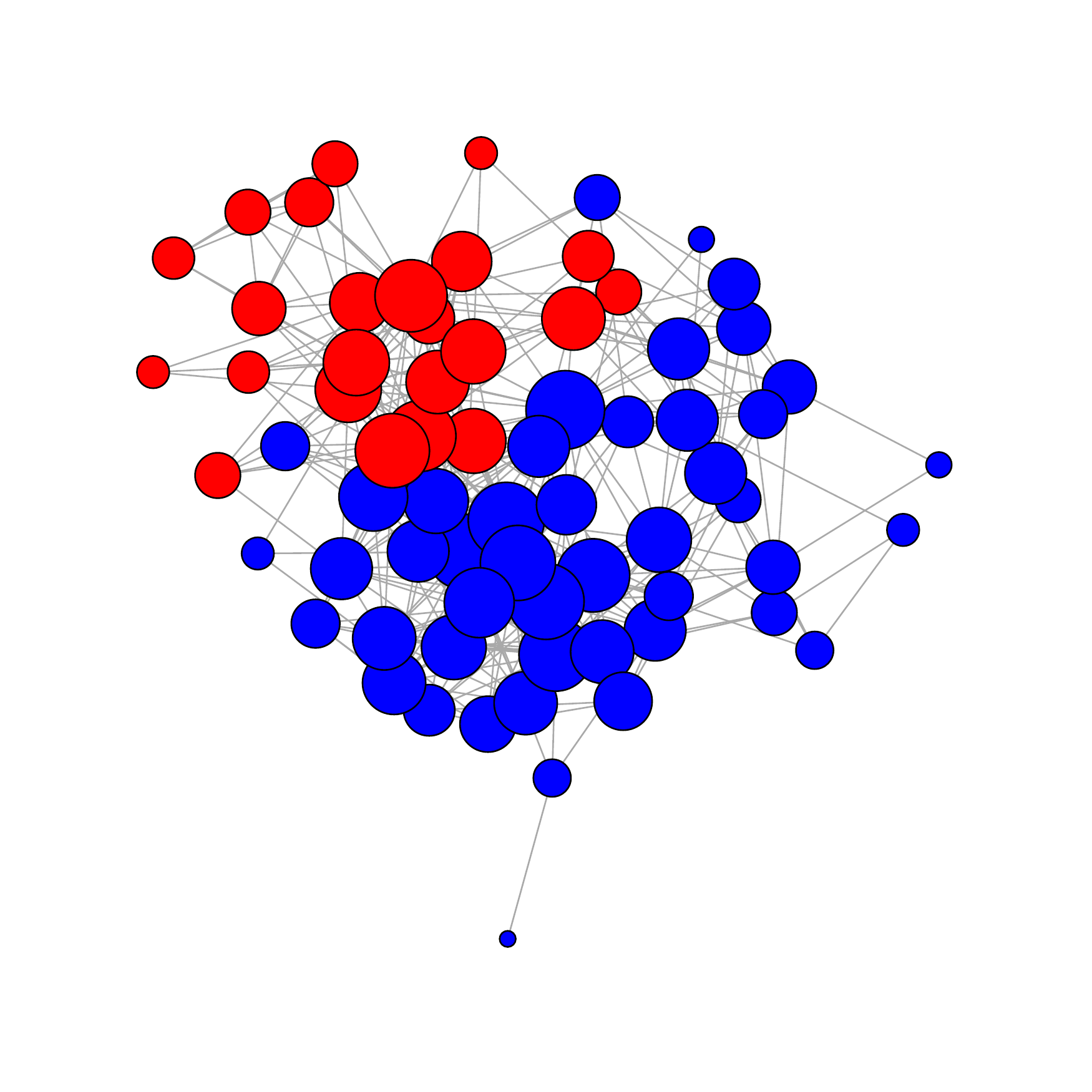}
		\caption{JCDC, $w_n=5$\\NMI=0.54}
	\end{subfigure}
	\begin{subfigure}[b]{0.18\textwidth}
		\includegraphics[width=\textwidth]{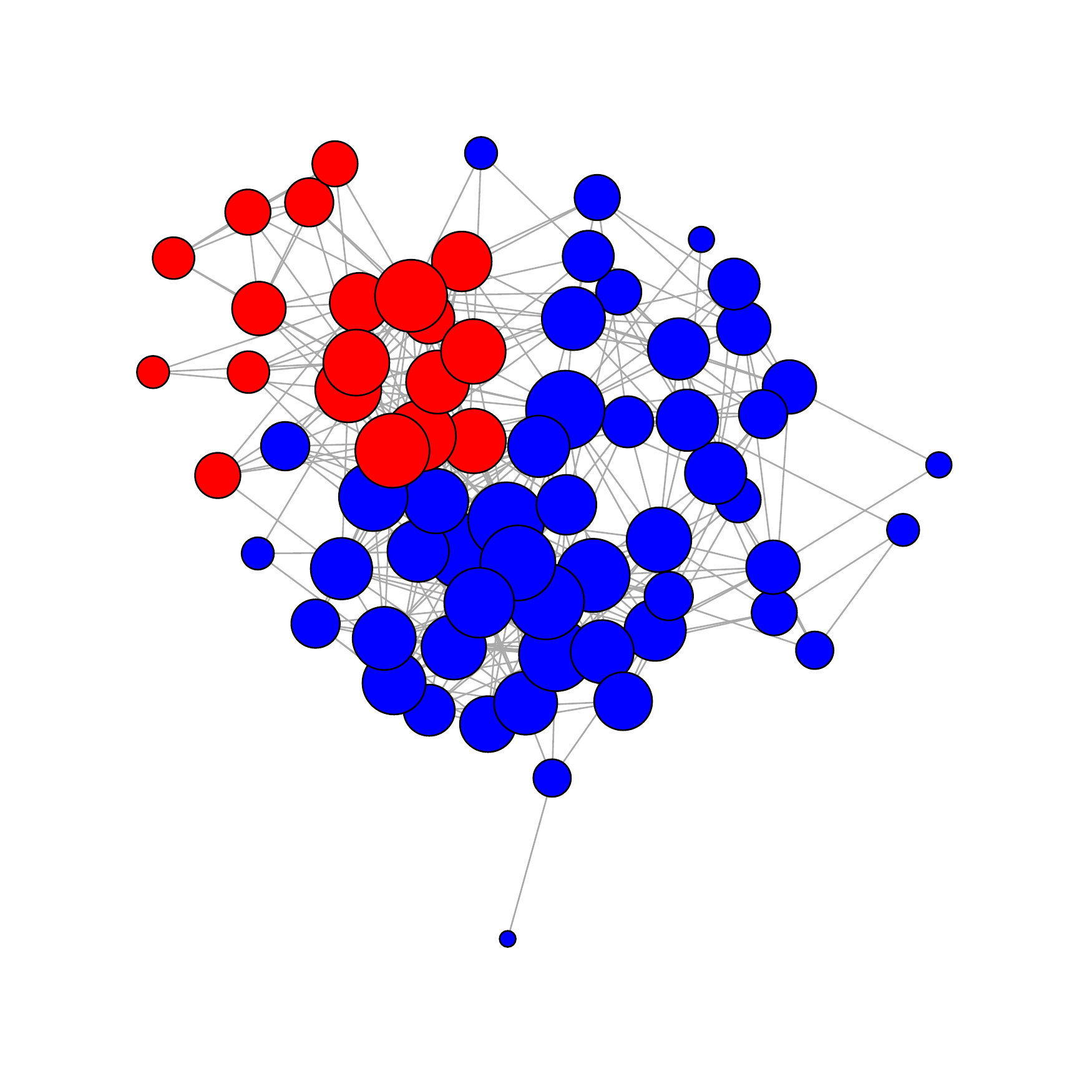}
		\caption{JCDC, $w_n=1.5$\\NMI=0.50}
	\end{subfigure}
	\begin{subfigure}[b]{0.18\textwidth}
		\includegraphics[width=\textwidth]{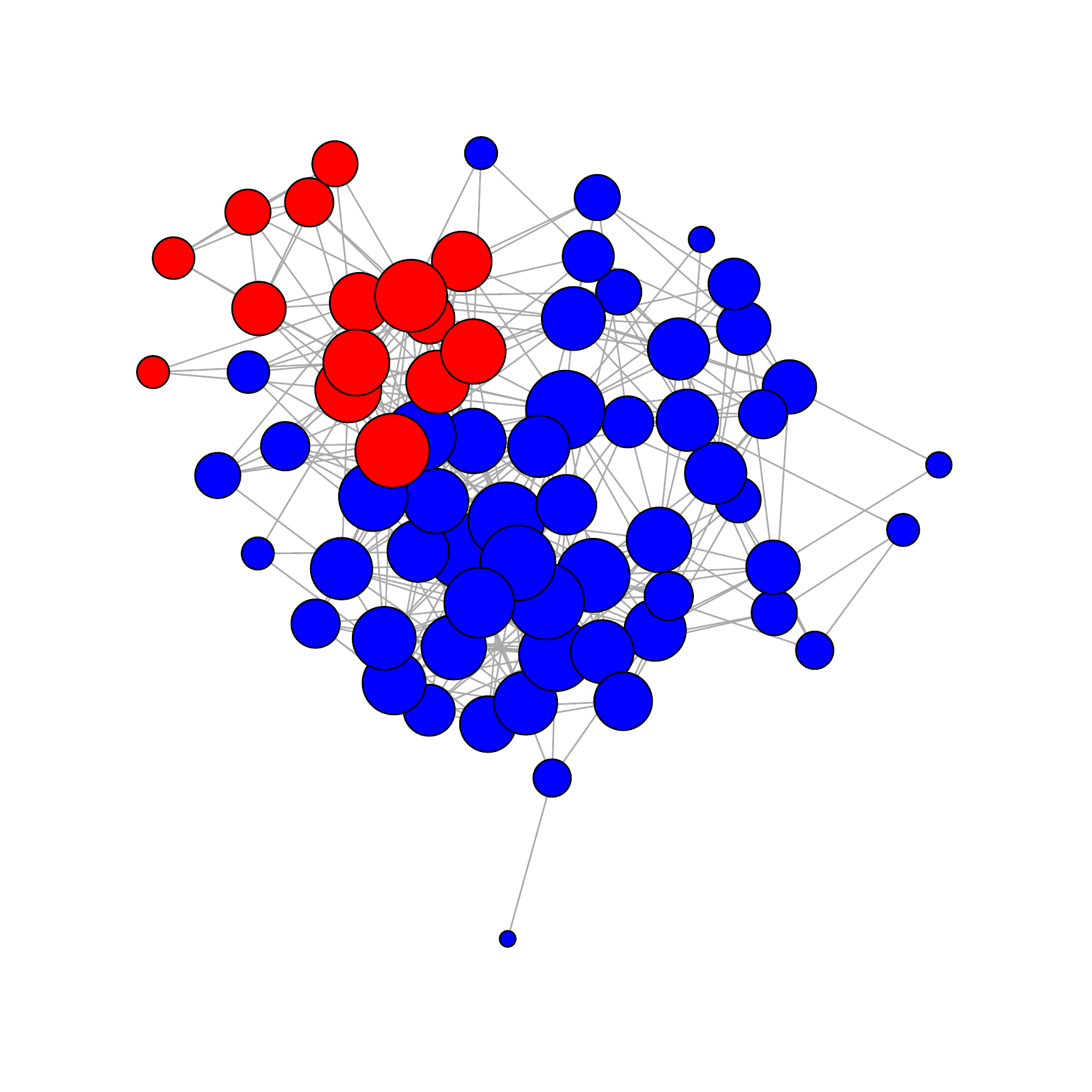}
		\caption{SC\\NMI=0.44}
	\end{subfigure}
	\begin{subfigure}[b]{0.18\textwidth}
		\includegraphics[width=\textwidth]{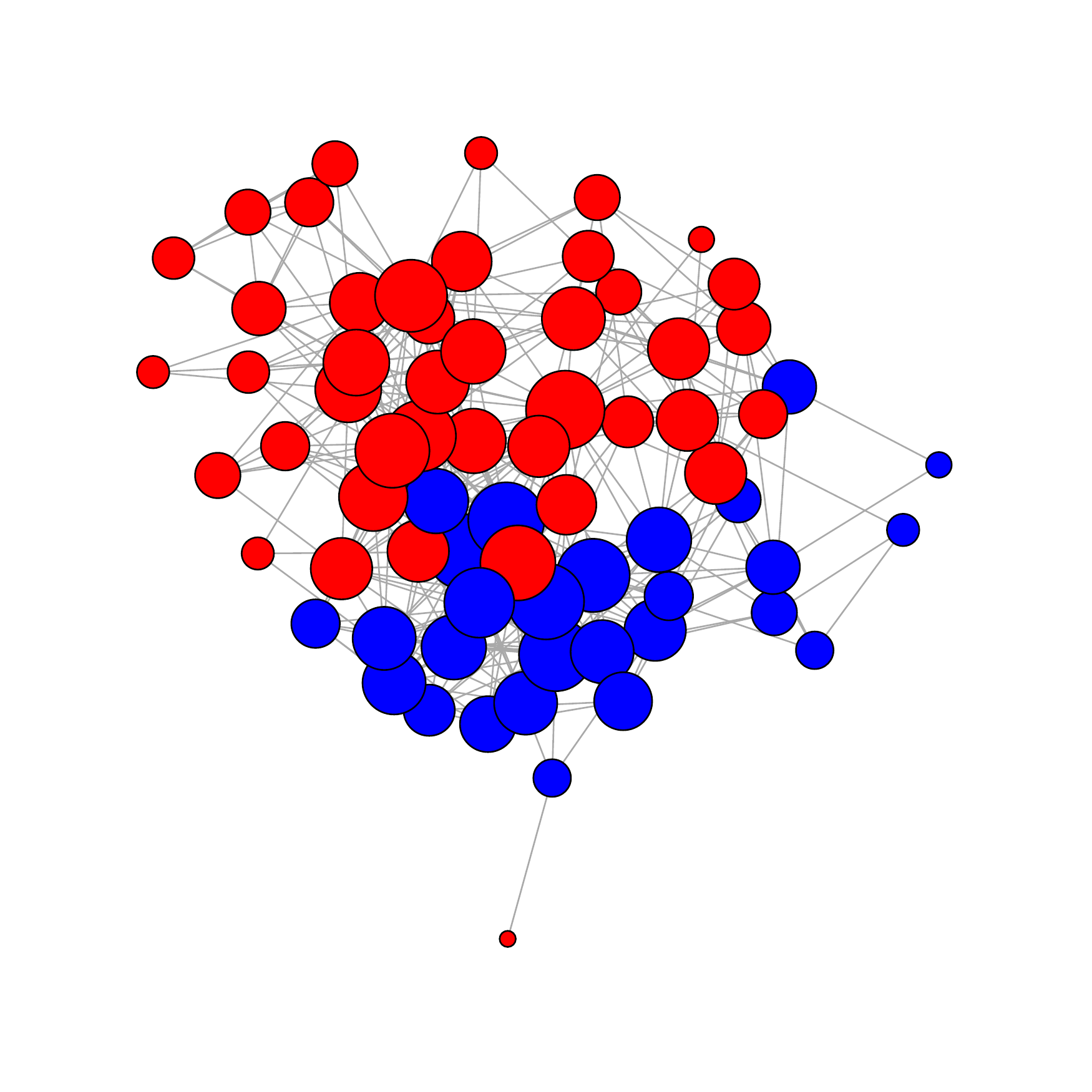}
		\caption{KM\\NMI=0.44}
	\end{subfigure}
	\begin{subfigure}[b]{0.18\textwidth}
		\includegraphics[width=\textwidth]{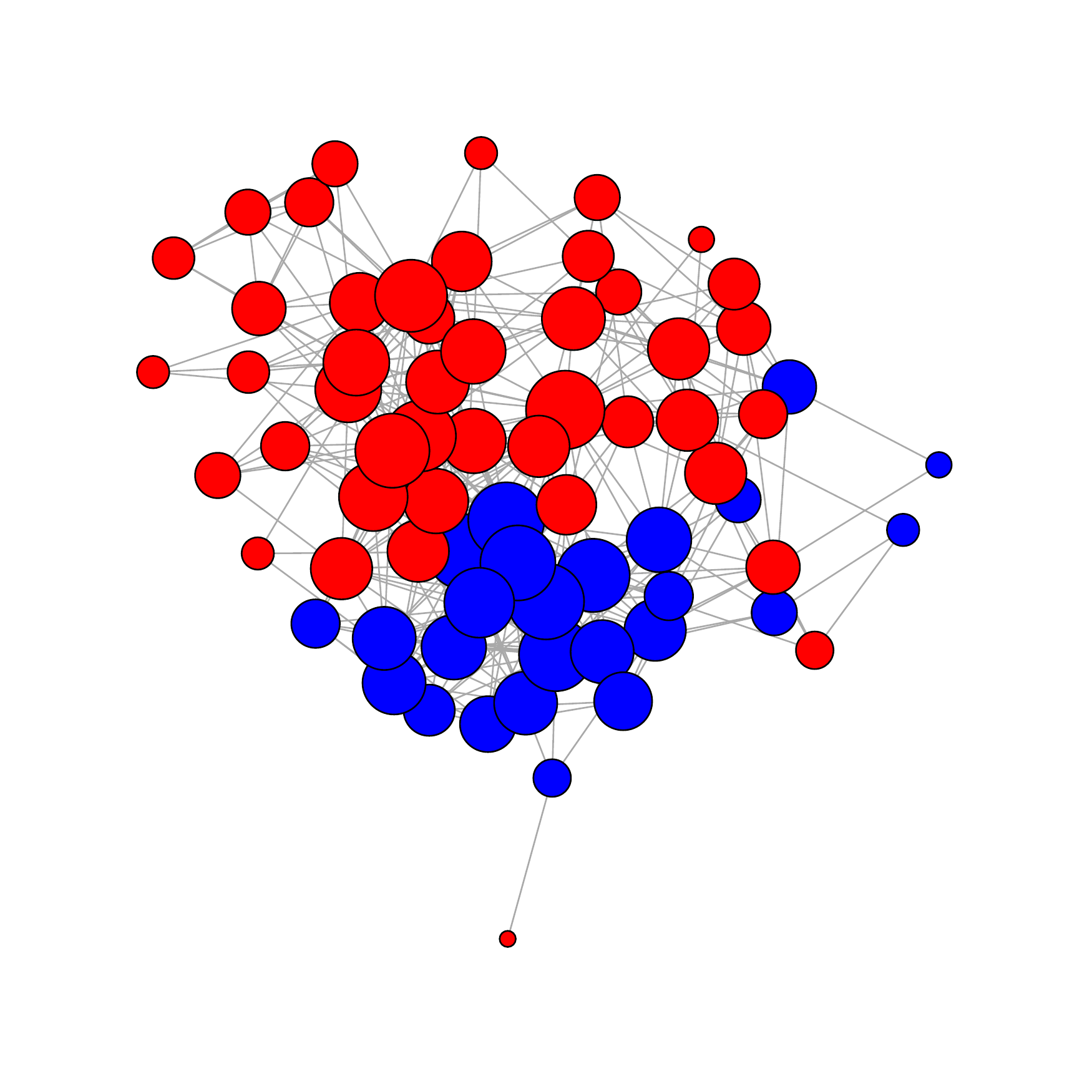}
		\caption{CASC\\NMI=0.49}
	\end{subfigure}
	\begin{subfigure}[b]{0.18\textwidth}
		\includegraphics[width=\textwidth]{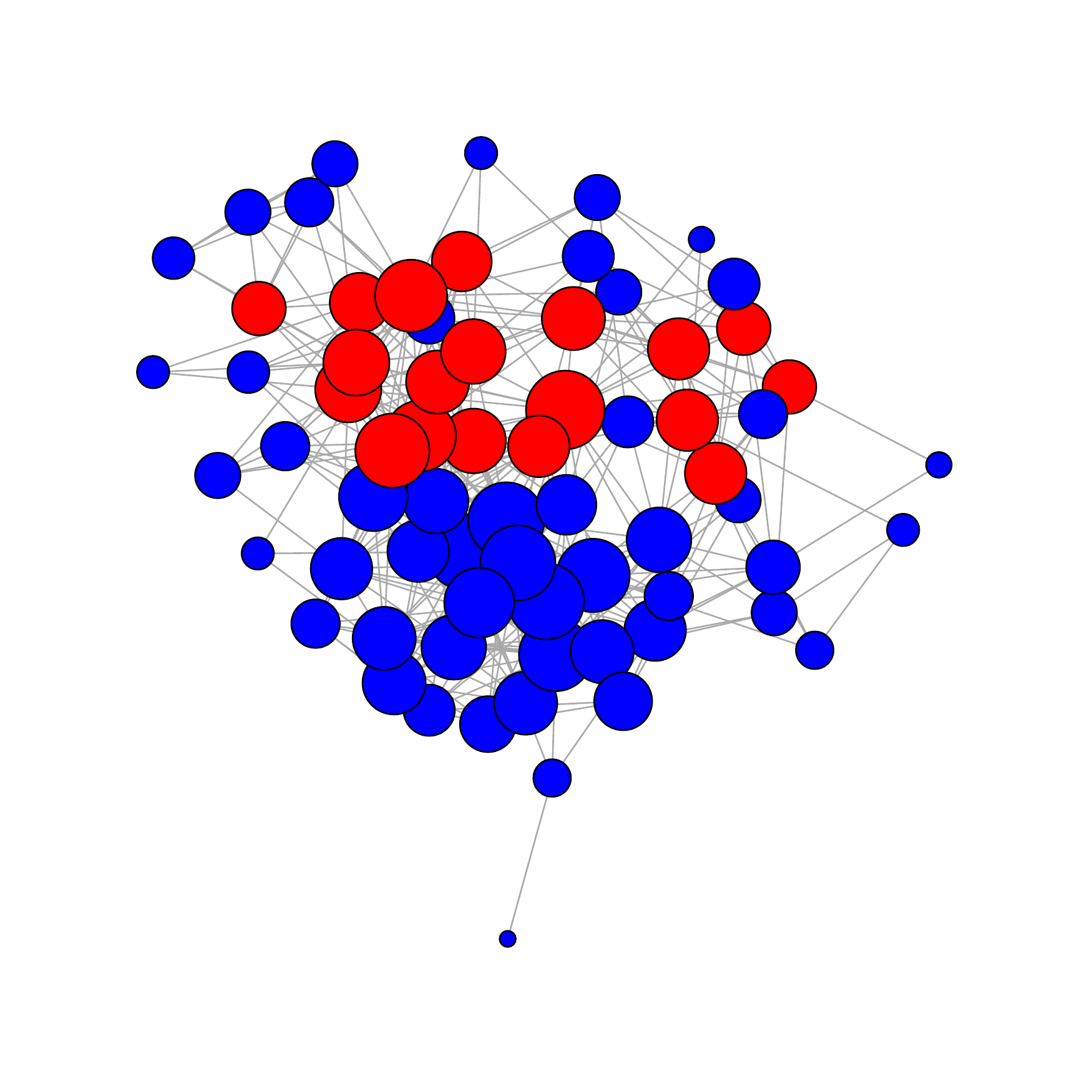}
		\caption{CESNA\\NMI=0.07}
	\end{subfigure}
	\begin{subfigure}[b]{0.18\textwidth}
		\includegraphics[width=\textwidth]{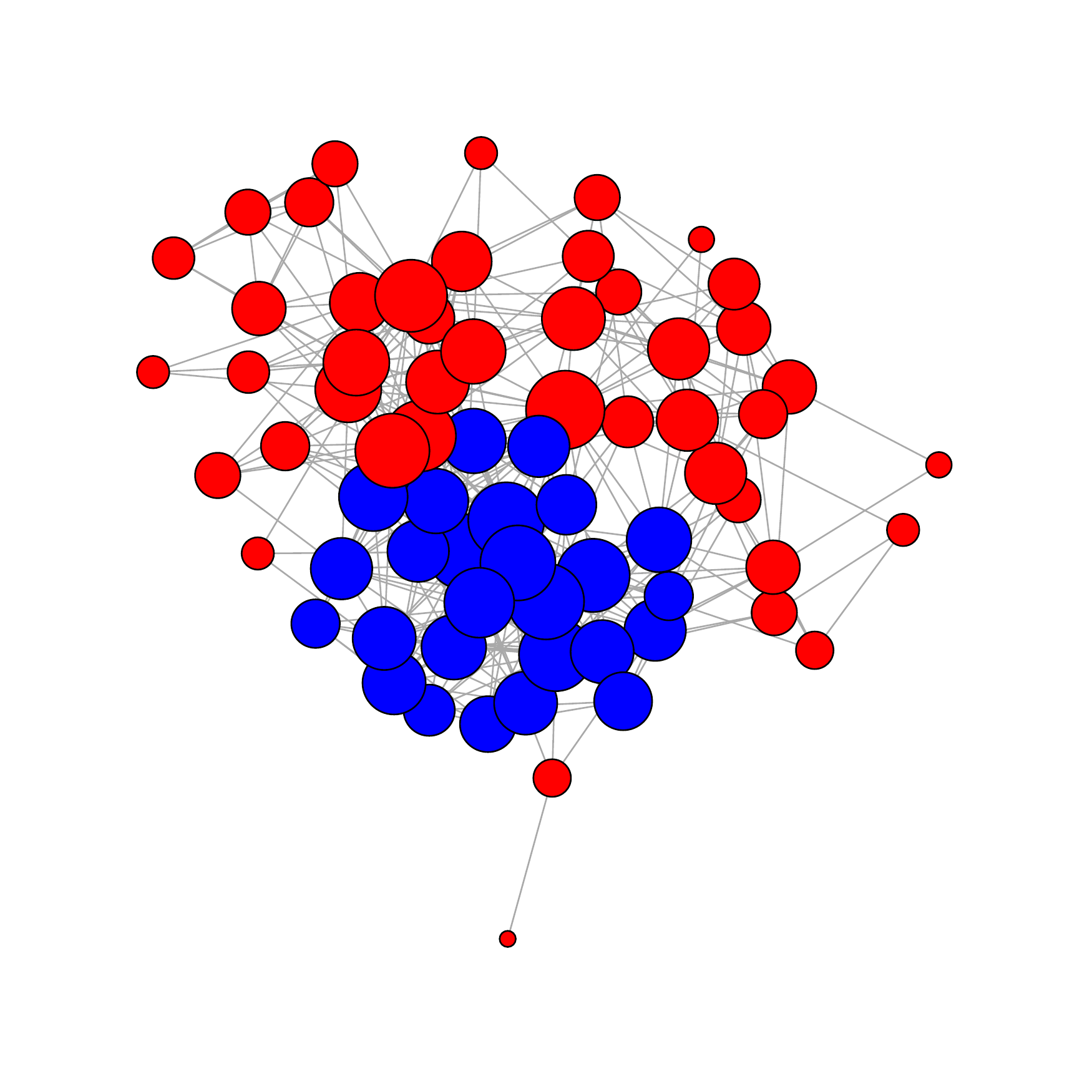}
		\caption{BAGC\\NMI=0.20}
	\end{subfigure}

	\caption{(a)-(g): adjacency matrix with nodes sorted by features; (h): network with nodes colored by status (blue is partner, red is associate); (i)-(n): community detection results from different methods. }\label{lawyers}
\end{figure}


Communities estimated by different methods are shown in Figure \ref{lawyers} (i)-(o), all run with $K = 2$. 
Spectral clustering and $K$-means have equal and reasonably high NMI values, indicating that both the adjacency matrix and node features contain community information.  JCDC obtains the highest NMI value, with $w_n=5$ performing slightly better than $w_n=1.5$.   
CASC  improves upon  spectral clustering by using the feature information, with NMI just slightly lower than that of JCDC with $w_n =1.5$. CESNA and BAGC have much lower NMI values, 
possibly because of hub nodes, or because they detect communities  corresponding to something other than status.

The estimated feature coefficients are shown in Table \ref{table:lawyers}.   Office location, years with the firm, and age appear to be the features most correlated with the community structure of status, for both partners and associates, which is natural.      Practice, school, and gender are less important, though it may be hard to estimate the influence of gender accurately since there are relatively few women in the sample.

\begin{table}[ht]
	\centering
\caption{Feature coefficients $\hat{\beta}_k$, JCDC with $w_n=5$.}
 \begin{tabular}{lllllll}
Comm.	& gender	& office& years	& age		& practice		& school\\\hline
partner		& 0.290 & 0.532 & 0.212	& 0.390 & 0.095 & 0.000\\	
associate			& 0.012 & 0.378 & 0.725	& 0.320 & 0.118 & 0.097\\\hline
\end{tabular}
\label{table:lawyers}
\end{table}


%

\section{Discussion}
Our method incorporates feature-based weights into a community detection criterion, improving detection compared to using just the adjacency matrix or the node features alone, if the cluster structure in the features is related to the community structure in the adjacency matrix.   It has the ability to estimate coefficients for each feature within each community and thus learn which features are correlated with the community structure.  This ability guards against including noise features which can mislead community detection.    The community detection criterion we use is designed for assortative community structure, with more connections within communities than between, and benefits the most from using features that have a similar clustering structure.   

This work can be extended in several directions.   Variation in node degrees, often modeled via the degree-corrected stochastic block model \cite{Karrer10} which regards degrees as independent of community structure, may in some cases be correlated with node features, and accounting for degree variation jointly with features can potentially further improve detection.  Another useful extension is to overlapping communities.  One possible way to do that is to optimize each summand in JCDC \eqref{JCDC} separately and in parallel, which can create overlaps, but would require careful initialization.  Statistical models that specify exactly how features are related to community assignments and edge probabilities can also be useful, though empirically we found no such standard models that could compete with the non-model-based JCDC on real data.   This suggests that more involved and perhaps data-specific modeling will be necessary to accurately describe real networks, and some of the techniques we proposed, such as community-specific feature coefficients, could be useful in that context.

\section*{Acknowledgments}
	E.L. is partially supported by NSF grants DMS-1106772 and DMS-1159005. J.Z. is partially supported by a NSF grant DMS-1407698 and a NIH grant R01GM096194.
	
\section*{Appendix}
\section*{A.1 \ \  Choice of tuning parameters}
\label{sec:tuning}

The JCDC method involves two user-specified tuning parameters, $\alpha$ and $w_n$.  In this section, we investigate the impact of these tuning parameters on community detection results via numerical experiments.  

First we study the impact of $\alpha$, which determines the algorithm's preference for larger or smaller communities.  We study its effect on the estimated community size as well as on the accuracy of estimated community labels.  We generate data from a stochastic block model with $n=120$ nodes and $K=2$ communities of sizes $n_1$ and $n_2 = n - n_1$.  We set the within-community edge probabilities to  $0.3$ and between-community edge probabilities to $0.15$, and vary $n_1$ from 60 to 110.   Since $\alpha$ 
is not related to feature weights, we set features to a constant, resulting in unweighted networks.  The results are averaged over 50 replications and shown in Figure \ref{fig_2}.  

\begin{figure}[h!]
\centering
\begin{subfigure}[b]{0.49\textwidth}
\centering
\includegraphics[width=\textwidth, page=1]{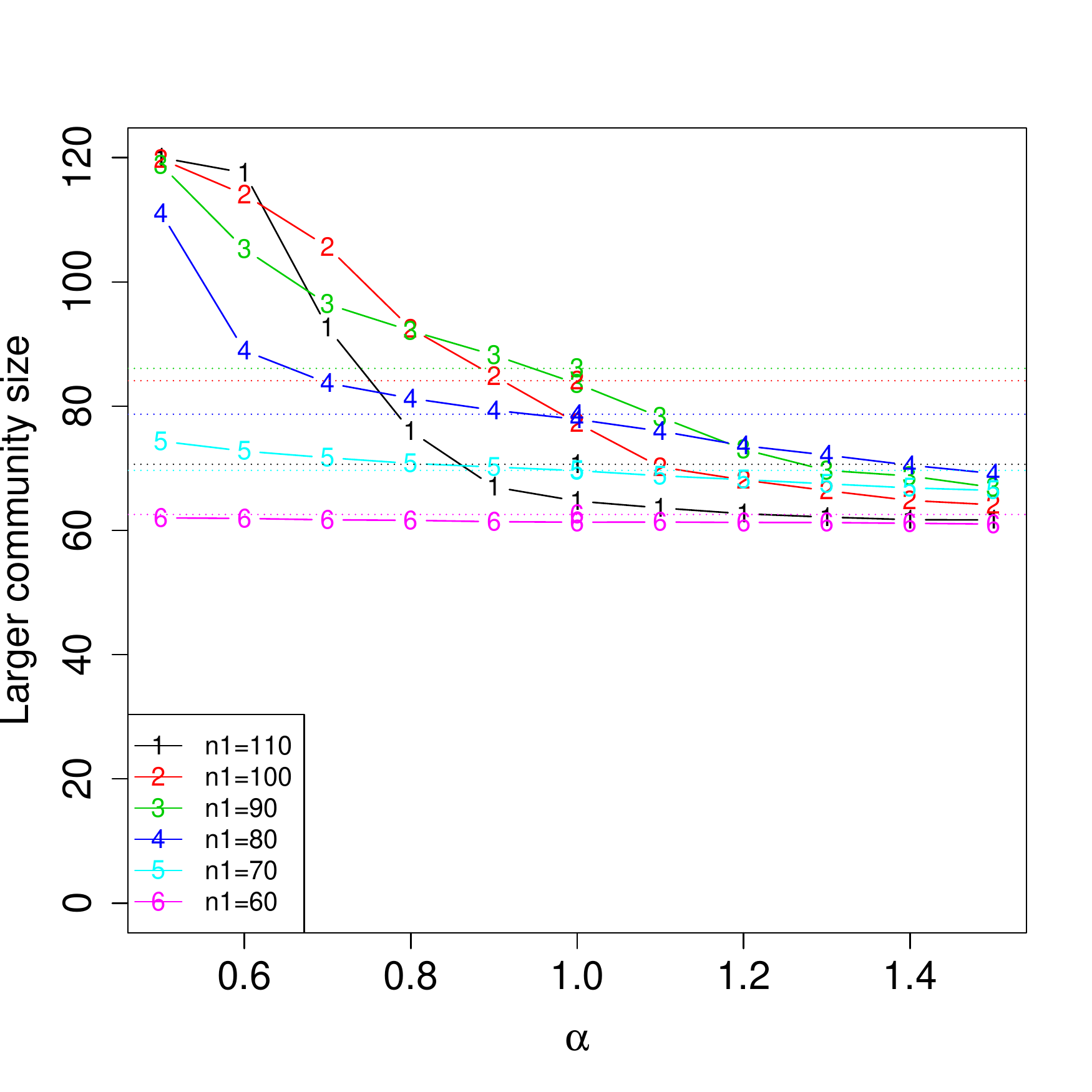}
\caption{The size of the larger estimated community}
\label{fig_2a}
\end{subfigure}
\begin{subfigure}[b]{0.49\textwidth}
\centering
\includegraphics[width=\textwidth, page=2]{fig/alpha.pdf}
\caption{Community detection accuracy} 
\label{fig_2b}
\end{subfigure}
\caption{(a) The size of the larger estimated community as a function of the tuning parameter $\alpha$.   (b)  Estimation accuracy measured by NMI as a function of the tuning parameter $\alpha$.  Solid lines correspond to JCDC and horizontal dotted lines correspond to spectral clustering, which does not depend on $\alpha$.}
\label{fig_2}
\end{figure}

We report the size of the larger estimated community in Figure \ref{fig_2}(a), and the accuracy of community detection as measured by normalized mutual information (NMI) in Figure \ref{fig_2}(b). For comparison, we also record the results from spectral clustering (horizontal lines in Figure \ref{fig_2}), which do not depend on $\alpha$. 
When communities are balanced ($n_1 = n_2 = 60$), JCDC performs well for all values of $\alpha$, producing balanced communities and uniformly outperforming spectral clustering in terms of NMI.  
In general, larger values of $\alpha$ in JCDC result in more balanced communities, while smaller $\alpha$'s tend to produce a large and a small community.
In terms of community detection accuracy, Figure \ref{fig_2}(b) shows that the JCDC method outperforms spectral clustering over a range of values of $\alpha$, and this range depends on how unbalanced the communities are.  For simplicity and ease of interpretation, we set $\alpha = 1$ for all the simulations and data analysis reported in the main manuscript;  however, it can be changed by the user if information about community sizes is available.  


Next, we investigate the impact of $w_n$, which controls the influence of features.  To study the trade-off between the two sources of information (network and features),  we generate two different community partitions.  Specifically, we consider two communities of sizes $n_1$ and $n_2$, with $n_1 + n_2 = n  = 120$. We generate two label vectors $c^A$ and $c^F$, with $c^A_i=1$ for $i=1,\ldots, n_1$ and $c^A_i=2$ for $i=n_1+1,\ldots, n$, while the other label vector has $c^F_i=1$ for $i=1,\ldots, n_2$ and $c^F_i=2$ for $i=n_2+1, \ldots, n$.  Then the edges are generated from the stochastic block model based on $c^A$, and the node features are generated based on $c^F$.  We generate two node features: one feature is sampled from the distribution $N(\mu, 1)$  if $c_i^F=1$ and $N(0, 1)$ if $c_i^F=2$; the other feature is sampled from $N(0, 1)$ if $c_i^F=1$ and $N(-\mu, 1)$ if $c_i^F=2$. We fix $\mu=3$ and set $\alpha=1$, as discussed above. We set the within- and between-community edge probabilities to 0.3 and 0.15, respectively, same as in the previous simulation, and vary the value of $w_n$ from $1.1$ to $10$.  Finally, we look at the the agreement between the estimated communities $\hat{e}$ and $c_A$ and $c_F$, as measured by normalized mutual information.  The results are shown in Figure \ref{fig_3}.  

\begin{figure}[h!]
\centering
\includegraphics[width=0.5\textwidth]{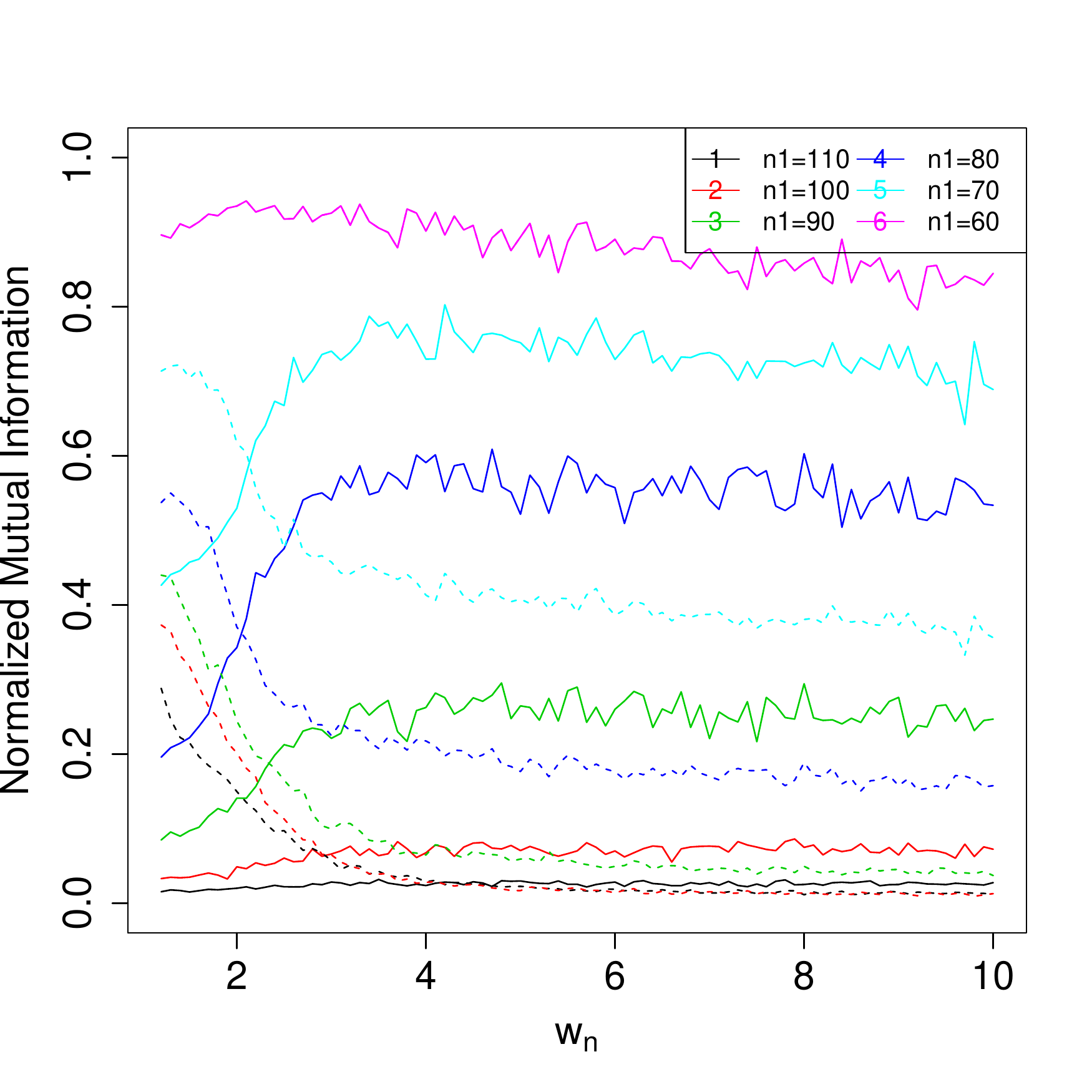}
\caption{MNI between the estimated community structure $\hat{e}$ and the network community structure $c_A$ (solid lines) and the feature community structure $c_F$ (dotted lines).   Note that when $n_1=n_2=60$, $c^A=c^F$, so the solid and dotted lines coincide.}
\label{fig_3}
\end{figure}

As we expect,  smaller values of $w_n$ give more influence to features and thus the estimated community structure agrees better with $c^F$ than with $c^A$.  As $w_n$ increases, the estimated $\hat{e}$ becomes closer to $c^A$.    In the manuscript, we compare two values of $w_n$, 1.5 and 5.


\section*{A.2 \ \  Proofs}
We start with summarizing notation. Let $\E_1, \dots, \E_K$ be the  estimated communities corresponding to the label vector $e$, and ${\cal C}_1, \ldots, {\cal C}_K$  the true communities corresponding to the label vector $c$.   
Recall we estimate $e$ by maximizing the criterion $R$ over $e$ and $\beta$, where
\begin{equation*}
	R(e, \beta; w_n) = \sum_{k=1}^K\frac{1}{|\E_k|^\alpha}\sum_{i, j\in \E_k} A_{ij}W(\phi_{ij}, \beta_k; w_n)  \ , 
\end{equation*}
and define
\begin{equation*}
	\hat{e} = \arg\max_e\left( \max_\beta R(e, \beta; w_n) \right) \ , 
\end{equation*}
where $\hat{e}$ and the corresponding $\hat \beta$ are defined up to a permutation of community labels.   Recall that we assumed $A$ and $F$ are conditionally independent given $c$ and defined $R^0$, the ``population version'' of $R$,  as 
\begin{equation*}
	R^0(e, \beta; w_n) = \sum_{k=1}^K\frac{1}{|\E_k|^\alpha}\sum_{i, j\in \E_k} \rho_n P_{c_ic_j}\mathbb{E}[W(\phi_{ij}, \beta_k; w_n)] \ .
\end{equation*}
The expectation in $R^0$ is taken with respect to the distribution of node features, which determine the similarities $\phi_{ij}$.  

\begin{lemma}\label{lemma_1}
Under conditions \ref{condition_1} and \ref{condition_2}, if $w_n\rho_n\to \infty$ and $0<\alpha\leq 2$, we have
\begin{align*}
  \max_{e,\beta} \frac{\left| R(e, \beta; w_n) - R^0(e, \beta; w_n) \right|}{w_n \rho_n n^{2-\alpha}} = O_p\left(\frac{1}{\sqrt{w_n\rho_n}}\right) \ .
\end{align*}
\end{lemma}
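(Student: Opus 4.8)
The plan is to peel off the single dominant stochastic term — the fluctuation driven by the adjacency matrix — and to argue that everything else is of strictly smaller order. Using the rearrangement \eqref{break_JCDC_to_two_terms} and $W(\phi_{ij},\beta_k;w_n)=w_n-e^{-\langle\phi_{ij},\beta_k\rangle}$, I would decompose
\begin{equation*}
	R(e,\beta;w_n)-R^0(e,\beta;w_n)=w_n\bigl(R_1(e)-R_1^0(e)\bigr)-\bigl(g(e,\beta)-g^0(e,\beta)\bigr),
\end{equation*}
where $R_1(e)=\sum_k|\E_k|^{-\alpha}\sum_{i,j\in\E_k}A_{ij}$ is the network-only criterion \eqref{marginal_criterion}, $g(e,\beta)=\sum_k|\E_k|^{-\alpha}\sum_{i,j\in\E_k}A_{ij}e^{-\langle\phi_{ij},\beta_k\rangle}$, and $R_1^0,g^0$ are the corresponding conditional expectations given $c$. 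Condition~\ref{condition_1} gives $|\langle\phi_{ij},\beta_k\rangle|\le M_\phi M_\beta$, so all the weights $e^{-\langle\phi_{ij},\beta_k\rangle}$ lie in $[e^{-M_\phi M_\beta},e^{M_\phi M_\beta}]$; the point is that $g$ carries weights of order $1$, whereas the first term carries the factor $w_n$ explicitly.

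The core estimate is $\max_e|R_1(e)-R_1^0(e)|=O_p\bigl(\rho_n^{1/2}n^{3/2-\alpha}\bigr)$. Writing $\tilde A=A-\mathbb{E}[A]$, one has $R_1(e)-R_1^0(e)=\langle\tilde A,M(e)\rangle$ with $M(e)=\sum_k|\E_k|^{-\alpha}\mathbf 1_{\E_k}\mathbf 1_{\E_k}^\top$, so $|R_1(e)-R_1^0(e)|\le\|\tilde A\|_{\mathrm{op}}\|M(e)\|_*\le K\|\tilde A\|_{\mathrm{op}}n^{(1-\alpha)_+}$ uniformly over all $K^n$ label vectors, and $\|\tilde A\|_{\mathrm{op}}=O_p(\sqrt{n\rho_n})$ finishes it. The alternative that avoids a spectral input is the argument of \cite{Bickel&Chen2009}: a Bernstein inequality for each fixed $e$ (conditionally bounded summands, variance proxy $O(\rho_n\sum_k|\E_k|^{2-2\alpha})$) followed by a union bound over label vectors, with those having a community of vanishing relative size handled separately because for them $R_1$ and $R_1^0$ are individually negligible. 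Multiplying by $w_n$ and dividing by $w_n\rho_n n^{2-\alpha}$, this term is $O_p(1/\sqrt{n\rho_n})$, which is $O_p(1/\sqrt{w_n\rho_n})$ as soon as $w_n=O(n)$ — and in that case $w_n\rho_n\to\infty$ also forces $n\rho_n\to\infty$, legitimizing the operator-norm bound just used.

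For the remainder $g-g^0$ I would condition on $F$ (valid since $A$ and $F$ are independent given $c$) and split each summand as $e^{-\langle\phi_{ij},\beta_k\rangle}(A_{ij}-\rho_nP_{c_ic_j})+\rho_nP_{c_ic_j}\bigl(e^{-\langle\phi_{ij},\beta_k\rangle}-\mathbb{E}e^{-\langle\phi_{ij},\beta_k\rangle}\bigr)$. The first piece, for fixed $\beta$, is again of the form $\langle\tilde A,\cdot\rangle$ with $O(1)$ weights, hence $O_p(\rho_n^{1/2}n^{3/2-\alpha})$ — smaller than the leading term by the full factor $w_n$. The second is a sum over pairs of the mean-zero feature quantities $e^{-\langle\phi_{ij},\beta_k\rangle}-\mathbb{E}[\cdot]$ rescaled by $\rho_n|\E_k|^{-\alpha}$; the pairs share nodes, but the whole sum is a function of the i.i.d.\ node features, so a bounded-differences inequality applies — resampling one $f_i$ changes the $k$th block by $O(\rho_n|\E_k|^{1-\alpha})$ — giving $O_p(\rho_n n^{2-\alpha})$, again lower order because $w_n\ge 1$. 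Since $e^{-\langle\phi_{ij},\beta_k\rangle}$ is $M_\phi e^{M_\phi M_\beta}$-Lipschitz in $\beta_k$ on the feasible ball, each supremum over $\beta$ reduces to a supremum over an $\varepsilon$-net with $\varepsilon$ polynomially small in $n$: the net has polynomially many points and the discretization error is negligible, so it only contributes a polynomial factor that the exponential union bound over label vectors absorbs. Collecting the three pieces and dividing by $w_n\rho_n n^{2-\alpha}$ yields the stated $O_p(1/\sqrt{w_n\rho_n})$, dominated by the $w_n(R_1-R_1^0)$ term.

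The main obstacle is precisely this uniformity over the $K^n$ label assignments in the leading term: a naive union bound costs $e^{\Theta(n)}$, so one must either feed in the operator-norm concentration of $A-\mathbb{E}[A]$ or perform the \cite{Bickel&Chen2009} separation of degenerate from non-degenerate assignments so that the per-term Bernstein bound is small enough to absorb the $e^{\Theta(n)}$ price. Once that uniform bound is secured, the rest — the conditional independence of $A$ and $F$, the boundedness and Lipschitz continuity of the weights from Condition~\ref{condition_1}, the bounded-differences step for the feature fluctuation, and the $\varepsilon$-net over $\beta$ — is routine.
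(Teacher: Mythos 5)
Your route is genuinely different from the paper's. The paper never decomposes $R-R^0$: for each fixed $(e,\beta)$ it applies a Hoeffding-type bound directly to $\sum_{i,j\in\E_k}\bigl(A_{ij}W(\phi_{ij},\beta_k;w_n)-\rho_nP_{c_ic_j}\ep[W]\bigr)$, treating the adjacency and feature randomness together; it then takes the supremum over $\beta$ by essentially the $\epsilon$-net you describe (with $\epsilon\asymp\rho_n\delta$ and the Lipschitz constant $M_\phi e^{M_\phi M_\beta}\le M_\phi w_n$ from Condition \ref{condition_1}), and finally pays the full $K^{n}$ union bound over label assignments. The mechanism that absorbs the $e^{\Theta(n)}$ price is neither an operator-norm input nor a Bickel--Chen separation of degenerate assignments: at deviation level $\delta=1/\sqrt{w_n\rho_n}$ the per-assignment tail is $\exp\{-c\,n\,w_n^2\rho_n^2\delta^2\}=\exp\{-c\,n\,w_n\rho_n\}$, which beats $K^{n}$ precisely because $w_n\rho_n\to\infty$ --- the hypothesis of the lemma. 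So the ``main obstacle'' you single out is, in the paper, dissolved by the $w_n\rho_n$ normalization rather than circumvented; your operator-norm shortcut is a legitimate and in some ways cleaner alternative, and it yields the sharper rate $1/\sqrt{n\rho_n}$ for the leading term.

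That said, your argument does not establish the lemma in its stated generality, for two concrete reasons. First, you prove $O_p(1/\sqrt{n\rho_n})+O_p(1/w_n)$, which matches $O_p(1/\sqrt{w_n\rho_n})$ only under the side condition $w_n=O(n)$ that you flag but do not discharge; the lemma places no upper bound on $w_n$ (and does not assume $n\rho_n\to\infty$, which your use of $\|\tilde A\|_{\mathrm{op}}=O_p(\sqrt{n\rho_n})$ requires). This is harmless for the paper's applications, where $w_n$ is a constant, but it is a restriction the paper's proof does not need. Second, the chain $|R_1(e)-R_1^0(e)|\le\|\tilde A\|_{\mathrm{op}}\|M(e)\|_*\le K\|\tilde A\|_{\mathrm{op}}n^{(1-\alpha)_+}$, after division by $\rho_nn^{2-\alpha}$, gives $Kn^{\alpha-3/2}\rho_n^{-1/2}$ when $\alpha>1$, which diverges; your ``core estimate'' $O_p(\rho_n^{1/2}n^{3/2-\alpha})$ therefore only follows for $\alpha\le1$, whereas the lemma claims $0<\alpha\le2$. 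The culprit is assignments with tiny communities, for which $\|M(e)\|_*=\sum_k|\E_k|^{1-\alpha}$ is much larger than $Kn^{1-\alpha}$. (The paper's own proof substitutes $|\E_k|\ge\pi_0 n$ at the analogous step, so it is open to the same objection; but your write-up should either restrict to $\alpha\le1$, which is all the theorem uses, or restrict the maximum to assignments with non-vanishing communities.) A smaller slip: in the $A$-fluctuation piece of $g-g^0$ the weight matrix with entries $|\E_{e_i}|^{-\alpha}e^{-\langle\phi_{ij},\beta_{e_i}\rangle}\mathbf 1[e_i=e_j]$ is no longer rank $K$, so the duality bound gives nuclear norm $O(n^{3/2-\alpha})$ and the rate $O_p(\rho_n^{1/2}n^{2-\alpha})$ rather than your claimed $O_p(\rho_n^{1/2}n^{3/2-\alpha})$; the conclusion survives because the spare factor $1/w_n$ still yields $O(1/(w_n\sqrt{\rho_n}))=O(1/\sqrt{w_n\rho_n})$, but the intermediate claim should be corrected.
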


\begin{proof}[Proof of Lemma~\ref{lemma_1}]
We first bound the difference between $R$ and $R^0$ for fixed $e$ and $\beta$. By Hoeffding's inequality and the fact that $2[n/2] \ge n-1$, where $[x]$ is the integer part of $x$, we have 
\begin{align*}
  \prb\left\{\left| \frac{1}{|\E_k|^2} \sum_{i, j\in \E_k} \left( A_{ij}\wijbkwn-\rho_nP_{c_ic_j} \ep[\wijbkwn] \right) \right| > t \right\}\leq 2\exp\left(  - (|\E_k|-1) t^2 \right) \ .
\end{align*}
Taking $t=w_n\rho_n n^{2-\alpha} |\E_k|^{\alpha- 2} \delta$ and applying the union bound, we have
\begin{align*}
\prb  & \left( \frac{|R(e, \beta; w_n) - R^0(e, \beta; w_n)|}{w_n\rho_n n^{2-\alpha}}  > K\delta \right) \\
\leq & \sum_{k=1}^K \prb \left\{ \frac{ \big| \sum_{i,j\in \E_k} \left( A_{ij}\wijbkwn - \rho_nP_{c_ic_j} \ep\left[ \wijbkwn \right] \right) \big| }{ w_n\rho_n|\E_k|^\alpha n^{2-\alpha} } \geq \delta \right\}\\
\leq & \sum_{k=1}^K 2\exp\left\{ - (|\E_k|-1) w_n^2 \rho_n^2 n^{4-2\alpha} |\E_k|^{2\alpha-4} \delta^2 \right\} \leq 2K\exp\left\{ -(\pi_0 n-1)w_n^2\rho_n^2\delta^2 \right\} \ .
\end{align*}
Next, we take the uniform bound over $\beta$. Consider the set
\begin{equation*}
B_\epsilon =\left\{ \left(\frac{s_1 \epsilon}{ \sqrt{p}},\ldots, \frac{s_p \epsilon}{ \sqrt{p}} \right), s_1,\ldots,s_p\in\left\{ 0, \pm 1,\ldots,\pm \left[ \frac{M_\beta\sqrt{p}}{\epsilon} \right] , \pm\frac{M_\beta\sqrt{p}}{\epsilon} \right\} \right\} \ .
\end{equation*}
It is straightforward to verify that $B_\epsilon$ is an $\epsilon$-net on $[-M_\beta, M_\beta]^p$, the space of $\beta_k$'s. For each $\beta_k$, let $\beta(\beta_k, B_\epsilon)$ be the best approximation to $\beta_k$ in $B_\epsilon$.  Then 
\begin{align*}
\max_{\beta_k} |\wijbkwn - W(\phi_{ij}, \beta(\beta_k, B_\epsilon); w_n)|&\leq \max_{\beta_k} \left| \frac{\partial W}{\partial \beta_k}(\phi_{ij}, \beta_k; w_n) \right| |\beta_k - \beta(\beta_k, B_\epsilon)|\\
&\leq 2M_\phi M_\beta \exp(M_\phi M_\beta) \epsilon \leq 2M_\phi M_\beta w_n\epsilon
\end{align*}
Therefore, choosing $\epsilon  = \frac{\rho_n \delta}{4M_\phi M_\beta}$, we have
\begin{align*}
& \prb\left( \max_\beta \frac{|R(e, \beta; w_n) - R^0(e, \beta; w_n)|}{w_n\rho_n n^{2-\alpha}} > K\delta \right)\leq \sum_{k=1}^K \prb\left\{ \max_{\beta_k} \frac{\left| \sum_{i,j \in {\cal E}_k} (A_{ij} - \rho_nP_{c_ic_j})\wijbkwn \right|}{w_n\rho_n|\E_k|^\alpha n^{2-\alpha}} >\delta \right\}\\
\leq & \sum_{k=1}^K \prb\left\{ \max_{\beta_k} \frac{ \sum_{i,j\in\E_k} |A_{ij} - \rho_nP_{c_ic_j}| |\wijbkwn - W(\phi_{ij}, \beta(\beta_k, B_\epsilon); w_n)| }{w_n\rho_n|\E_k|^\alpha n^{2-\alpha}} > \frac{\delta}{2} \right\}\\
&  +  \sum_{k=1}^K \prb\left\{ \max_{\beta_0\in B_\epsilon} \frac{ \left| \sum_{i,j\in\E_k} \left(A_{ij} - \rho_nP_{c_ic_j}\right) W(\phi_{ij}, \beta_0; w_n) \right| }{w_n\rho_n|\E_k|^\alpha n^{2-\alpha}} > \frac{\delta}{2} \right\}\\
\leq & K \prb\left( \frac{|\E_k|^{2-\alpha}\cdot 2M_\phi M_\beta \epsilon}{\rho_n n^{2-\alpha}} \geq \frac{\delta}{2} \right) + 2K |B_\epsilon| \exp\left\{ -(\pi_0 n-1)w_n^2\rho_n^2\delta^2/4 \right\}\\
\leq & 0 + 2K\left( \frac{4M_\phi M_\beta^2\sqrt{p}}{\rho_n\delta} + 3 \right)^p\exp\left\{ -(\pi_0 n-1)w_n^2\rho_n^2\delta^2/4 \right\} \ , 
\end{align*}
where the first term becomes 0 because of the choice of $\epsilon$ and $|\E_k| < n$.  Finally, taking a union bound over all possible community assignments, we have
\begin{align*}
\prb\left( \max_{e,\beta}\frac{|R(e, \beta; w_n) - R^0(e, \beta; w_n)|}{
w_n\rho_n n^{2-\alpha}} > K\delta \right) & \leq 2K^{n+1}\left( \frac{4M_\phi M_\beta^2\sqrt{p}}{\rho_n\delta} + 3 \right)^p\exp\left\{ -(\pi_0 n-1)w_n^2\rho_n^2\delta^2/4 \right\}\\
&\leq 2K \exp\left[ -\pi_0 nw_n^2\rho_n^2\delta^2/8 + n\log K + p\log\{ C_1/(\rho_n \delta) \} \right] \ ,
\end{align*}
where $C_1:=4M_\phi M_\beta^2\sqrt{p}$.  Taking $\delta = 1/\sqrt{w_n\rho_n}$ completes the proof of Lemma \ref{lemma_1}.
\end{proof}

We now proceed to investigate the ``population version''  of our criterion,  $R^0$. Define $U \in\mathbb{R}^{K\times K}$ by $U_{kl} = \sum_{i=1}^n 1[e_i=k, c_i=l]/n$, and let $D$ be a diagonal $K \times K$ matrix with $\pi_1, \dots, \pi_K$ on the diagonal, where $\pi_k =\sum_{i=1}^n 1[c_i=k]/n$ is the fraction of nodes in community ${\cal C}_k$. Roughly speaking, $U$ is the confusion matrix between $e$ and $c$, and  $U=DO$ for a permutation matrix $O$ means the estimation is perfect. Define  
\begin{equation*}
  g(U) =\sum_{k=1}^K\frac{\sum_{l=1}^K \sum_{l'=1}^K U_{kl}U_{kl'}P_{ll'}}{ \left(\sum_{a=1}^K U_{ka}\right)^\alpha } \ .
\end{equation*}
Each estimated community assignment $e$ induces a unique $U = U(e)$. It is not difficult to verify that
\begin{equation*}
  g\left(U(e)\right) = \sum_{k=1}^K\frac{\sum_{i,j\in\E_k} P_{c_ic_j}}{|\E_k|^\alpha n^{2-\alpha}} \ .
\end{equation*}

\begin{lemma}\label{lemma_2}
Under conditions 1 and 2, there exists a constant $C_2$ such that
\begin{equation*}
  \max_{e,\beta} \left| \frac{R^0(e, \beta; w_n)}{w_n\rho_nn^{2-\alpha}} - g\left(U(e)\right) \right| \leq \frac{C_2}{w_n} \ . 
\end{equation*}
\end{lemma}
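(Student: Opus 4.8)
\textbf{Proof proposal for Lemma~\ref{lemma_2}.}
The quantity $R^0$ is deterministic, so no concentration is needed here; the whole point is that, under condition~\ref{condition_1}, the weight function $\wijbkwn$ differs from the constant $w_n$ only by a uniformly bounded amount. After dividing by $w_n\rho_n n^{2-\alpha}$ this discrepancy contributes a term of order $1/w_n$, and the constant part of $W$ reproduces exactly $g(U(e))$. The plan is therefore: (i) bound $|\ep[\wijbkwn]-w_n|$ by a global constant; (ii) use the identity for $g(U(e))$ already established above to express the difference in the lemma; (iii) estimate it term-by-term and normalize.

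For step (i), recall $\wijbkwn = w_n - e^{-\langle\phi_{ij},\beta_k\rangle}$, hence $\ep[\wijbkwn] = w_n - \ep\!\left[e^{-\langle\phi_{ij},\beta_k\rangle}\right]$. By Cauchy--Schwarz and condition~\ref{condition_1}, $|\langle\phi_{ij},\beta_k\rangle|\le\|\phi_{ij}\|_2\|\beta_k\|_2\le M_\phi M_\beta$, so $0< e^{-\langle\phi_{ij},\beta_k\rangle}\le e^{M_\phi M_\beta}$ and therefore $\bigl|\ep[\wijbkwn]-w_n\bigr|\le e^{M_\phi M_\beta}$ for all $i,j,k$ and all $\|\beta_k\|_2\le M_\beta$. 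For step (ii), using $g(U(e))=\sum_{k=1}^K\bigl(\sum_{i,j\in\E_k}P_{c_ic_j}\bigr)/(|\E_k|^\alpha n^{2-\alpha})$ one writes
\begin{equation*}
R^0(e,\beta;w_n)-w_n\rho_n n^{2-\alpha}g(U(e)) = \rho_n\sum_{k=1}^K\frac{1}{|\E_k|^\alpha}\sum_{i,j\in\E_k}P_{c_ic_j}\bigl(\ep[\wijbkwn]-w_n\bigr).
\end{equation*}
For step (iii), take absolute values, bound $|\ep[\wijbkwn]-w_n|\le e^{M_\phi M_\beta}$ and $P_{c_ic_j}\le 1$, count $\sum_{i,j\in\E_k}1=|\E_k|^2$, and use $0<\alpha\le 2$ together with $|\E_k|\le n$ to get $\sum_k|\E_k|^{2-\alpha}\le K n^{2-\alpha}$; dividing by $w_n\rho_n n^{2-\alpha}$ gives the bound $C_2/w_n$ with $C_2:=Ke^{M_\phi M_\beta}$.

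There is essentially no obstacle: the only thing to check is that the estimate is simultaneous over $e$ and $\beta$, which is immediate because the per-term bound $e^{M_\phi M_\beta}$ depends on neither, so the maximum over $e$ and $\beta$ satisfies the same inequality. If one wants the cleanest constant, the mild bookkeeping point is deciding whether to keep the crude bound $P_{c_ic_j}\le 1$ (giving $C_2=Ke^{M_\phi M_\beta}$) or to track $\max_{kl}P_{kl}$, but since only the rate $1/w_n$ matters for the consistency argument this choice is inconsequential.
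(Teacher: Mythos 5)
Your proposal is correct and follows essentially the same route as the paper: both isolate the deterministic discrepancy $\ep[e^{-\langle\phi_{ij},\beta_k\rangle}]$ between $W$ and the constant $w_n$, bound it by $e^{M_\phi M_\beta}$ via Cauchy--Schwarz and condition~\ref{condition_1}, and then normalize using $\sum_{i,j\in\E_k}1=|\E_k|^2$ and $\alpha\le 2$. The only differences are in bookkeeping of the constant (the paper keeps $\max_{kl}P_{kl}$ and a $\pi_0^{\alpha-2}$ factor, you use $P_{c_ic_j}\le 1$ and $|\E_k|\le n$), which, as you note, is immaterial since only the $1/w_n$ rate is used downstream.
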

\begin{proof}[Proof of Lemma~\ref{lemma_2}]
By definition, we have
\begin{align*}
& \max_{e, \beta}\left| \frac{R^0(e, \beta; w_n)}{w_n\rho_nn^{2-\alpha}} - g\left(U(e)\right) \right| = \max_e\sum_{k=1}^K \max_{\beta_k} \sum_{i,j\in\E_k} P_{c_ic_j} \frac{\ep[\exp(-\langle \phi_{ij}, \beta_k \rangle)]}{|\E_k|^\alpha w_n n^{2-\alpha}}\\
\leq & \max_e\sum_{k=1}^K \sum_{i,j\in\E_k} \frac{\exp(M_\phi M_\beta)}{|\E_k|^\alpha w_n n^{2-\alpha}}\max_{kl}P_{kl} \leq \frac{K \exp(M_\phi M_\beta)}{w_n\pi_0^{2-\alpha}}\max_{kl}P_{kl} = \frac{C_2}{w_n} \ ,
\end{align*}
where $C_2 :=K  \pi_0^{\alpha-2} \exp(M_\phi M_\beta)\max_{kl}P_{kl} $, and the two inequalities follow from conditions \ref{condition_1} and \ref{condition_2}, respectively.
\end{proof}

\begin{lemma}\label{lemma_3}
Under condition \ref{condition_3}, if $\alpha\in \left[\max_{1\leq k<l\leq K} 2(K-1)P_{kl}/\min(P_{kk}, P_{ll}), 1\right]$, then for all $U$ satisfying $\sum_{k=1}^K U_{kl} = \pi_l$ for $1\leq k\leq K$, $g(U)$ is uniquely maximized at $U=DO$ for $O\in{\cal O}_K$, where ${\cal O}_K$ denotes the set of $K \times K$ permutation matrices.  
\end{lemma}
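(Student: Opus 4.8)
The plan is to show that $g(U)$ is maximized exactly when each row of $U$ puts all its mass in a single column and distinct rows use distinct columns, i.e.\ $U = DO$. First I would fix the row sums: write $U_{ka} = r_k v_{ka}$ where $r_k = \sum_a U_{ka}$ is the total fraction of nodes assigned to estimated community $k$ and $v_k = (v_{k1},\dots,v_{kK})$ is a probability vector on $\{1,\dots,K\}$; the column-sum constraint $\sum_k U_{kl} = \pi_l$ becomes $\sum_k r_k v_{kl} = \pi_l$, and $\sum_k r_k = 1$. With this substitution,
\begin{equation*}
g(U) = \sum_{k=1}^K r_k^{2-\alpha}\, v_k^\top P v_k \ ,
\end{equation*}
so the objective splits into a part that is a quadratic form in each $v_k$ and a part governed by the exponents $r_k^{2-\alpha}$.

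The key structural step is the \emph{assortativity inequality}: under condition~\ref{condition_3}, for any probability vector $v$ on $\{1,\dots,K\}$ one has $v^\top P v \le \max_m P_{mm}$ if $v$ is a vertex $e_m$, and more importantly, mixing strictly hurts. Concretely I would show that for any $v$,
\begin{equation*}
v^\top P v \le \sum_m v_m P_{mm} \ ,
\end{equation*}
which follows from $2 P_{kl} < P_{kk} + P_{ll}$ (a consequence of condition~\ref{condition_3}) since $v^\top P v - \sum_m v_m P_{mm} = -\sum_{k<l} v_k v_l (P_{kk}+P_{ll}-2P_{kl}) \le 0$, with equality iff $v$ is concentrated on a single coordinate. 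Combining,
\begin{equation*}
g(U) \le \sum_{k=1}^K r_k^{2-\alpha} \sum_m v_{km} P_{mm} = \sum_{m=1}^K P_{mm} \sum_{k=1}^K r_k^{2-\alpha} v_{km}\ .
\end{equation*}
Now set $q_m := \sum_k r_k v_{km}$ (so $q_m = \pi_m$ by the constraint) and, more delicately, bound $\sum_k r_k^{2-\alpha} v_{km}$: since $0 < \alpha \le 1$ gives $2-\alpha \ge 1$, the map $t \mapsto t^{2-\alpha}$ is convex, but here I actually want the reverse direction — I want to argue that spreading the assignment into many small estimated communities is also suboptimal. This is where the lower bound on $\alpha$ enters. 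I would compare $g(U)$ against the value $g(DO) = \sum_m \pi_m^{2-\alpha}\, \pi_m^{?}$... more precisely $g(DO) = \sum_m \pi_m^{2-\alpha} \cdot (e_{\sigma(m)}^\top P e_{\sigma(m)}) / 1$, wait — at $U = DO$, $r_k = \pi_{\sigma^{-1}(k)}$ and $v_k = e_{\sigma^{-1}(k)}$, so $g(DO) = \sum_k \pi_{\sigma^{-1}(k)}^{2-\alpha} P_{\sigma^{-1}(k),\sigma^{-1}(k)} = \sum_m \pi_m^{2-\alpha} P_{mm}$, independent of the permutation. So the target value is $g^\star := \sum_m \pi_m^{2-\alpha} P_{mm}$, and I must show $g(U) \le g^\star$ with equality only at $U = DO$.

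The remaining and \textbf{main obstacle} is precisely the combinatorial/analytic step: given $g(U) \le \sum_m P_{mm} \sum_k r_k^{2-\alpha} v_{km}$ and the constraints $\sum_k r_k v_{km} = \pi_m$, $\sum_k r_k = 1$, $r_k \ge 0$, $v_k \in \Delta_{K-1}$, show $\sum_m P_{mm}\sum_k r_k^{2-\alpha} v_{km} \le \sum_m \pi_m^{2-\alpha} P_{mm}$. Since $2-\alpha \ge 1$, the function $t\mapsto t^{2-\alpha}$ is convex and superadditive on $[0,1]$ in the sense $\sum_k a_k^{2-\alpha} \le (\sum_k a_k)^{2-\alpha}$ when $\sum_k a_k \le 1$; applying this with $a_k = r_k v_{km}$ — but one needs $\sum_k r_k v_{km} = \pi_m \le 1$, which holds — would give $\sum_k (r_k v_{km})^{2-\alpha} \le \pi_m^{2-\alpha}$, yet the expression we have is $\sum_k r_k^{2-\alpha} v_{km}$, not $\sum_k (r_k v_{km})^{2-\alpha}$. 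To bridge this I would use $v_{km} \le v_{km}^{2-\alpha}$ when $2-\alpha \ge 1$ and $v_{km}\in[0,1]$, hence $r_k^{2-\alpha} v_{km} \le r_k^{2-\alpha} v_{km}^{2-\alpha} = (r_k v_{km})^{2-\alpha}$, and then apply superadditivity per column. This yields $g(U) \le \sum_m \pi_m^{2-\alpha} P_{mm} = g^\star$. For the equality case I would trace back through each inequality: the bound $v_{km} \le v_{km}^{2-\alpha}$ is tight only when $v_{km}\in\{0,1\}$ (so each $v_k$ is a vertex — no within-row mixing), superadditivity $\sum_k (r_k v_{km})^{2-\alpha} = (\sum_k r_k v_{km})^{2-\alpha}$ with $\alpha < 1$ forces at most one $k$ with $r_k v_{km} > 0$ for each $m$ (so the vertex assignment is injective, i.e.\ a permutation), and when $\alpha = 1$ the endpoint case must be handled separately using the lower-bound hypothesis $\alpha \ge \max 2(K-1)P_{kl}/\min(P_{kk},P_{ll})$ to rule out coarsenings that merge true communities — this is the only place that numerical lower bound on $\alpha$ is genuinely needed, and verifying that it exactly forecloses the merging solutions is the delicate computation I would isolate as a separate sub-claim.
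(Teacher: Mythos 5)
Your setup (writing $U_{ka} = r_k v_{ka}$ so that $g(U) = \sum_k r_k^{2-\alpha} v_k^\top P v_k$, together with the assortativity bound $v^\top P v \le \sum_m v_m P_{mm}$, with equality iff $v$ is a vertex) is correct and is a legitimate alternative starting point to the paper's direct expansion of $g(D)-g(U)$. But the step you yourself flag as the main obstacle does fail, in two ways. First, the bridge inequality $v_{km}\le v_{km}^{2-\alpha}$ is reversed: for $v\in[0,1]$ and $2-\alpha\ge 1$ one has $v^{2-\alpha}\le v$, so $r_k^{2-\alpha}v_{km}\ge (r_kv_{km})^{2-\alpha}$, not $\le$. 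Second, and more fundamentally, the intermediate inequality you are aiming for, $\sum_k r_k^{2-\alpha}v_{km}\le\pi_m^{2-\alpha}$, is simply false: take $r_1=1$, $r_2=\dots=r_K=0$ and $v_{1m}=\pi_m$ (all nodes assigned to one estimated community); then the left side equals $\pi_m>\pi_m^{2-\alpha}$ whenever $\alpha<1$ and $\pi_m<1$. So after your first (assortativity) bound, the remaining quantity $\sum_mP_{mm}\sum_kr_k^{2-\alpha}v_{km}$ can already exceed the target $g^\star=\sum_m\pi_m^{2-\alpha}P_{mm}$, which means no valid second-stage bound exists: the two sources of slack --- the strict loss from mixing within a row, controlled by assortativity, and the effect of the size exponent $r_k^{2-\alpha}$ --- cannot be handled in two independent stages and must be traded off against each other.

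That trade-off is exactly where the lower bound on $\alpha$ enters, and you have its role backwards. At $\alpha=1$ your argument closes without the lower bound at all: $2-\alpha=1$ gives $\sum_mP_{mm}\sum_kr_kv_{km}=\sum_m\pi_mP_{mm}=g^\star$ identically, and the column-sum constraint forces vertex rows to form a permutation. The lower bound is needed for $\alpha<1$: for $K=2$ the merged solution $U_{1l}=\pi_l$, $U_{2l}=0$ gives $g(U)=\pi_1^2P_{11}+\pi_2^2P_{22}+2\pi_1\pi_2P_{12}$, which exceeds $g(D)=\pi_1^{2-\alpha}P_{11}+\pi_2^{2-\alpha}P_{22}$ once $\alpha$ is small enough, so a quantitative use of $\alpha\ge 2(K-1)\max_{k,l} P_{kl}/\min_k P_{kk}$ is unavoidable throughout the range, not just at an endpoint. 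The paper's proof makes this coupling explicit: it bounds $g(D)-g(U)$ directly, extracts a diagonal surplus via $\left(\sum_kU_{kl}\right)^{2-\alpha}\ge\sum_kU_{kl}^{2-\alpha}$ and a mean-value estimate, uses the lower bound on $\alpha$ to replace $\alpha P_{ll}$ by $\sum_{l'\ne l}2P_{ll'}$, and then dominates each off-diagonal term $U_{kl}U_{kl'}P_{ll'}/\left(\sum_aU_{ka}\right)^\alpha$ with the elementary inequality $x^{2-\alpha}(u-x)+y^{2-\alpha}(u-y)\ge xyu^{1-\alpha}$. If you wish to keep your parametrization, you would need an analogous term-by-term coupling of the diagonal deficit to the cross terms $r_k^{2-\alpha}v_{km}v_{km'}P_{mm'}$ rather than bounding the two effects separately.
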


\begin{proof}[Proof of Lemma~\ref{lemma_3}]
We have
\begin{align}
g(D) - g(U) & = \sum_{l=1}^K \left(\sum_{k=1}^K U_{kl}\right)^{2-\alpha} P_{ll} - \sum_{k=1}^K\frac{ \sum_{l=1}^K U_{kl}^2 P_{ll} + \sum_{l=1}^K\sum_{l'\neq l}U_{kl}U_{kl'}P_{ll'} }{\left(\sum_{a=1}^K U_{ka}\right)^\alpha}\nonumber\\
=&\sum_{l=1}^K\left\{ \left( \sum_{k=1}^K U_{kl} \right)^{2-\alpha} - \sum_{k=1}^K \frac{U_{kl}^2}{\left(\sum_{a=1}^K U_{ka}\right)^\alpha} \right\} P_{ll} - \sum_{k=1}^K\sum_{l=1}^K\sum_{l'\neq l}\left\{ \frac{U_{kl}U_{kl'}}{\left(\sum_{a=1}^K U_{ka}\right)^\alpha} \right\} P_{ll'} \label{diff_gD_gU}
\end{align}
For $0<\alpha\leq 1$, since $U_{kl} \ge 0$  for all $k$ and $l$, we have $\left( \sum_{k=1}^K U_{kl} \right)^{2-\alpha} \geq \sum_{k=1}^K U_{kl}^{2-\alpha}$.  By mid-value theorem, there exists $\xi_{kl}\in\left(0, \sum_{a\neq l}U_{ka}\right)$, such that 
\begin{equation}
\left( \sum_{a=1}^K U_{ka} \right)^\alpha - U_{kl}^\alpha = \alpha\left( \sum_{a\neq l}U_{ka} \right) / \left(U_{kl} + \xi^{kl}\right)^{1-\alpha} \geq \alpha\left( \sum_{a\neq l}U_{ka} \right) / \left( \sum_{a=1}^K U_{ka} \right)^{1-\alpha}.  \label{mid_value}
\end{equation}
Finally, we will need the following inequality: for $0 < \alpha \le 2$ and $x, y\geq 0$ satisfying $x+y\leq u$,
\begin{equation}
x^{2-\alpha}(u-x) + y^{2-\alpha}(u-y)\geq x y u^{1-\alpha} \ . \label{claim_xyu}
\end{equation}
For $x= y = 0$, equality holds.   To verify \eqref{claim_xyu} when $0 < x+y \le u$, dividing by $u^{3-\alpha}$ we have
\begin{align*}
&\frac{x^{2-\alpha}(u-x) + y^{2-\alpha}(u-y) - xyu^{1-\alpha}}{u^{3-\alpha}} = \left(\frac{x}{u}\right)^{2-\alpha} \left(1-\frac{x}{u}\right) + \left(\frac{y}{u}\right)^{2-\alpha} \left(1-\frac{y}{u}\right) - \frac{xy}{u^2}\\
\geq & \left(\frac{x}{u}\right)^2 \left(1-\frac{x}{u}\right) + \left(\frac{y}{u}\right)^2 \left(1-\frac{y}{u}\right) - \frac{xy}{u^2} = \left\{ \left(\frac{x}{u}\right)^2 + \left(\frac{y}{u}\right)^2 - \frac{xy}{u^2} \right\}\left(1-\frac{x+y}{u}\right)\geq 0 \ .
\end{align*}
The first inequality above implies that a necessary condition for equality to hold in \eqref{claim_xyu} is $xy=0$.

We now lower bound the first term on the right hand side of \eqref{diff_gD_gU}.
\begin{align}
&\sum_{l=1}^K\left\{ \left( \sum_{k=1}^K U_{kl} \right)^{2-\alpha} - \sum_{k=1}^K \frac{U_{kl}^2}{\left(\sum_{a=1}^K U_{ka}\right)^\alpha} \right\} P_{ll} \geq \sum_{l=1}^K\sum_{k=1}^K\frac{ U_{kl}^{2-\alpha}\left\{ \left(\sum_{a=1}^K U_{ka}\right)^\alpha - U_{kl}^\alpha \right\} }{\left(\sum_{a=1}^K U_{ka}\right)^\alpha}P_{ll}\nonumber\\
\geq & \sum_{l=1}^K\sum_{k=1}^K \frac{U_{kl}^{2-\alpha}\left(\sum_{a\neq l}U_{ka}\right)}{\sum_{a=1}^K U_{ka}} \alpha P_{ll}\geq \sum_{l=1}^K\sum_{k=1}^K \frac{U_{kl}^{2-\alpha}\left(\sum_{a\neq l}U_{ka}\right)}{\sum_{a=1}^K U_{ka}} \sum_{l'\neq l} 2P_{ll'}\nonumber\\
=& \sum_{k=1}^K\left\{ \sum_{l=1}^K\sum_{l'\neq l} \frac{U_{kl}^{2-\alpha}\left(\sum_{a\neq l} U_{ka}\right) P_{ll'}}{\sum_{a=1}^K U_{ka}}  +  \sum_{l'=1}^K\sum_{l\neq l'} \frac{U_{kl'}^{2-\alpha}\left(\sum_{a\neq l'} U_{ka}\right) P_{ll'}}{\sum_{a=1}^K U_{ka}}  \right\}\nonumber\\
=& \sum_{k=1}^K \sum_{l=1}^K\sum_{l'\neq l} \frac{U_{kl}^{2-\alpha}\left(\sum_{a\neq l}U_{ka}\right) + U_{kl'}^{2-\alpha}\left(\sum_{a\neq l'} U_{ka}\right)}{\sum_{a=1}^K U_{ka}} P_{ll'}      \geq \sum_{k=1}^K\sum_{l=1}^K\sum_{l'\neq l} \frac{U_{kl}U_{kl'}}{\left(\sum_{a=1}^K U_{ka}\right)^\alpha}P_{ll'}  \ ,  \label{bound_first_term_diff_gD_gU}
\end{align}
where the last equality is obtained by applying \eqref{claim_xyu} with $x=U_{kl}$, $y=U_{kl'}$ and $u = \sum_{a=1}^K U_{ka}$. Plugging \eqref{bound_first_term_diff_gD_gU} into \eqref{diff_gD_gU}, we have
\begin{equation*}
g(D) - g(U) \geq 0 \ .
\end{equation*}

It remains to show that equality holds only if $U = DO$ for some $O \in {\cal O}_K$.   Note that the last inequality in \eqref{bound_first_term_diff_gD_gU} is obtained from \eqref{claim_xyu}, where equality holds only when $xy = 0$. The corresponding condition for equality to hold in \eqref{bound_first_term_diff_gD_gU} is thus $U_{kl} U_{kl'}=0$ for all $k$, $l$ and $l'$. Therefore, for each $k$, there is only one $l$ such that $U_{kl}\neq 0$, 
 i.e., $U = DO$ for some $O \in {\cal O}_K$.
 \end{proof}

\begin{proof}[Proof of Theorem 1]
By Lemma \ref{lemma_1} and Lemma \ref{lemma_2}, we have
\begin{equation}
\max_{e,\beta}\left|\left\{\frac{R(e, \beta; w_n)}{w_n\rho_n n^{2-\alpha}} - g\left(U(e)\right)\right\} \right| = O_p\left(\frac{1}{\sqrt{w_n\rho_n}}\right)    \label{diff_R_gU} \ .  
\end{equation}
It is straightforward to verify that, for any $e$, $2d(e, c)=\min_{O\in\mathcal{O}_K}\|U(e)-DO\|_1$,  where $\|Q\|_1 =\sum_{k=1}^K\sum_{l=1}^K |Q_{kl}|$.  Take a sequence of decreasing positive numbers $x_n\to 0$ and define
\begin{equation}
y_n =\max_{U: g(D) - g(U)\leq x_n} \min_{O\in {\cal O}_K}\|U-DO\|_1
\end{equation}
We now show, by contradiction, that  $x_n\to0$ implies $y_n\to0$.   First, note that $y_n$ is non-increasing. Now if $y_0 =\lim_{n\to\infty}y_n>0$, by compactness of the set $\mathcal{U}_{y_0} = \{U: \min_{O\in {\cal O}_K }\|U-DO\|_1\geq y_0\}$ and continuity of the function $g$, the supremum of $g(U)$ over $U\in\mathcal{U}_{y_0}$, which equals $g(D)$, is attained in $\mathcal{U}_{y_0}$. This contradicts Lemma \ref{lemma_3}.

Now let $x_n = 1/\sqrt[4]{w_n\rho_n}$. By assumption of Theorem 1, $x_n\to0$, which yields $y_n\to0$. Also $x_n/\left(1/\sqrt{w_n\rho_n}\right) = \sqrt[4]{w_n\rho_n} \to\infty$, so by \eqref{diff_R_gU} we have
\begin{align}
\prb\left[  \left\{ \left|\frac{R(\hat{e}, \hat{\beta}; w_n)}{w_n\rho_n n^{2-\alpha}} - g\left(U(\hat{e})\right) \right| > \frac{x_n}{2} \right\} \bigcup \left\{ \left|\frac{R(c, \beta; w_n)}{w_n\rho_n n^{2-\alpha}} - g\left(D\right) \right| > \frac{x_n}{2} \right\} \right]\to 0   \ .    \label{sample_approach_pop}
\end{align}
Now, the event
$$
\left|\frac{R(\hat{e}, \hat{\beta}; w_n)}{w_n\rho_n n^{2-\alpha}} -  g\left(U(\hat{e})\right) \right| \leq \frac{x_n}{2}  \textrm{  and  }  \left|\frac{R(c, \beta; w_n)}{w_n\rho_n n^{2-\alpha}} - g\left(D\right)\right| \leq \frac{x_n}{2}
$$
implies that $g(D)-g(U(\hat{e}))\leq   \frac{R(c, \beta; w_n)}{w_n\rho_n n^{2-\alpha}} - \frac{R(\hat{e}, \hat{\beta}; w_n)}{w_n\rho_n n^{2-\alpha}} + x_n  \leq x_n$. So we have
\begin{equation}
	\prb\left( g(D) - g(U(\hat{e})) \leq x_n \right) \to 1   \label{convergence_2}
\end{equation}
and 
\begin{align*}
	2d(\hat{e}, c) &= \min_{O\in{\cal O}_K}\|U(\hat{e}) - DO\|_1 \leq \max_{U:g(D)-g(U)\leq x_n} \min_{O\in{\cal O}_K}\|U - DO\|_1 = y_n \to 0 .  
\end{align*}

\end{proof}

\bibliography{Manuscript_1}

\end{document}